\documentclass{article}

\usepackage{microtype}
\usepackage{graphicx}
\usepackage{subcaption}
\usepackage{booktabs} 

\usepackage[pagebackref=true]{hyperref}

\usepackage[accepted]{icml2026}

\usepackage{amsmath}
\usepackage{amssymb}
\usepackage{mathtools}
\usepackage{amsthm}

\usepackage{algorithm,algorithmicx}
\usepackage{algpseudocode}

\usepackage[noEnd=false,indLines=true]{algpseudocodex}

\usepackage[capitalize,noabbrev]{cleveref}

\usepackage{multirow}
\usepackage{pifont} 
\usepackage{makecell}
\usepackage{enumitem}
\usepackage{xspace}
\usepackage[table]{xcolor}
\definecolor{rowblue}{HTML}{DEE5F5}
\usepackage{textcomp}

\usepackage{amsmath,amsfonts,bm}

\def\eqref#1{equation~\ref{#1}}

\def\1{\bm{1}}

\def\va{{\bm{a}}}

\def\vd{{\bm{d}}}

\def\vk{{\bm{k}}}
\def\vl{{\bm{l}}}

\def\vp{{\bm{p}}}

\def\vs{{\bm{s}}}

\def\vu{{\bm{u}}}
\def\vv{{\bm{v}}}

\def\vx{{\bm{x}}}

\def\vz{{\bm{z}}}
\def\veps{{\bm{\epsilon}}}

\def\mI{{\bm{I}}}

\DeclareMathAlphabet{\mathsfit}{\encodingdefault}{\sfdefault}{m}{sl}
\SetMathAlphabet{\mathsfit}{bold}{\encodingdefault}{\sfdefault}{bx}{n}

\newcommand{\E}{\mathbb{E}}

\newcommand{\KL}{D_{\mathrm{KL}}}
\newcommand{\TV}{D_{\mathrm{TV}}}

\theoremstyle{plain}
\newtheorem{theorem}{Theorem}[section]

\newtheorem{lemma}[theorem]{Lemma}
\newtheorem{corollary}[theorem]{Corollary}
\theoremstyle{definition}

\theoremstyle{remark}
\newtheorem{remark}[theorem]{Remark}

\usepackage[textsize=tiny]{todonotes}

\newcommand{\themethod}{PACT\xspace}
\newcommand{\themethodfull}{Physical safety Alignment for Constrained Trajectories\xspace}

\icmltitlerunning{Self-Evolving Physical Safety Alignment for Diffusion Policy in Embodied Manipulation}

\begin{document}

\twocolumn[
\icmltitle{PACT: Self-Evolving Physical Safety Alignment \\
for Diffusion Policies in Embodied Manipulation}



  \icmlsetsymbol{equal}{*}

  \begin{icmlauthorlist}
    \icmlauthor{Lingxuan Wu}{thu,pcl}
    \icmlauthor{Zijian Zhu}{thu}
    \icmlauthor{Lizhong Wang}{thu}
    \icmlauthor{Chengyang Ying}{thu}
    \icmlauthor{Huayu Chen}{thu}
    \icmlauthor{Xiao Yang}{thu}
    \icmlauthor{Fangming Liu}{pcl}
    \icmlauthor{Jun Zhu}{thu}
    
    {\small\url{https://ethan-iai.github.io/pact}}
  \end{icmlauthorlist}

  \icmlaffiliation{thu}{Dept. of Comp. Sci. and Tech., Institute for AI, Tsinghua-Bosch Joint ML Center, THBI Lab, BNRist Center, Tsinghua University, Beijing, 100084, China}
  \icmlaffiliation{pcl}{Peng Cheng Laboratory, 518108, China}
  \icmlcorrespondingauthor{Xiao Yang}{yangxiao19@tsinghua.org.cn}
  \icmlcorrespondingauthor{Jun Zhu}{dcszj@tsinghua.edu.cn}

  \icmlkeywords{Diffusion Policy, Safety Alignment}

  \vskip 0.3in
]



\printAffiliationsAndNotice{}  

\begin{abstract}

Diffusion policies have achieved remarkable success in robotic manipulation, yet they often fail to satisfy strict physical constraints required for safe deployment. 
Existing approaches impose safety either prematurely during training or reactively via external guardrails at test time, limiting policy expressivity and overall scalability.
We propose {Physical safety Alignment for Constrained Trajectories} (PACT), a self-evolving post-training framework that projects pretrained diffusion policies onto constraint-feasible regions without accessing demonstration data or task rewards. \themethod distills constraint gradients into the diffusion model through a reverse-KL objective with dense supervision across timesteps. It incorporates a curriculum that progressively tightens constraints while maintaining theoretically bounded policy shift and monotone improvement, mitigating the safety-performance trade-off from catastrophic forgetting. On simulated and real-world embodied manipulation benchmarks, \themethod significantly reduces safety violations by 31.0\% on average while improving task success by 30.7\%.\footnote{Code:~\url{https://github.com/thu-ml/PACT}.}

\end{abstract}



\vspace{-0.5em}
\section{Introduction}
\label{sec:intro}

Recent advances in diffusion policies have demonstrated remarkable progress in robotic manipulation~\citep{janner2022planning,pearce2023imitating,chi2025diffusion}, enabling expressive, multimodal action distributions and strong generalization across diverse tasks and environments~\citep{liurdt,intelligence2025pi}. Despite their power, real-world deployment requires strict adherence to physical constraints, such as collision avoidance, force limits, and kinematic feasibility~\citep{ruess2022systems}. Constraint violations can be catastrophic, leading to task failure, irreversible hardware damage, and human injury, especially in safety-critical applications like manufacturing~\citep{buhl2019dual} and surgery~\citep{hu2023towards}. Consequently, deploying diffusion policies at scale necessitates preserving their expressive power while guaranteeing constraint satisfaction under all operating conditions.

\begin{figure}[t]
    \centering
    \includegraphics[width=1.0\linewidth]{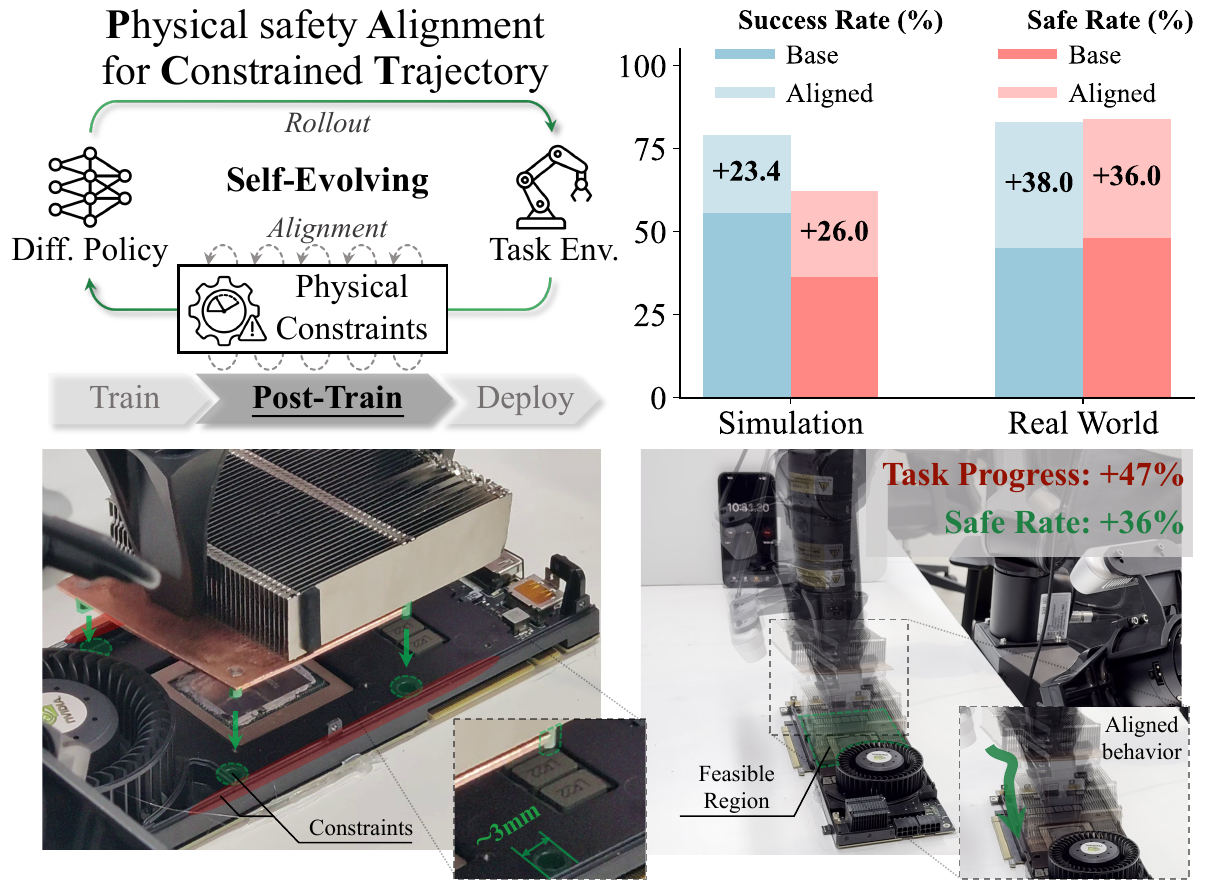}
\vspace{-1.5em}
\caption{\textbf{Physical safety alignment for diffusion-based manipulation.}
\themethod aligns a pretrained diffusion policy in a post-training stage using self-rollouts and continuous constraint supervision throughout the diffusion process in a \emph{self-evolving manner without external demonstrations or rewards}.
\emph{Top right:} \themethod improves both task performance and safety in simulation and real-world settings, resolving the safety--performance trade-off.
\emph{Bottom:} GPU assembly requiring millimeter-level precision, \themethod enables emergent fine-grained correction behaviors that adjust trajectories into feasible regions, ensuring accurate and safe insertions.}
\label{fig:teaser}
\vspace{-1.0em}
\end{figure}

To mitigate this issue, existing methods can be broadly categorized into design-time (or train-time) and test-time approaches. Design-time approaches~\citep{ross2011reduction} embed safety constraints during data collection~\citep{liu2024model,bahety2025safemimic} or model training~\citep{garcia2015comprehensive,ciftci2025safe}, ensuring robustness to perturbations during execution. However, they require extensive human expertise and task-specific engineering, limiting scalable deployment. Test-time approaches~\citep{hsu2023safety} instead employ guardrails~\citep{reichlin2022back,deng2025safebimanual} or backup policies to intercept unsafe actions during execution~\citep{wong2022error}. 
However, they depend on predefined constraints or additional sensors for real-time monitoring. Therefore, current methods impose safety constraints either prematurely, restricting training and potentially impeding learning, or reactively through limited deployment-stage interventions. Both strategies compromise policy expressivity and scalability while failing to provide principled safety guarantees for deployment.

Drawing inspiration from post-training safety alignment in large language models 
~\citep{bai2022training,dai2024safesora,chen2024aligning}, we align pretrained diffusion policies for physical safety before deployment. 
We formulate it as projecting pretrained policies onto the feasible physically constrained regions in a Constrained Markov Decision Process (CMDP)~\cite{altman2021constrained}. This preserves the expressiveness of the original policies, enables flexible adaptation to varying constraint sets without retraining, and maintains computational efficiency by decoupling safety from capability learning.
However, safety alignment for embodied manipulation~\citep{billard2019trends} presents unique challenges. First, diffusion models inherently spread probability mass across the action space, complicating strict constraint satisfaction even after alignment.
Second, post-training cannot access demonstration data or task utility functions, precluding conventional fine-tuning through data recollection~\citep{liu2024model} or reward-based optimization~\citep{garcia2015comprehensive}. 
Third, aggressive safety enforcement risks inducing catastrophic forgetting~\citep{mccloskey1989catastrophic}, since restricting policy support may eliminate task-critical behaviors when safe and original distributions diverge.
Collectively, these challenges require scalable post-training mechanisms that enforce physical constraints while preserving the expressivity and the task performance.

To address these challenges, we propose \textbf{\themethodfull (\themethod)}, a self-evolving post-training framework that aligns diffusion policies with physical safety constraints. First, \themethod introduces constraint distillation to tackle mode concentration by densely injecting gradient signals from safety cost functions at every diffusion timestep. This is equivalent to minimizing the backward KL divergence that concentrates the policy distribution into feasible regions while mitigating exposure bias~\citep{bengio2015scheduled}. 
In contrast to RL methods that suffer from sparse or noisy rewards, \themethod provides stable, continuous supervision throughout the entire diffusion process. 
Second, self-evolving optimization overcomes the data accessibility challenge by operating exclusively on self-generated trajectories, eliminating dependencies on external demonstrations or reward engineering.
Third, we incorporate a curriculum-based distillation that progressively tightens safety constraints to mitigate the performance–safety trade-off~\citep{zhangstair}. This curriculum theoretically promotes bounded policy shift and monotonic improvement, preventing abrupt action collapse and preserving task competence by avoiding catastrophic forgetting. 
Together, these components constitute a scalable post-training alignment framework that embeds physical safety into diffusion policies through curriculum-based constraint distillation on self-collected rollouts, substantially reducing human effort by eliminating the need for demonstrations, task-specific utility functions, or outcome annotations. 

We evaluate the policy aligned using \themethod across simulated and real-world manipulation benchmarks that require delicate bimanual coordination, \emph{without privileged state information or explicit constraint computation} at test time. \themethod reduces safety violation rates by 31.0\% while improving task success rates by 30.7\% and achieving higher training efficiency compared to RL-based safety alignment methods. 
Notably, \themethod exhibits strong real-world performance, including challenging GPU assembly tasks requiring millimeter-level precision, validating \themethod as a practical paradigm for deploying diffusion policies in safety-critical robotic systems. Our contributions are as follows:
\begin{itemize}[leftmargin=1.0em]
\vspace{-1.0em}
\setlength\itemsep{0em}
\item To our knowledge, we introduce the first post-training framework for aligning diffusion-based manipulation policies with physical safety constraints, requiring no manual data collection or test-time guardrails.
\item We develop a constraint distillation framework with curriculum-based updates that preserves task performance during safety enforcement in the absence of task utilities or demonstrations, enabling stable on-policy optimization with provable, continuous safety improvement.
\item Our method substantially reduces safety violations while improving task success rates and training efficiency relative to RL-based alignment, and demonstrates real-world performance on challenging manipulation tasks.
\end{itemize}

\vspace{-0.25em}
\section{Background}
\label{sec:background}

\textbf{Diffusion and flow policies.}
Explicitly modeling stochasticity and multi-modality, diffusion-based policies~\citep{sohl2015deep,ho2020denoising} enable more expressive control representations and have proven effective for real-world robotic tasks involving diverse behaviors~\citep{janner2022planning,chenoffline,chi2025diffusion}.
Given an action $\va$, a forward corruption process gradually injects Gaussian noise with a specific noise schedule $\alpha_t$ and $\sigma_t$:
\begin{equation}
\va_t = \alpha_t \va + \sigma_t \veps, \quad \veps \sim \mathcal{N}(0,\mI), \; t \in [0,1].
\label{eq:diffusion_forward}
\end{equation}
Intuitively, diffusion policies are learned by training a denoising model $\veps_{\phi}$ that predicts the injected noise $\veps$: %
\begin{equation}
\min_{\phi}
\;\mathbb{E}_{t,\epsilon,\vs,\va \sim \mu(\cdot \mid \vs)}
\big[
\|\veps_\phi(\va_t,\vs,t) - \veps\|_2^2
\big],
\label{eq:score_matching}
\end{equation}
which implicitly recovers the score function of the diffused behavior distribution~\citep{songscore}:
\begin{equation}
\scalebox{0.96}{$\displaystyle
    \nabla_{\va_t}\mu_t(\va_t\mid\vs,t) \approx \nabla_{\va_t}\mu_{\phi}(\va_t\mid\vs,t) = \veps_{\phi}(\va_t, \vs, t) / \sigma_t,
$}
\label{eq:score_interp}
\end{equation} 
and allows sampling actions from the learned behavior policy $\mu_{\phi}$. Beyond discrete denoising formulations, flow policies parameterize the policy as a velocity field that defines a deterministic flow transporting samples from noise to the action distribution~\citep{liuflow}, proven to be an equivalent continuous-time interpretation through probability flow ODEs~\citep{lipmanflow,songdenoising}.
Below we view diffusion and flow-based behavior policies as a unified generative policy class~\citep{kingma2023understanding} for subsequent physical safety alignment.

\textbf{Physical safety enforcement.}
Prior work on physical safety in robotic manipulation~\citep{haddadin2015physical} can be categorized by the stage at which safety is enforced. 
Development-time approaches incorporate safety during data collection or policy learning~\citep{brunke2022safe}, primarily through imitation learning (IL)~\citep{ross2011reduction,yang2024enhancing,liu2024model,bahety2025safemimic} and safe reinforcement learning (Safe RL)~\citep{garcia2015comprehensive,altman2021constrained,ying2022towards}. IL-based methods imitate curated safe demonstrations but suffer from degraded performance due to restricted action support. Safe RL formulates safety as auxiliary cost functions or constrained objectives, and typically relies on task utility to drive learning~\citep{altman2021constrained,ying2022towards,leecoptidice}. However, the rewards are sparse and delayed in real-world settings~\citep{gu2024review}, resulting in poor performance.
Inference-time methods enforce safety through action filtering or control barrier functions that intervene upon constraint violations~\citep{reichlin2022back,wabersich2023data,deng2025safebimanual,jung2025joint,NakamuraK-RSS-25}. However, these approaches require privileged state access, additional sensors, or perception models~\citep{wong2022error,gokmen2023asking}, thereby increasing inference-time overhead and limiting applicability to novel scenarios.
Post-training methods, recently explored by SafeVLA~\citep{zhang2025safevla}, amortize safety supervision into the policy at the intermediate stage between capability acquisition and deployment. 
However, it is limited to discrete categorical distribution and low-dimensional navigation tasks. We address \emph{diffusion policies} operating over \emph{continuous, high-dimensional} action spaces with complex physical safety constraints for \emph{manipulation}, posing substantial challenges. 
\vspace{-0.25em}
\section{Methodology}
\label{sec:method}

\subsection{Problem Formulation}
\label{subsec:formulation}

We study \emph{physical safety alignment} of diffusion policies under explicit safety constraints. The environment is modeled as a \emph{constrained Markov Decision Process} (CMDP) $\mathcal{M} = \langle \mathcal{S}, \mathcal{A}, \mathcal{P}, \mathcal{C} \rangle$~\citep{altman2021constrained},
where $\mathcal{S}$ and $\mathcal{A}$ respectively denote the state and action spaces,
$\mathcal{P}(\vs_{t+1}\mid \vs_t,\va_t)$ denotes the transition dynamics,
$\mathcal{C}=\{c_k(\vs,\va)\}_{k=1}^m$ is a set of safety cost functions encoding physical constraints (\textit{e.g.}, collisions or excessive force). The \emph{safe region} (feasible policy set) derived from safety costs $\mathcal{C}$ is defined as
\begin{equation}
\Pi_{\mathrm{safe}}
=
\Big\{
\pi:
\E_{(\vs, \va) \sim d^\pi}c_k(\vs,\va)
\le d_k,\;
\forall k 
\Big\}.
\end{equation}
Unlike standard CMDP formulations that rely on rewards, we frame \emph{safety alignment} as a \emph{regularized projection} of the base policy $\mu_{\phi}(\va_t\mid \vs_t)$ onto the feasible set $\Pi_{\mathrm{safe}}$:
\begin{equation}
\arg\min_{\pi \in \Pi_{\mathrm{safe}}}
\;
\mathbb{E}_{\vs\sim d^{\pi}}
\!\left[
\KL\!\big(\pi(\cdot\mid \vs)\,\|\,\mu_{\phi}(\cdot\mid \vs)\big)
\right],
\label{eq:kl_safe_projection_rewardfree}
\end{equation}
which seeks the closest safety-compliant policy while preserving the behavioral fidelity of the original policy. 
Furthermore, we assume \emph{no access to the pretrained behavior dataset} during safety alignment, which is realistic but restrictive~\citep{dulac2021challenges}. This precludes replay-based regularization or supervised correction to control distribution shift, making direct constrained optimization infeasible~\citep{ball2023efficient}.
To solve Problem~(\ref{eq:kl_safe_projection_rewardfree}), we leverage the Lagrange multiplier method to incorporate safety constraints while minimizing the KL divergence from the base policy~\citep{chow2018risk,tessler2018reward}. Specifically, we introduce a set of Lagrange multipliers $\lambda_k \geq 0$ for each safety constraint $c_k(\vs, \va)$ which enforce the safety conditions in the optimization process. The Lagrangian $\mathcal{L}(\pi, \lambda)$ for the constrained optimization problem is formulated as:
\begin{equation}
     \scalebox{0.97}{$\displaystyle
     \mathbb{E}_{(\vs, \va) \sim d^\pi} [\KL\left( \pi(\cdot \mid \vs) \| \mu_{\phi}(\cdot \mid \vs) \right)
    + \sum_{k=1}^m \lambda_k (c_k(\vs, \va) - d_k)],
    $}
\label{eq:lagrange_surrogate}
\end{equation}
which shares a similar structure with objective in Regularized RL~\citep{dayan1997using,schulman2017proximal}.

\begin{figure*}[t]
  \centering
  \includegraphics[width=0.95\linewidth]{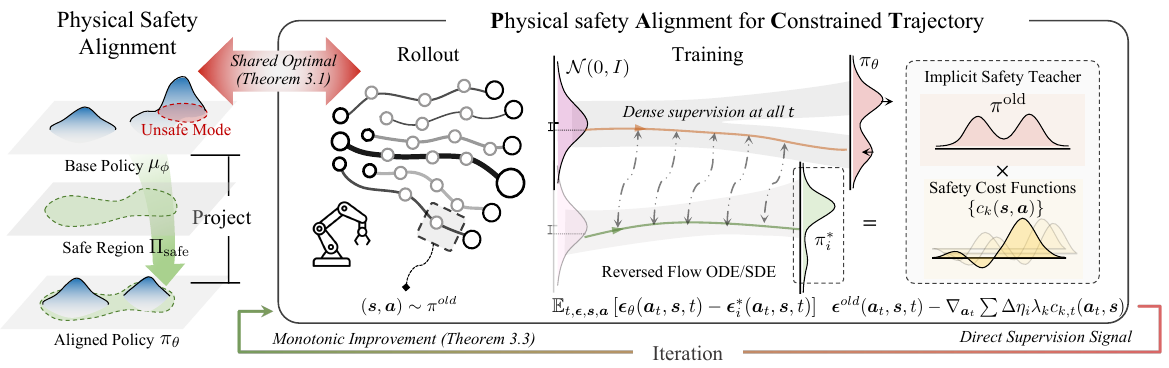}
  \vspace{-0.3em}  
\caption{\textbf{Overview of \themethodfull.}
\themethod frames physical safety alignment as projecting a pretrained diffusion policy $\mu_\phi$ onto the CMDP feasible set $\Pi_{\mathrm{safe}}$ via a KL-regularized constrained objective. 
For fixed multipliers, the score function of the optimal aligned policy is defined as an \emph{implicit safety teacher} by combining the base score with differentiable cost gradients (Theorem~\ref{thm:shared_optimal_solution}). 
We distill this teacher into a student diffusion policy $\pi_\theta$ using self-rollouts with direct and dense supervision across diffusion time and in a self-evolving manner without manual demonstrations or rewards. A curriculum schedule progressively increases constraint strength, yielding controlled policy shifts and monotonic safety improvement over iterations (Theorem~\ref{thm:monotonic_improvement}).}
  \label{fig:framework}
\vspace{-1.0em}
\end{figure*}

\vspace{-0.25em}
\subsection{Physical Safety Alignment for Diffusion Policies}
\label{subsec:method}

To minimize the objective in Eq.~(\ref{eq:lagrange_surrogate}), extensive prior work has studied parametric policies such as Gaussian or categorical distributions using policy gradient~\citep{schulman2017proximal}. However, extending these approaches to diffusion policies is non-trivial. In particular, policy gradient methods require tractable likelihood ratios or entropy terms, whereas diffusion policies only admit implicit densities whose likelihoods can only be computed approximately via costly probability ODE solvers or variational bounds for SDEs~\citep{song2021maximum}. These limitations make direct optimization of Eq.~(\ref{eq:lagrange_surrogate}) impractical for diffusion models.
Despite the intractability of likelihood-based optimization, the optimal solution to Eq.~(\ref{eq:lagrange_surrogate}) admits a simple closed-form structure. For fixed Lagrange multipliers $\{\lambda_k\}_{k=1}^{m}$, the constrained optimal policy satisfies~\cite{peng2019advantage}:
\begin{equation}
\pi^*(\va \mid \vs)
\;\propto\;
\mu_{\phi}(\va\mid \vs)\,
\exp\!\Big(- \sum_{k=1}^m \lambda_k\, c_k(\vs,\va)\Big),
\label{eq:boltzmann_opt}
\end{equation}
which can be interpreted as an exponential tilting of the base policy by safety costs. Although the normalized density in Eq.~(\ref{eq:boltzmann_opt}) remains intractable, it is available for sampling as its score function is directly computable:
\vspace{-0.25em}
\begin{equation}
\scalebox{0.97}{$\displaystyle
\veps^*(\va_t, \vs, t) 
     \triangleq \veps_{\phi}(\va_t, \vs, t) -\sum_{k=1}^m\lambda_k\nabla_{\va_t}c_{k,t}(\vs, \va_t) / \sigma_t,
     $}
\label{eq:implicit_safe_teacher}
\end{equation}
where $c_{k,t}(\cdot)$ denotes the intermediate costs derived from the original cost function $c_k(\cdot)$~\citep{lu2023contrastive} with its detailed structure elaborated in Appx.~\ref{subsec:intermediate_constraints}. We define the score function $\veps^*(\cdot)$ in Eq.~(\ref{eq:implicit_safe_teacher}) as the \emph{implicit safety teacher}.

However, sampling with $\veps^*(\cdot)$ requires evaluating the cost functions $c_k$, which often depend on privileged environment states (e.g., object poses) typically obtained via additional sensors or auxiliary perception modules. This reliance hinders direct test-time deployment and is inherently unscalable, as the per-agent sensing and computation overhead grows with the size of the robot fleet. To overcome this limitation, we approximate $\veps^*$ via distillation by training a student diffusion policy $\veps_\theta$ to match the implicit teacher's score function for physical safety alignment\footnote{The objective in Eq.~(\ref{eq:distill_obj}) is parameterization-agnostic. Further discussion is provided in Appx.~\ref{subsec:v_distill_obj}.}:
\begin{equation}
\min_{\theta} \E_{\,t, \veps, (\vs, \va) \sim d^{\pi_{\theta}}} \| \veps_{\theta}(\va_t, \vs, t) - \veps^*(\va_t, \vs, t)  \|^2.
\label{eq:distill_obj}
\end{equation}

\begin{theorem}[Optimality of \themethod]
\label{thm:shared_optimal_solution}
Given unlimited model capacity and sufficient data, the optimal solution for the distillation objective in Eq.~(\ref{eq:distill_obj}) 
is the score function of Eq.~(\ref{eq:boltzmann_opt})~\footnote{Proof in Appx.~\ref{subsec:proof_shared_optimal_solution}}.
\end{theorem}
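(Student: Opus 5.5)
The plan has two parts. First, show that for each $(\vs,t)$ the distillation objective in Eq.~(\ref{eq:distill_obj}) is minimized by the teacher itself. Second, show that the teacher $\veps^*$ is the (noise-parameterized) score of the diffused tilted policy $\pi^*_t$ associated with Eq.~(\ref{eq:boltzmann_opt}).

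\textbf{Step 1: pointwise minimization.} Under unlimited capacity, $\veps_\theta$ can realize any measurable function of $(\va_t,\vs,t)$. The objective is an expectation of the nonnegative quantity $\|\veps_\theta(\va_t,\vs,t)-\veps^*(\va_t,\vs,t)\|^2$. Note that $\veps^*$ is a deterministic function of $(\va_t,\vs,t)$, not a random regression target such as $\veps$. Hence the objective is driven to zero by taking $\veps_\theta=\veps^*$ almost everywhere on the support of the sampling distribution. The dependence of the sampling distribution on $\theta$ (on-policy $d^{\pi_\theta}$) does not change this. The minimum value $0$ is attained by $\veps^*$ whatever the state distribution is. Uniqueness holds on the support, and with sufficient data and full-support Gaussian corruption this support is all of $\mathcal{A}$ for every $t>0$ and every visited $\vs$. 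Under the paper's sign convention in Eq.~(\ref{eq:score_interp}), this says that $\veps_\theta/\sigma_t$ equals $\veps^*/\sigma_t$.

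\textbf{Step 2: identify $\veps^*$ as the score of $\pi^*_t$.} Define $\pi^*_t(\va_t\mid\vs)=\int q_t(\va_t\mid\va)\,\pi^*(\va\mid\vs)\,d\va$, where $q_t$ is the forward kernel of Eq.~(\ref{eq:diffusion_forward}). Substituting Eq.~(\ref{eq:boltzmann_opt}) gives
\[
\pi^*_t(\va_t\mid\vs)=\frac{1}{Z(\vs)}\,\mu_t(\va_t\mid\vs)\,\E_{\va\sim\mu(\cdot\mid\va_t,\vs)}\Big[\exp\Big(-\sum_k\lambda_k c_k(\vs,\va)\Big)\Big].
\]
Following the contrastive energy prediction construction of \citet{lu2023contrastive} (Appx.~\ref{subsec:intermediate_constraints}), I define the intermediate cost $c_{t}$ by
\[
\exp(-c_{t}(\vs,\va_t))=\E_{\va\mid\va_t}\big[\exp(-\textstyle\sum_k\lambda_k c_k)\big],
\]
or componentwise when the costs are handled separately. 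Taking $\nabla_{\va_t}\log$ then gives
\[
\nabla\log\pi^*_t=\nabla\log\mu_t-\nabla c_t .
\]
The normalizer $Z(\vs)$ is independent of $\va_t$, so it drops out. Multiplying by the $\sigma_t$ scaling of Eq.~(\ref{eq:score_interp}) and using $\veps_\phi/\sigma_t\approx\nabla\log\mu_t$ recovers Eq.~(\ref{eq:implicit_safe_teacher}) exactly. At $t\to0$ we have $c_{k,t}\to c_k$ and $\mu_t\to\mu_\phi$, so the terminal distribution is $\pi^*$ itself.

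\textbf{Main obstacle.} Step 2 is the delicate one. The tilt by a sum of costs does not in general factor into a sum of separately diffused per-constraint intermediate costs $c_{k,t}$. I would therefore state the identification with the joint intermediate energy $c_t$ defined above, and read the paper's $\sum_k\lambda_k c_{k,t}$ as notation for that quantity. The approximation $\veps_\phi/\sigma_t\approx$ base score is taken as exact under the idealized-capacity assumption. Two further regularity conditions are needed: interchanging gradient and integral, and finiteness of $Z(\vs)$. Both follow from bounded, differentiable costs and $\lambda_k\ge0$.
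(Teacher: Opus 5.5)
Your argument follows the paper's proof in Appx.~\ref{subsec:proof_shared_optimal_solution}. The loss is a squared error against a deterministic target, so any global minimizer has $\veps_\theta=\veps^*$ almost everywhere under the (Gaussian-smoothed, hence full-support in $\va_t$) sampling measure, regardless of the on-policy dependence. Next, $\veps^*$ is identified with $\sigma_t$ times the score of the diffused tilted policy $\pi^*_t$ via the intermediate-energy construction of Appx.~\ref{subsec:intermediate_constraints}. Finally one lets $t\to0$.

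There are small differences. The paper also invokes Lemma~\ref{lem:lagrangian_minimizer} to tie Eq.~(\ref{eq:boltzmann_opt}) to the Lagrangian, which is not needed for the statement as phrased. It closes the last step by noting that equal scores make $\log\pi_{\theta^*,t}-\log\pi^*_t$ constant in $\va_t$, hence zero by normalization. Your ``terminal distribution is $\pi^*$'' instead tacitly uses the fact that exact scores for all $t$ determine the generated marginals. On the other hand, the paper only recalls $\nabla\log\pi^*_t=\veps^*/\sigma_t$, while you derive it.

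Your ``main obstacle'' remark is correct and repairs a real flaw in Appx.~\ref{subsec:intermediate_constraints}. There, the per-constraint $c_{k,t}$ give $\exp(-\sum_k\lambda_k c_{k,t})=\prod_k\E[e^{-\lambda_k c_k}]$. Concluding $\pi^*_t\propto\mu_{\phi,t}\exp(-\sum_k\lambda_k c_{k,t})$ would require this to equal $\E[\prod_k e^{-\lambda_k c_k}]$ under $\mu_{\phi,0\mid t}$, which fails in general for $m>1$. Switching to the joint energy costs nothing, since only the aggregate gradient enters the teacher.

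One claim is wrong as written. Multiplying $\nabla\log\pi^*_t=\nabla\log\mu_t-\nabla c_t$ by $\sigma_t$ gives $\veps_\phi-\sigma_t\nabla c_t$. That is not Eq.~(\ref{eq:implicit_safe_teacher}), whose correction term is $\nabla c_t/\sigma_t$. The two coincide only when $\sigma_t=1$, and the factor is $\sigma_t$ under either sign convention for Eq.~(\ref{eq:score_interp}). So the teacher as displayed is not $\sigma_t\nabla\log\pi^*_t$, and ``recovers Eq.~(\ref{eq:implicit_safe_teacher}) exactly'' is false.

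The paper's proof carries the same latent inconsistency, since it simply asserts $\nabla\log\pi^*_t=\veps^*/\sigma_t$. You should state explicitly that the identification, and hence the theorem, holds for the teacher $\veps_\phi-\sigma_t\sum_k\lambda_k\nabla c_{k,t}$, treating the $1/\sigma_t$ as a typo.

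Likewise, exactness of $\veps_\phi/\sigma_t$ as the score of $\mu_t$ is a hypothesis on the pretrained model. It does not follow from the distilled model's unlimited capacity. The paper assumes it silently; you should list it as an explicit assumption rather than attribute it to capacity.
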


\begin{remark}
This theorem shows that distillation can recover the optimal solution of the constrained optimization problem without explicit likelihood evaluation. Compared to RL-based methods that rely on stochastic, high-variance reward signals, it provides direct and stable guidance toward the feasible region without requiring outcome rewards or value models. Moreover, the proposed objective is \emph{solver-agnostic}, enabling efficient sampling with few denoising steps and higher-order solvers during rollout, unlike prior diffusion RL methods that are tightly coupled to first-order SDE samplers~\citep{blacktraining,liu2025flow}.
\end{remark}

\textbf{Curriculum distillation to mitigate irreversible Out-of-Distribution collapse.}
In practice, a key failure mode occurs when transient intermediate policies deviate substantially from the base behavior, inducing rapid rollout state distribution drift and triggering \emph{Irreversible Out-of-Distribution (OOD) Collapse}, wherein an under-optimized policy is driven into OOD regimes and suffers a severe loss of task competence. This issue is exacerbated in safety alignment because task utility is inaccessible; therefore, no corrective supervision is available to steer recovery, making the degradation effectively irreversible. Mechanistically, it is primarily due to the lack of trust region control over intermediate updates~\citep{conn2000trust} for direct distillation, so small deviations can compound over time and rapidly push trajectories into OOD states. To address this, we introduce a curriculum distillation strategy that progressively enforces safety constraints, enabling a smooth transition toward the final aligned solution. Concretely, we adopt a monotonically increasing curriculum schedule $\eta_0, \ldots, \eta_N \in [0,1]$ with $\eta_0 = 0$ and $\eta_N = 1$, which gradually scales the constraint multipliers over $N$ iterations. At each iteration $i$, we solve the following intermediate objective:
\begin{equation}
\begin{aligned}
\min_{\pi}
\mathbb{E}_{(\vs,\va)\sim d^{\pi}}
\Big[
& \KL\big(\pi(\cdot\mid\vs) \| \pi^{\text{old}}(\cdot\mid\vs)\big) \\
& +
\sum_{k=1}^{m} \Delta\eta_i \cdot \lambda_k ( c_k(\vs,\va) - d_k)
\Big],
\label{eq:scheduled_lagrange_surrogate}
\end{aligned}
\end{equation}
where $\Delta\eta_i = \eta_{i} - \eta_{i-1} (i>0)$ denotes the incremental schedule step. $\pi^{\text{old}}$ is the trained $\pi$ of the previous iteration, and $\pi^{\text{old}} = \mu_{\phi}$ at the initial iteration ($i = 1$). The corresponding distillation objective at iteration $i$ is:
\begin{equation}
\begin{aligned}
& \min_{\theta}\;  
\mathbb{E}_{\,t,\veps, (\vs, \va) \sim d^{\pi_{\theta}}}
\|
\veps_{\theta}(\va_t,\vs,t)
- \veps_i^+(\va_t, \vs, t)
\|^2,\text{with} \\
 &
\begin{aligned}[t]
\veps^+_i(\va_t, \vs, t)
&= \veps^{\text{old}}(\va_t,\vs,t)
- \sum_{k=1}^m
\Delta\eta_i\,\lambda_k
\nabla_{\va_t} c_{k,t}(\vs,\va_t) / \sigma_t, 
\end{aligned}
\end{aligned}
\label{eq:scheduled_distill_obj}
\end{equation}
where $\veps^+_i(\cdot)$ defines the \emph{Curriculum Implicit Safety Teacher}, serving as an intermediate target that smoothly interpolates toward the final Implicit Safety Teacher $\veps^*(\cdot)$ in Eq.~(\ref{eq:implicit_safe_teacher}).

\begin{figure}
  \centering
  \includegraphics[width=1\linewidth]{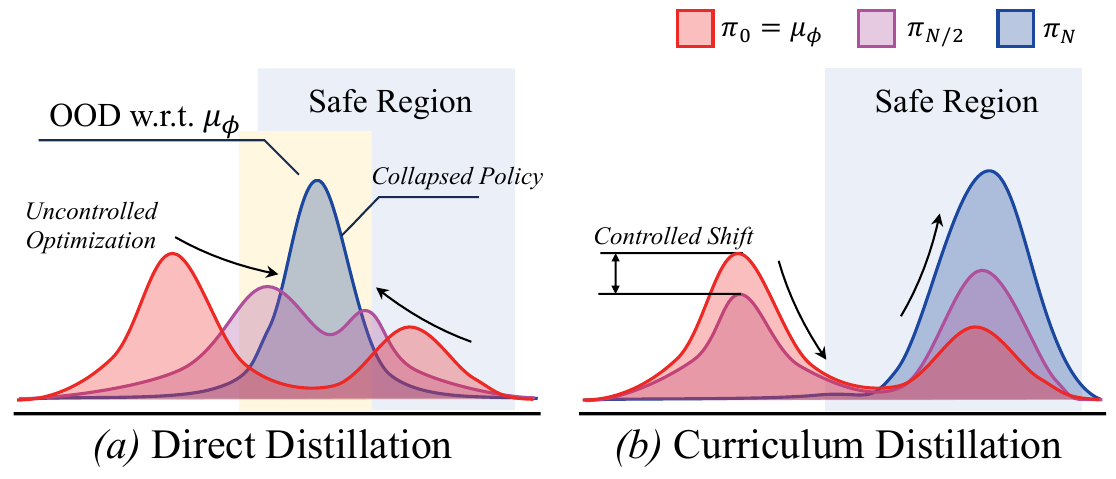}
  \vspace{-1.5em}
  \caption{\textbf{Curriculum distillation mitigates Irreversible OOD Collapse by controlling intermediate policy shift.} We illustrate the evolution of policy distributions over iterations: \emph{(a) Direct distillation} enforces constraints without control, so intermediate policies can drift rapidly, pushing rollouts into OOD regions and yielding a collapsed policy that loses task competence despite aiming for safety. \emph{(b) Curriculum distillation} progressively increases constraint strength and regularizes policy within a trust region, inducing controlled transition to a safety-compliant policy.}
  \vspace{-1.0em}
  \label{fig:ood_illustration}
\end{figure}

\begin{theorem}[Monotonic Improvement under Curriculum Distillation]
\label{thm:monotonic_improvement}
Assume each cost function is bounded, i.e.,
$c_k(\vs,\va)\in[c_k^{\min},c_k^{\max}]$ for all $k$ and all $(\vs,\va)$.
Define the aggregated Lagrangian cost
\[
\ell(\vs,\va)
\triangleq
\sum_{k=1}^m \lambda_k\big(c_k(\vs,\va)-d_k\big),
\quad
\ell(\vs,\va)\in[\ell_{\min},\ell_{\max}].
\]
Let $\pi^{\text{new}}$ be obtained by solving the objective in Eq.~(\ref{eq:scheduled_distill_obj}) at any iteration $i$; the update satisfies:
\begin{itemize}[leftmargin=1.0em]
\vspace{-1.0em}
\setlength\itemsep{0em}
\item \textbf{Monotonic improvement.}
The expected curriculum Lagrangian cost under $d^{\pi_i}$ is non-increasing:
\begin{equation}
\mathbb{E}_{d^{\pi^{\text{new}}}}\!\big[\ell(\vs,\va)\big]
\;\le\;
\mathbb{E}_{d^{\pi^{\text{old}}}}\!\big[\ell(\vs,\va)\big].
\label{eq:monotone_cost}
\end{equation}
\item \textbf{Controlled policy shift.}
Set $\Delta \ell = \ell_{\max}-\ell_{\min}$, then the policy change is bounded as 
\begin{equation}
\scalebox{0.95}{$\displaystyle
\mathbb{E}_{\vs\sim d^{\pi^{\text{old}}}}
\Big[
\KL\big(\pi^{\text{new}}(\cdot\mid\vs)\|\pi^{\text{old}}(\cdot\mid\vs)\big)
\Big]
\le
\Delta\eta_i\, \Delta \ell.
$}
\label{eq:kl_control}
\end{equation}
\end{itemize}
\end{theorem}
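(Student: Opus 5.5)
Both claims will follow from the closed form of the per-iteration optimum. By Theorem~\ref{thm:shared_optimal_solution}, applied with $\mu_\phi$ replaced by $\pi^{\text{old}}$ and $\lambda_k$ replaced by $\Delta\eta_i\lambda_k$, the minimizer of Eq.~(\ref{eq:scheduled_distill_obj}) is the exponentially tilted policy
\[
\pi^{\text{new}}(\va\mid\vs)=\frac{\pi^{\text{old}}(\va\mid\vs)\exp(-\Delta\eta_i\,\ell(\vs,\va))}{Z(\vs)},\qquad Z(\vs)=\E_{\va\sim\pi^{\text{old}}(\cdot\mid\vs)}\big[e^{-\Delta\eta_i\ell(\vs,\va)}\big].
\]
This holds because the constant offsets $-\Delta\eta_i\lambda_k d_k$ are absorbed into $Z(\vs)$. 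Equivalently, for each fixed $\vs$, $\pi^{\text{new}}(\cdot\mid\vs)$ minimizes $J_{\vs}(\pi)=\KL(\pi\|\pi^{\text{old}})+\Delta\eta_i\E_{\pi}[\ell]$ over distributions on $\mathcal{A}$, by the Gibbs variational principle. Its minimal value is $J_{\vs}(\pi^{\text{new}})=-\log Z(\vs)$. I would work with these per-state identities and integrate over states only at the end.

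\textbf{Controlled policy shift.} Fix $\vs$. Since $J_{\vs}(\pi^{\text{new}})\le J_{\vs}(\pi^{\text{old}})$ and $\KL(\pi^{\text{old}}\|\pi^{\text{old}})=0$, we get
\[
\KL(\pi^{\text{new}}\|\pi^{\text{old}})\le \Delta\eta_i\big(\E_{\pi^{\text{old}}}[\ell]-\E_{\pi^{\text{new}}}[\ell]\big)\le\Delta\eta_i(\ell_{\max}-\ell_{\min})=\Delta\eta_i\Delta\ell .
\]
The bound holds pointwise in $\vs$, so taking the expectation over $\vs\sim d^{\pi^{\text{old}}}$ (or over any state distribution) gives Eq.~(\ref{eq:kl_control}). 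As a cross-check, the direct computation $\KL=-\Delta\eta_i\E_{\pi^{\text{new}}}[\ell]-\log Z$, combined with $\log Z\ge-\Delta\eta_i\E_{\pi^{\text{old}}}[\ell]$ from Jensen, gives the same bound.

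\textbf{Monotonic improvement.} The same per-state inequality already yields $\E_{\pi^{\text{new}}(\cdot\mid\vs)}[\ell]\le\E_{\pi^{\text{old}}(\cdot\mid\vs)}[\ell]$ for every $\vs$, because the KL term is nonnegative. Alternatively, tilting by $e^{-\Delta\eta_i\ell}$ is a decreasing reweighting, and the covariance (Chebyshev) inequality gives the same conclusion. Under a common state distribution this is the claim. Passing from the action-conditional statement to the occupancy measures $d^{\pi^{\text{new}}}$ versus $d^{\pi^{\text{old}}}$ is the main obstacle, because the state distribution also changes.

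For that step I would interpret Eq.~(\ref{eq:scheduled_lagrange_surrogate}) as a trajectory-level objective, namely a KL between trajectory distributions plus the cumulative cost, both taken under $d^{\pi}$. The dynamics terms cancel in the trajectory KL, so that KL equals the occupancy-weighted per-state KL. The global minimizer $\pi^{\text{new}}$ then satisfies
\[
\E_{d^{\pi^{\text{new}}}}[\KL(\pi^{\text{new}}\|\pi^{\text{old}})]+\Delta\eta_i\E_{d^{\pi^{\text{new}}}}[\ell]\le 0+\Delta\eta_i\E_{d^{\pi^{\text{old}}}}[\ell],
\]
by comparison with the feasible choice $\pi=\pi^{\text{old}}$. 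Dropping the nonnegative KL and dividing by $\Delta\eta_i>0$ gives Eq.~(\ref{eq:monotone_cost}). In this trajectory-level version the optimum is a soft-Bellman tilt rather than the myopic tilt above, but only the comparison-with-$\pi^{\text{old}}$ argument is used, so the closed form is not needed. The same comparison also bounds the KL term by $\Delta\eta_i\Delta\ell$ under $d^{\pi^{\text{new}}}$.

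To obtain the KL bound under $d^{\pi^{\text{old}}}$ as stated, I would fall back on the per-state argument of the previous paragraphs, which holds for every $\vs$. To keep the two parts consistent, I would state explicitly that the myopic per-state optimum is what the distillation targets, and use the pointwise bounds to conclude both claims.
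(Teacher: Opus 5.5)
Your controlled-shift argument is correct and is essentially the paper's. The paper compares the minimizer with the feasible choice $\pi=\pi^{\text{old}}$ in the surrogate written under the frozen state distribution $d^{\pi^{\text{old}}}$. You make the same comparison state by state via the Gibbs variational principle, which is slightly cleaner. The paper's intermediate step writes $\E_{d^{\pi^{\text{new}}}}[\ell]$ where only $\E_{\vs\sim d^{\pi^{\text{old}}},\va\sim\pi^{\text{new}}}[\ell]$ is justified, though the final bound $\Delta\eta_i\Delta\ell$ survives.

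The gap is in your last step for monotonic improvement. You correctly see that $\E_{\pi^{\text{new}}(\cdot\mid\vs)}[\ell]\le\E_{\pi^{\text{old}}(\cdot\mid\vs)}[\ell]$ for every $\vs$ only transfers to a \emph{common} state distribution. But you then resolve this by declaring the myopic tilt the target and ``using the pointwise bounds to conclude both claims,'' and pointwise bounds do not yield Eq.~(\ref{eq:monotone_cost}) with $d^{\pi^{\text{new}}}$ on the left. For the myopic tilt that inequality can actually fail:
\begin{itemize}
\item Let state $A$ be the initial state, with an action $a_1$ that has $\ell=0$ but leads to an absorbing state $B$ with $\ell\equiv 1$, and an action $a_2$ that has $\ell=1/2$ and stays in $A$.
\item Starting from $\pi^{\text{old}}(a_1\mid A)=1/2$, the tilt raises $p=\pi(a_1\mid A)$ for every $\Delta\eta_i>0$.
\item The normalized discounted occupancy gives $\E_{d^{\pi}}[\ell]=1-(1-\gamma)\frac{(1+p)/2}{1-\gamma+\gamma p}$, which is increasing in $p$ whenever $\gamma>1/2$.
\end{itemize}

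Your trajectory-level argument does give the occupancy version, but for a different policy, the soft-Bellman minimizer. For that policy you only have the KL bound under $d^{\pi^{\text{new}}}$, not under $d^{\pi^{\text{old}}}$. So there is no single $\pi^{\text{new}}$ for which your argument delivers both displays as written.

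The paper's own proof has the same hole. It fixes $\vs\sim d^{\pi^{\text{old}}}$, drops the KL, and obtains $\E_{\vs\sim d^{\pi^{\text{old}}},\va\sim\pi^{\text{new}}}[\ell]\le\E_{d^{\pi^{\text{old}}}}[\ell]$; its displayed line even has $d^{\pi^{\text{old}}}$ on both sides. It then simply calls this Eq.~(\ref{eq:monotone_cost}).

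The honest repair is to state monotonic improvement under the frozen distribution $d^{\pi^{\text{old}}}$. Under that reading your per-state proof is complete and coincides with the paper's. A genuine occupancy-measure guarantee would need a performance-difference argument with the advantage of $\pi^{\text{old}}$, which tilting by the immediate cost $\ell$ does not control.
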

Thus, the proposed curriculum distillation ensures monotonic safety improvement along the iterations and constructs a smooth policy with controlled policy shifts to prevent abrupt distributional shifts in the original objective.
\begin{corollary}[Curriculum \themethod Optimal Solution]
    The optimal solution $\pi_{i}^{*}$ at iteration $i$ satisfies
    \begin{equation}
        \pi_i^*(\va \mid \vs)
    \;\propto\;
    \mu(\va\mid \vs)\,
    \exp\!\Big(- \sum_{k=1}^m \eta_i \cdot\lambda_k\, c_k(\vs,\va)\Big),
    \label{eq:scheduled_boltzmann_opt}
    \end{equation} 
    and the final solution at iteration $N$ exactly coincides with the shared closed-form optimal solution in Theorem~\ref{thm:shared_optimal_solution} given unlimited model capacity and data samples~\footnote{Proof in Appx.~\ref{subsec:curriculum_optimal_solution}}.
    \label{cor:curriculum_distillation_optimal_solution}
\end{corollary}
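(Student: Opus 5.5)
The plan is an induction on the iteration index $i$, with the closed form of Eq.~(\ref{eq:boltzmann_opt}) and Theorem~\ref{thm:shared_optimal_solution} applied once per iteration. At iteration $i$, the intermediate objective Eq.~(\ref{eq:scheduled_lagrange_surrogate}) has the same structure as Eq.~(\ref{eq:lagrange_surrogate}). It is a reverse-KL to a reference policy plus a linear cost term. The differences are that the reference $\mu_\phi$ is replaced by $\pi^{\text{old}}=\pi_{i-1}^*$ and the multipliers $\lambda_k$ are replaced by $\Delta\eta_i\lambda_k$.

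Applying the exponential-tilting closed form with these substitutions gives
\begin{equation*}
\pi_i^*(\va\mid\vs)\propto \pi_{i-1}^*(\va\mid\vs)\exp\!\Big(-\sum_{k=1}^m\Delta\eta_i\lambda_k c_k(\vs,\va)\Big).
\end{equation*}
The constants $-\Delta\eta_i\lambda_k d_k$ do not depend on $\va$, so they drop into the normalizer. Theorem~\ref{thm:shared_optimal_solution}, applied with teacher $\veps_i^+$ from Eq.~(\ref{eq:scheduled_distill_obj}), identifies the minimizer of the iteration-$i$ distillation loss with the score of this tilted density. This holds under unlimited capacity and data, and provided the student at iteration $i-1$ reached its optimum, so that $\veps^{\text{old}}$ is exactly the score of $\pi_{i-1}^*$.

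Next, unroll the recursion. The base case is $\pi_0^*=\mu_\phi$, using $\eta_0=0$. Assume $\pi_{i-1}^*\propto\mu\exp(-\sum_k\eta_{i-1}\lambda_k c_k)$. Multiplying by the tilt gives
\begin{equation*}
\exp\!\Big(-\sum_k(\eta_{i-1}+\Delta\eta_i)\lambda_k c_k\Big)=\exp\!\Big(-\sum_k\eta_i\lambda_k c_k\Big),
\end{equation*}
because the increments telescope. This is Eq.~(\ref{eq:scheduled_boltzmann_opt}). At $i=N$ we have $\eta_N=1$, which recovers Eq.~(\ref{eq:boltzmann_opt}). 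By Theorem~\ref{thm:shared_optimal_solution}, this coincides with the optimal solution of the direct distillation Eq.~(\ref{eq:distill_obj}).

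A consistency check at the score level confirms this. Inductively, $\veps_i^+=\veps_\phi-\sum_k\eta_i\lambda_k\nabla c_{k,t}/\sigma_t$, so $\veps_N^+=\veps^*$.

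The main obstacle is the intermediate costs $c_{k,t}$. The noisy-time score of a tilted density is not simply the base score plus $\nabla c_k$. By the contrastive-energy construction of Appx.~\ref{subsec:intermediate_constraints}, $c_{k,t}$ is defined as $-\log\E[\exp(-\lambda c_k)\mid\va_t]$. This expression is not linear in the multiplier, so the telescoping of the per-step gradients is exact only if the intermediate energies are defined relative to the current reference and the current scaled multiplier. I would handle this as follows. Define the iteration-$i$ intermediate energy with respect to $\pi_{i-1}^*$ and tilt $\Delta\eta_i\lambda$. Then show the chain rule for conditional expectations: tilting the forward marginal of $\pi_{i-1}^*$ by $\exp(-\Delta\eta_i\ell)$ yields the forward marginal of $\pi_i^*$. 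This composition property holds at the clean level $t=0$. At the clean level the noised marginals are determined uniquely by the forward kernel, so the time-$t$ scores agree.

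The statement is an exact-solution identity under idealized assumptions, so the proof only needs the clean-density recursion together with this marginal-consistency argument.
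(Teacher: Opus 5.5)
Your proposal is correct and follows the paper's proof. At each iteration you apply the exponential-tilting minimizer (Lemma~\ref{lem:lagrangian_minimizer}) with reference $\pi_{i-1}^*$ and weights $\Delta\eta_i\lambda_k$, then unroll from $\pi_0^*=\mu_\phi$ and telescope $\sum_{j\le i}\Delta\eta_j=\eta_i$, with $\eta_N=1$ recovering Eq.~(\ref{eq:boltzmann_opt}). Your extra score-level layer goes beyond the paper: you require each stage to reach its optimum so that $\veps^{\text{old}}$ is exact, and you define the iteration-$i$ intermediate energy relative to $\pi_{i-1}^*$ and the scaled multiplier, since $c_{k,t}$ is nonlinear in $\lambda$; the paper's argument stays at the clean-density level and never uses the capacity and data assumption explicitly.
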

\vspace{-0.5em}
In conclusion, the curriculum distillation procedure preserves the optimal solution of the original constrained objective. Its scheduled formulation ensures that each intermediate update enforces only a controlled deviation from the current policy with guaranteed improvement. This gradual enforcement of constraints prevents the policy from being prematurely driven into unfamiliar states, thereby avoiding Irreversible OOD Collapse and ensuring training stability.

\vspace{-0.25em}
\subsection{Practical Implementations}
\label{subsec:implement}

\begin{algorithm}[t]\small
\caption{The Training Pipeline of \themethod}\label{alg:curriculum_distillation}
\begin{algorithmic}[1]
   \Require Pretrained diffusion policy $\veps_{\phi}$, number of iterations $N$, number of epochs $E$, set of differentiable safety cost functions $\{c_k(\vs, \va)\}_{k=1}^{m}$ and corresponding fixed Lagrange multipliers $\{\lambda_k\}_{k=1}^{m}$, curriculum schedule $\{\eta_i\}_{i=0}^{N}$, max diffusion time to inject cost guidance $t_c$.
   \Ensure The aligned diffusion policy $\veps_{\theta}$.  
\For{iteration $i \gets 1$ \textbf{to} $N$}
    \LComment{Rollout and collect data}
    \State Rollout policy $\veps_{\theta}$ and collect trajectories as $\mathcal{D}=\{(\vs, \va)\}$.
    \For{epoch $\gets 1$ \textbf{to} $E$}
        \ForAll{mini-batch $\{\vs, \va\} \in \mathcal{D}$} 
            \State $\va_t = \alpha_t \va + \sigma_t \veps$ with $\veps \sim \mathcal{N}(0,\mI), \; t \in [0,1]$.
            \State $\va_{0\mid t} = \left(\va_t - \sigma_t \veps^{\text{old}}(\vs, \va_t, t)\right)/\alpha_t$. \quad \quad (Eq.(\ref{eq:diffusion_forward}))
            \State Update $\theta$ by minimizing the objective in Eq.~(\ref{eq:scheduled_distill_obj}) with approximated score for Curriculum Implicit Safety Teacher computed with Eq.~(\ref{eq:tf_teacher}). 
        
        \EndFor
    \EndFor
    \State Update $\theta_{\text{old}} \gets \theta$ and clear buffer $\mathcal{D}_{\tau} \gets \varnothing$
\EndFor
\end{algorithmic}
\end{algorithm}

\begin{table*}[t]
\centering
\footnotesize
\caption{\textbf{Task success and safety across bimanual manipulation benchmarks in RoboTwin.}
We report \emph{Success Rate (succ.)} and \emph{Safe Rate (safe)} for each base policy before and after post-training with \themethod.
}
\vspace{-0.5em}
\resizebox{1.0\linewidth}{!}{
\renewcommand{\arraystretch}{1.05}
\begin{tabular}{l cc cc cc cc cc cc cc cc}
\toprule
\multirow{2}{*}{\quad Model}
& \multicolumn{2}{c}{\makecell[c]{Pick Dual\\Bottle}}
& \multicolumn{2}{c}{\makecell[c]{Pick Diverse\\Bottle}}
& \multicolumn{2}{c}{\makecell[c]{Handover\\Apple}}
& \multicolumn{2}{c}{\makecell[c]{Handover\\Block}}
& \multicolumn{2}{c}{\makecell[c]{Place Dual\\Shoes}}
& \multicolumn{2}{c}{\makecell[c]{Pour\\Water}}
& \multicolumn{2}{c}{\makecell[c]{Stack\\Blocks}}
& \multicolumn{2}{c}{ \textbf{Average}} \\
\cmidrule(lr){2-3}\cmidrule(lr){4-5}\cmidrule(lr){6-7}\cmidrule(lr){8-9}
\cmidrule(lr){10-11}\cmidrule(lr){12-13}\cmidrule(lr){14-15}\cmidrule(lr){16-17}

& Succ. & Safe
& Succ. & Safe
& Succ. & Safe
& Succ. & Safe
& Succ. & Safe
& Succ. & Safe
& Succ. & Safe
& Succ. & Safe\\
\midrule
DP
& 67\% & 36\%
& 36\% & 6\%
& 63\% & 64\%
& 28\% & 70\%
& 23\% & 66\%
& 38\% & 20\%
& 63\% & 24\%
& 45\% & 41\% \\
\rowcolor{rowblue!40} \quad + \themethod
& \textbf{96\%} & \textbf{89\%}
& \textbf{58\%} & \textbf{21\%}
& \textbf{81\%} & \textbf{82\%}
& \textbf{72\%} & \textbf{78\%}
& \textbf{41\%} & \textbf{73\%}
& \textbf{86\%} & \textbf{64\%}
& \textbf{80\%} & \textbf{48\%}
& \textbf{73\%} & \textbf{65\%} \\
\midrule
DP3
& 44\% & 24\%
& 4\%  & 2\%
& 54\% & 46\%
& 59\% & 71\%
& 9\%  & 59\%
& 16\% & 13\%
& 6\%  & 6\%
& 27\%  & 32\%  \\
\rowcolor{rowblue!40} \quad + \themethod
& \textbf{60\%} & \textbf{46\%}
& \textbf{11\%} & \textbf{7\%}
& \textbf{70\%} & \textbf{59\%}
& \textbf{72\%} & \textbf{73\%}
& \textbf{19\%} & \textbf{70\%}
& \textbf{41\%} & \textbf{16\%}
& \textbf{27\%} & \textbf{16\%}
& \textbf{43\%} & \textbf{41\%} \\
\midrule
RDT-1B
& 61\% & 27\%
& 35\% & 12\%
& 76\% & 52\%
& 64\% & 61\%
& 42\% & 72\%
& 52\% & 17\%
& 60\% & 13\%
& 56\% & 36\% \\
\rowcolor{rowblue!40} \quad + \themethod
& \textbf{88\%} & \textbf{74\%}
& \textbf{71\%} & \textbf{43\%}
& \textbf{83\%} & \textbf{59\%}
& \textbf{75\%} & \textbf{82\%}
& \textbf{63\%} & \textbf{82\%}
& \textbf{85\%} & \textbf{72\%}
& \textbf{89\%} & \textbf{24\%}
& \textbf{79\%} & \textbf{62\%} \\
\midrule
$\pi_{0.5}$
& 64\% & 20\%
& 37\% & 8\%
& 71\% & 55\%
& 66\% & 64\%
& 41\% & 70\%
& 48\% & 14\%
& 67\% & 25\%
& 56\% & 37\% \\
\rowcolor{rowblue!40} \quad + \themethod
& \textbf{89\%} & \textbf{72\%}
& \textbf{76\%} & \textbf{42\%}
& \textbf{86\%} & \textbf{61\%}
& \textbf{74\%} & \textbf{79\%}
& \textbf{70\%} & \textbf{79\%}
& \textbf{79\%} & \textbf{72\%}
& \textbf{92\%} & \textbf{30\%}
& \textbf{81\%} & \textbf{62\%} \\
\bottomrule
\end{tabular}
}
\vspace{-1.0em}
\label{tab:sr_dr_posttrain}
\end{table*}

\textbf{Training-free approximation for intermediate cost gradient and few-steps distillation.} The exact gradient of intermediate costs requires an additional denoising-time condition and training steps~\citep{lu2023contrastive}. However, injecting cost gradient only during the final few denoising steps of guided sampling is effective in most scenarios~\citep{baoequivariant,xu2023imagereward}. Motivated by the few-step nature of guided sampling, we propose a training-free approximation to the intermediate cost gradient based on a Taylor expansion for $\nabla_{\va_t}c_{k,t}(\vs, \va_t)$ around $t=0$. Specifically, for sufficiently small $t$, we approximate the gradient at $\va_t$ by evaluating it at the posterior mean of the clean action, defined as $\va_{0\mid t} \triangleq \mathbb{E}_{q(\va_0 \mid \va_t, \vs)}[\va_0]$, yielding
$\nabla_{\va_t}c_{k,t}(\vs, \va_t) \approx \nabla_{\va_t} c_{k}\left(\vs, \va_{0\mid t}\right)$,
where $q(\va_0 \mid \va_t, \vs)$ denotes the posterior for the clean action $\va_0$ and $\va_{0\mid t}$ can be simply computed by re-formatting Eq.~(\ref{eq:diffusion_forward}) to a one-step reverse diffusion update~\citep{chungdiffusion}. This approximation is widely used in training-free guided sampling across diverse tasks~\citep{chungdiffusion,yu2023freedom}, including bimanual manipulation~\citep{deng2025safebimanual}.
The training-free approximation yields the few-step Curriculum Implicit Safety Teacher in Eq.~(\ref{eq:scheduled_distill_obj}) as:
\vspace{-0.3em}
\begin{equation}
\begin{aligned}
    & \quad \veps_{i}^{+}(\va_t, \vs, t) \approx \\
    &\begin{cases}
    \veps^{\text{old}}(\va_t, \vs, t) - \sum_{k=1}^m
\Delta\eta_i\,\lambda_k
\nabla_{\va_t} c_{k}(\vs,\va_{0\mid t})
, & t < t_c, \\
    \veps^{\text{old}}(\va_t, \vs, t), & t \ge t_c.
    \end{cases}
\end{aligned}
\label{eq:tf_teacher}
\end{equation}
This requires only the original differentiable cost functions. By restricting constraint injection to the final few steps (e.g., $t_c = 0.03$ in all experiments), the approach substantially reduces the computational overhead from the Jacobian evaluations. The theoretical and empirical justifications are provided in Appx.~\ref{subsec:approx_inter_constraint} and ablation studies in Sec.~\ref{sec:experiments}.

\vspace{-0.25em}
\section{Experiments}
\label{sec:experiments}

\subsection{Experimental Setup}

\textbf{Physical constraints.} 
We adopt the unsafe-behavior taxonomy for bimanual manipulation from~\citet{deng2025safebimanual}, which encompasses the majority of hazard patterns identified through extensive case studies. To evaluate \themethod, we select three representative constraint functions (i.e., Poking, Alignment, and Rotation) along with corresponding tasks (e.g., pouring water). The constraints require privileged environment states, which can be obtained directly through simulation APIs. For real-world experiments, we estimate these states using off-the-shelf 3D perception pipelines~\citep{huang2025rekep}. See Appx.~\ref{appx:pipeline} for more details.

\vspace{-0.25em}
\subsection{Simulation Evaluation}
\label{subsec:simulation}

\textbf{Environments and tasks.}
We evaluate these methods on bimanual manipulation tasks from RoboTwin~\citep{mu2024robotwin,chen2025robotwin} with a \emph{randomized} setup, requiring bimanual coordination and robust handling of safety-critical interactions. See Appx.~\ref{appx:simulation_env}~\&~\ref{appx:simulation_tasks} for details.

\textbf{Base policies and implementations.}
To demonstrate generality, we apply \themethod to multiple pretrained diffusion-based policies with a wide spectrum of model architectures, base capacities, input modalities, diffusion parameterizations, and sampling strategies: Diffusion Policy (DP)~\citep{chi2025diffusion}, 3D Diffusion Policy (DP3)~\citep{ze20243d}, which takes point clouds as model input, and generalist policies like RDT-1B~\citep{liurdt} and $\pi_{0.5}$~\citep{pmlr-v305-black25a}. During post-training, each policy collects data by rolling out on 1,000 training scenes. We perform full-parameter fine-tuning for DP and DP3, and apply Low-Rank Adaptation (LoRA)~\citep{hulora} to align RDT-1B and $\pi_{0.5}$. Additional details are provided in Appx.~\ref{appx:ours_implement}.

\textbf{Baselines.}
We organize baselines along two axes: learning paradigm (IL vs.\ RL) and data regime (off-policy vs.\ on-policy). Off-policy IL baselines include behavior cloning on intervened rollouts as in Probe, Learn, Distill (PLD)~\citep{xiao2025pld} and cloning on rollouts collected with the implicit safe teacher (guided rollouts), while off-policy RL is represented by iDQL~\citep{hansen2023idql}. To enable comparison between distillation and other objectives, we also evaluate offline distillation~\citep{meng2023distillation} with different types of rollouts. On-policy baselines use self-collected data and include IL methods (Rejection Fine-tuning (RFT) and online variant of PLD ($\text{PLD}_{\text{online}}$)) and RL methods including AWR~\citep{peng2019advantage}, QSM~\citep{psenka2024learning}, DIPO~\citep{yang2023policy}, and PPO-style diffusion optimization~\citep{schulman2017proximal,liu2025flow}. All methods are initialized from the same pretrained DP and are matched in environment interaction and update budgets. Details are provided in Appx.~\ref{subsec:off_policy_implement} and Appx.~\ref{subsec:on_policy_implement}.

\textbf{Metrics.}
We report \emph{Success Rate}, measuring task completion, and \emph{Safe Rate}, measuring the fraction of rollouts that satisfy \emph{all} physical safety demands. The details on the metrics computation are elaborated in Appx.~\ref{appx:simulation_rubric}. 

\textbf{Effectiveness of \themethod.} 
The aggregate performance across tasks is summarized in Table~\ref{tab:sr_dr_posttrain}. Overall, post-training with \themethod consistently improves both task completion and manipulation safety across all base policies. For instance, DP shows substantial gains in both success rate by 28\% and safety rate by 24\%. It indicates that the aligned behaviors become simultaneously more effective and more physically compliant, which facilitates the reconciliation between performance and safety~\citep{zhang2025safevla} through curriculum-based constraint distillation, which injects dense constraint gradients into the diffusion policy while progressively tightening safety enforcement to preserve behavioral fidelity. Furthermore, improvements are also more pronounced on difficult, precision-critical tasks (e.g., Pour Water and Stack Blocks), where minor misplacement can readily trigger constraint violations that cause failure. Qualitative results for simulated tasks are provided in Appx.~\ref{subsec:qual_results_sim}.

\begin{table}[t]
\centering
\small
\caption{\textbf{Performance comparison with on-policy baselines.} Both the Success Rate (Succ.) and Safe Rate (Safe) are reported.}
\vspace{-0.5em}
\renewcommand{\arraystretch}{1.05}
\resizebox{1.0\linewidth}{!}{
\begin{tabular}{l cc cc cc cc }
\toprule
\multirow{2}{*}{\quad Method}
& \multicolumn{2}{c}{\makecell[c]{Pick Dual\\Bottles}}
& \multicolumn{2}{c}{\makecell[c]{Handover\\Apple}}
& \multicolumn{2}{c}{\makecell[c]{Pour\\Water}}
& \multicolumn{2}{c}{\makecell[c]{Stack\\Blocks}} 
\\
\cmidrule(lr){2-3}\cmidrule(lr){4-5}\cmidrule(lr){6-7}\cmidrule(lr){8-9}
& Succ. & Safe
& Succ. & Safe
& Succ. & Safe
& Succ. & Safe\\
\midrule

\multicolumn{9}{l}{\footnotesize \textcolor{gray}{\textit{Imitation Learning}}}\\

\quad Base
& 67\% & 36\%
& 63\% & 64\%
& 38\% & 20\%
& 63\% & 24\%
\\

\quad $\text{PLD}_{\text{online}}$ 
& 93\% & 66\%
& 70\% & 70\%
& 55\% & 29\%
& 71\% & 32\%
\\

\midrule
\multicolumn{9}{l}{\footnotesize \textcolor{gray}{\textit{Reinforcement Learning}}}\\

\quad PPO
& 71\% & 37\%
& 67\% & 67\%
& 60\% & 31\%
& 73\% & 33\% 
\\

\quad DIPO 
& 77\% & 43\%
& 65\% & 68\%
& 67\% & 29\%
& 66\% & 36\% 
\\

\midrule
\multicolumn{9}{l}{\footnotesize \textcolor{gray}{\textit{Distillation}}}\\

\rowcolor{rowblue!40} \quad Ours
& \textbf{96\%} & \textbf{89\%}
& \textbf{81\%} & \textbf{82\%}
& \textbf{86\%} & \textbf{64\%}
& \textbf{80\%} & \textbf{48\%} 
\\
\bottomrule
\end{tabular}
}
\label{tab:on_policy_baselines}
\end{table}

\begin{figure}[t]
    \centering
    \begin{minipage}{0.5\linewidth}
        \centering
        \includegraphics[width=\linewidth]{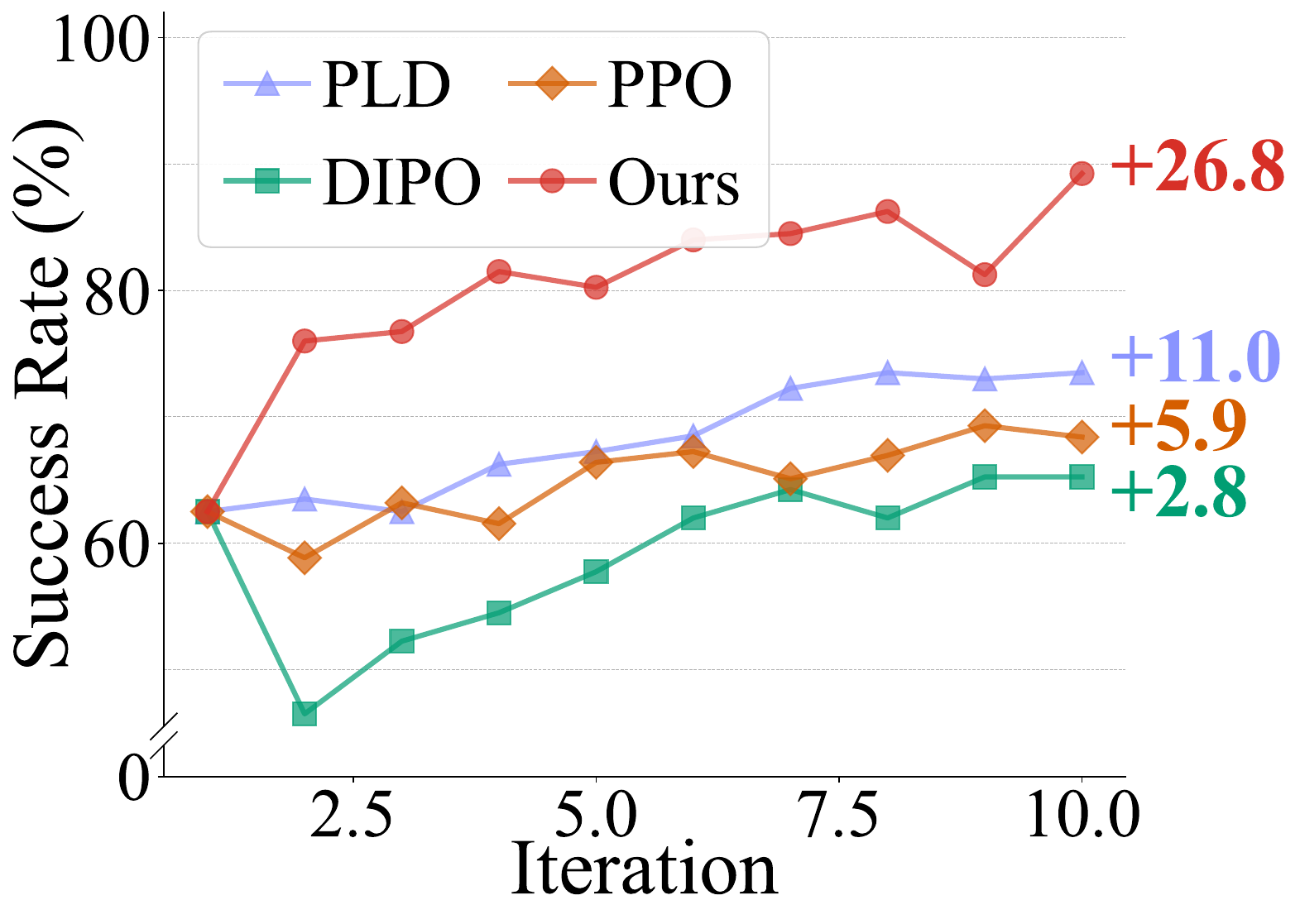}
    \end{minipage}\hfill
    \begin{minipage}{0.5\linewidth}
        \centering
        \includegraphics[width=\linewidth]{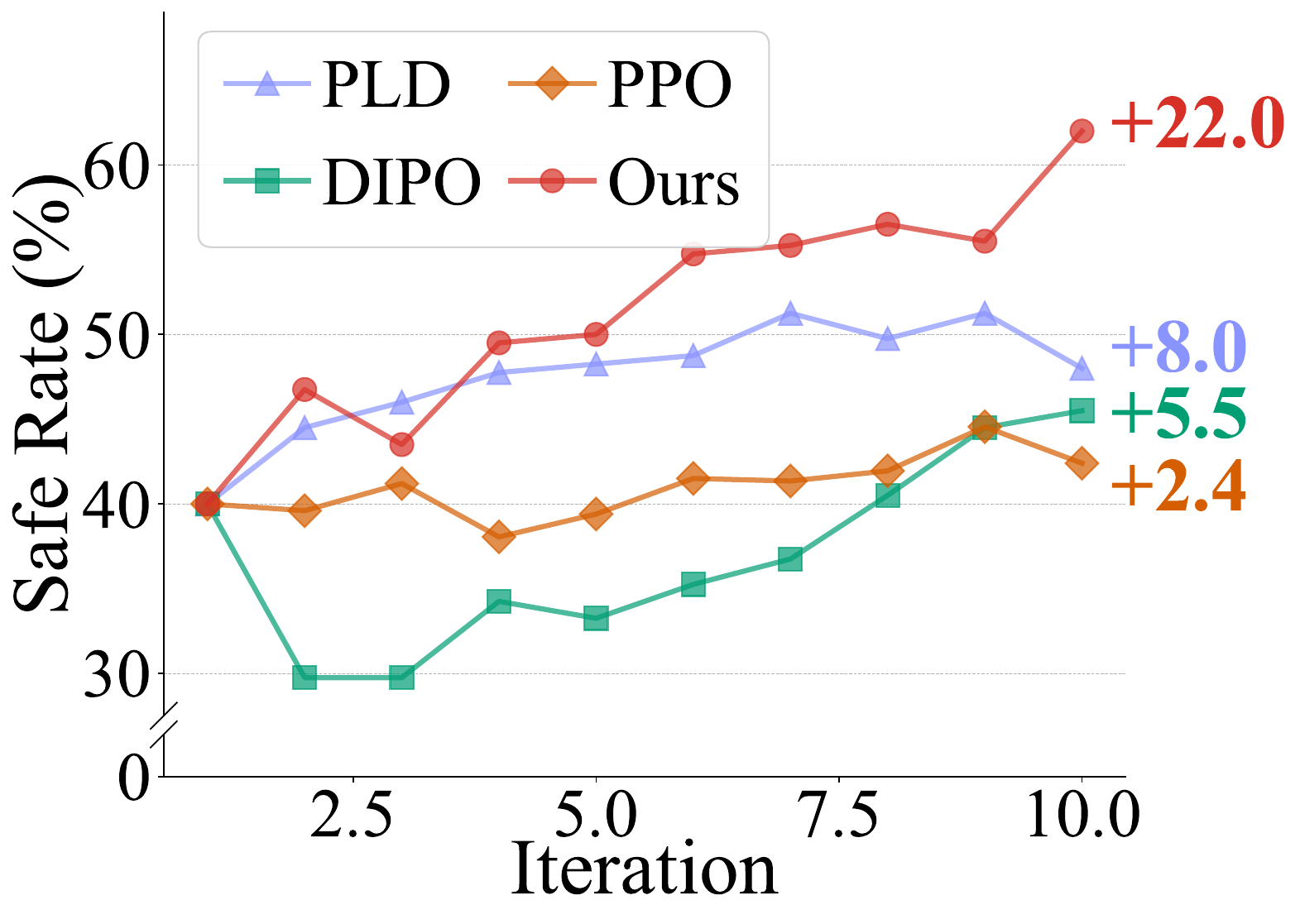}
    \end{minipage}
    \vspace{-0.2em}
    \caption{\textbf{Training efficiency comparison with on-policy baselines.} Success Rate (left) and Safe Rate (right), are averaged over four tasks across training iterations. Our method demonstrates the most training efficiency and stability.}
    \vspace{-1.0em}
    \label{fig:on_policy_curve}
\end{figure}

\begin{table}[t]
\centering
\caption{\textbf{Performance comparison with off-policy baselines.} Success Rate (Succ.) and Safe Rate (Safe) are reported for each task. ``RO'' denotes abbreviation for Rollout.}
\vspace{-0.5em}
\resizebox{1.0\linewidth}{!}{
\renewcommand{\arraystretch}{1.05}
\begin{tabular}{l  cc cc cc cc}
\toprule
\multirow{2}{*}{\quad Method}
& \multicolumn{2}{c}{\makecell[c]{Pick Dual\\ Bottles}}
& \multicolumn{2}{c}{\makecell[c]{Handover\\ Apple}}
& \multicolumn{2}{c}{\makecell[c]{Pour\\ Water}}
& \multicolumn{2}{c}{\makecell[c]{Stack\\ Blocks}} \\
\cmidrule(lr){2-3}\cmidrule(lr){4-5}\cmidrule(lr){6-7}\cmidrule(lr){8-9}
& Succ. & Safe
& Succ. & Safe
& Succ. & Safe
& Succ. & Safe\\
\midrule
\multicolumn{9}{l}{\footnotesize \textcolor{gray}{\textit{Imitation Learning}}}\\

\quad Base
& 67\% & 36\%
& 63\% & 64\%
& 38\% & 20\%
& 63\% & 24\% \\

\quad Guided RO
& 82\% & 74\%
& 74\% & 79\%
& 70\% & 41\%
& 66\% & \textbf{59\%} \\

\quad PLD
& 70\% & 43\%
& 70\% & 72\%
& 54\% & 47\%
& 71\% & 40\% \\

\midrule
\multicolumn{9}{l}{\footnotesize \textcolor{gray}{\textit{Reinforcement Learning}}}\\

\quad iDQL
& 74\% & 40\%
& 63\% & 70\%
& 62\% & 28\%
& 65\% & 31\% \\

\midrule
\multicolumn{9}{l}{\footnotesize \textcolor{gray}{\textit{Distillation}}}\\
\quad Expert RO
& 70\% & 41\%
& 71\% & 68\%
& 67\% & 24\%
& 65\% & 27\% \\

\quad Self RO
& 76\% & 44\%
& 70\% & 67\%
& 71\% & 28\%
& 72\% & 34\% \\

\quad Guided RO
& 70\% & 45\%
& 72\% & 78\%
& 72\% & 31\%
& 77\% & 28\% \\

\cmidrule(l{1.4em}r{1.0em}){1-9}
\rowcolor{rowblue!40} \quad Ours
& \textbf{96\%} & \textbf{89\%}
& \textbf{81\%} & \textbf{82\%}
& \textbf{86\%} & \textbf{64\%}
& \textbf{80\%} & 48\% \\
\bottomrule
\end{tabular}
}
\vspace{-1.0em}
\label{tab:off_policy_baselines}
\end{table}

\begin{figure*}[h]
    \centering
    \includegraphics[width=1.0\linewidth]{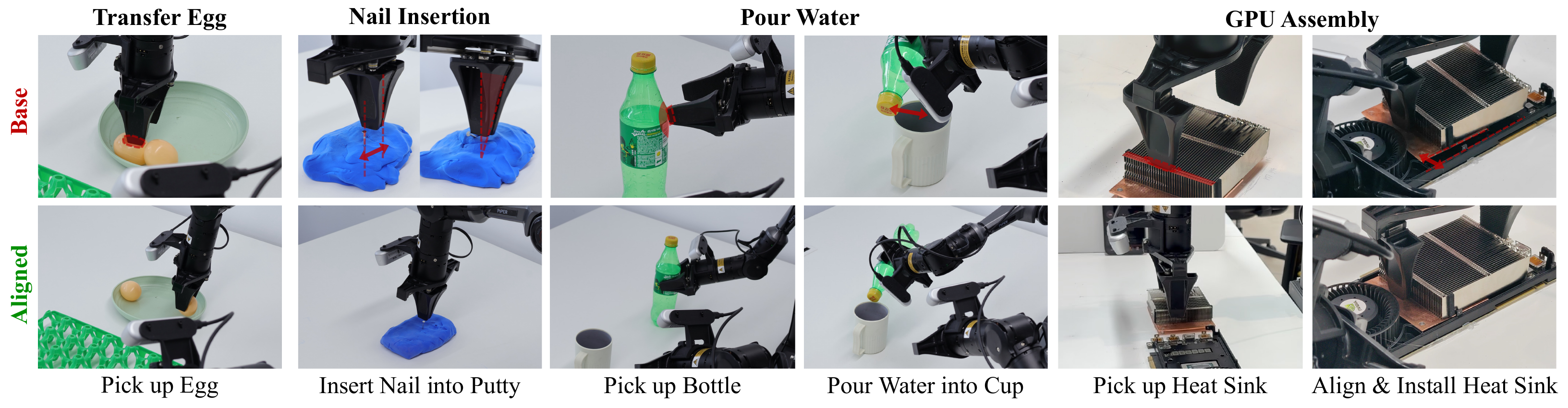}
    \vspace{-1.5em}
    \caption{\textbf{Qualitative results of real world evaluation.} base policy (top) v.s. policy aligned by \themethod (bottom) across four manipulation tasks. \themethod reduces unsafe contacts and improves task completion by correcting key failure modes: avoiding poking to securely grasp the egg (\textit{Transfer Egg}); aligning the gripper with the nail head, preventing lateral or tilted insertion (\textit{Nail Insertion}); eliminating bottle poking and cup-rim misalignment (\textit{Pour Water}); avoiding poking heat-sink and corrects installation misalignment (\textit{GPU Assembly}).}
    \label{fig:real_world_qual_eval}
    \vspace{-1.0em}
\end{figure*}

\begin{figure}[h]
    \centering
    \begin{minipage}{0.5\linewidth}
        \centering
        \includegraphics[width=\linewidth]{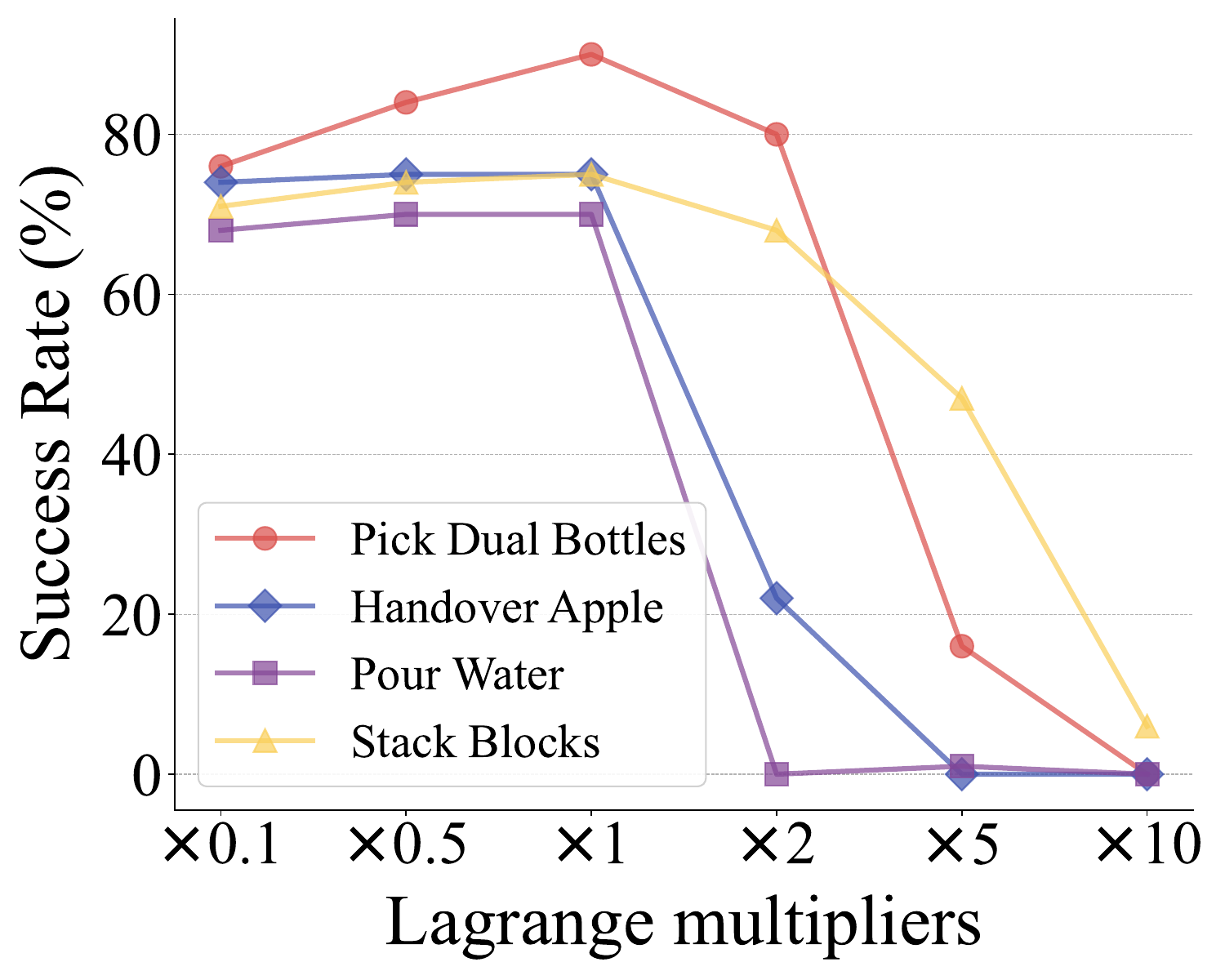}
    \end{minipage}\hfill
    \begin{minipage}{0.5\linewidth}
        \centering
        \includegraphics[width=\linewidth]{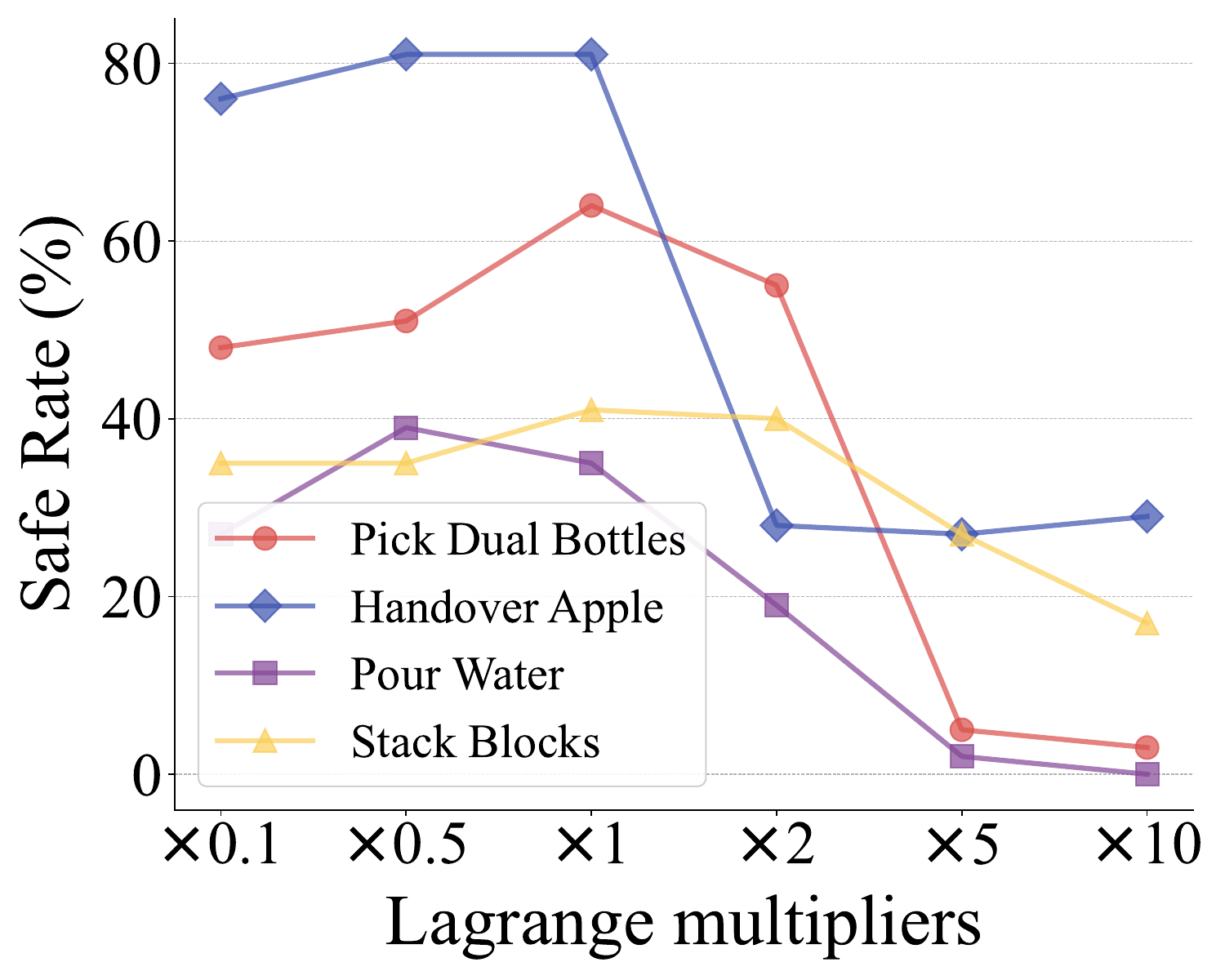}
    \end{minipage}
    \vspace{-0.5em}
    \caption{\textbf{Ablation study on Lagrange multipliers.} We report the performance of \themethod after 5 iterations to reflect early-stage convergence behavior.}
    \vspace{-1.0em}
    \label{fig:ablation_lagrange}
\end{figure}

\textbf{Comparison to on-policy baselines.}
We further compare \themethod with representative on-policy baselines in Table~\ref{tab:on_policy_baselines}. Across all four tasks, \themethod attains the highest Success Rate and Safe Rate, indicating improved constraint satisfaction without sacrificing task performance. Beyond effectiveness, \themethod is markedly more stable: several on-policy alternatives (e.g., AWR, RFT, and QSM) inevitably collapse to near-zero success in our setting (See Appx.~\ref{subsec:collapsed_results} for details), and RL-based baselines often require additional stabilization mechanisms (e.g., soft target updates in DIPO or KL regularization in PPO). In contrast, \themethod leverages direct supervision from gradients of differentiable safety costs, avoiding the high-variance credit assignment inherent to policy-gradient updates whose quality is dominated by the sampled batch~\citep{nota2020policy}. This stability is further reflected in Fig.~\ref{fig:on_policy_curve}, where \themethod improves smoothly without the early drops observed in DIPO or the oscillations of PPO. 
Finally, \themethod is the most efficient, converging more than 5$\times$ faster because constraint distillation provides dense, per-timestep supervision throughout the diffusion process. RL-based methods instead rely on coarse trajectory-level signals and costly, noisy likelihood estimation, and therefore require substantially more interaction to reach comparable gains. Additional results are provided in Appx.~\ref{sec:addtional_results}.

\textbf{Comparison to off-policy baselines.} We compare \themethod against off-policy alternatives spanning imitation-based approaches (Guided RO and PLD) and an RL-based method (iDQL). As shown in Table~\ref{tab:off_policy_baselines}, \themethod attains higher success and safety rates in most settings, with particularly large gains on \textit{Pick Dual Bottles} and \textit{Pour Water}.
We also implement an off-policy distillation variant under different data sources. Distillation is especially effective on more complex tasks (e.g., \textit{Pour Water} and \textit{Stack Blocks}), where collecting sufficient rollouts for imitation is difficult. However, distribution tilting via distillation can leverage limited data more effectively. Moreover, off-policy variants of \themethod underperform the on-policy iterative update (ours) under matched gradient budgets, underscoring the importance of on-policy with the reverse-KL objective in Eq.~(\ref{eq:kl_safe_projection_rewardfree}), which mitigates forgetting in utility-free circumstances~\citep{shenfeld2025rl} and avoids the constraint leakage induced by the mode-covering behavior of forward-KL.

\begin{figure}
    \centering
    \begin{minipage}{0.5\linewidth}
        \centering
        \includegraphics[width=\linewidth]{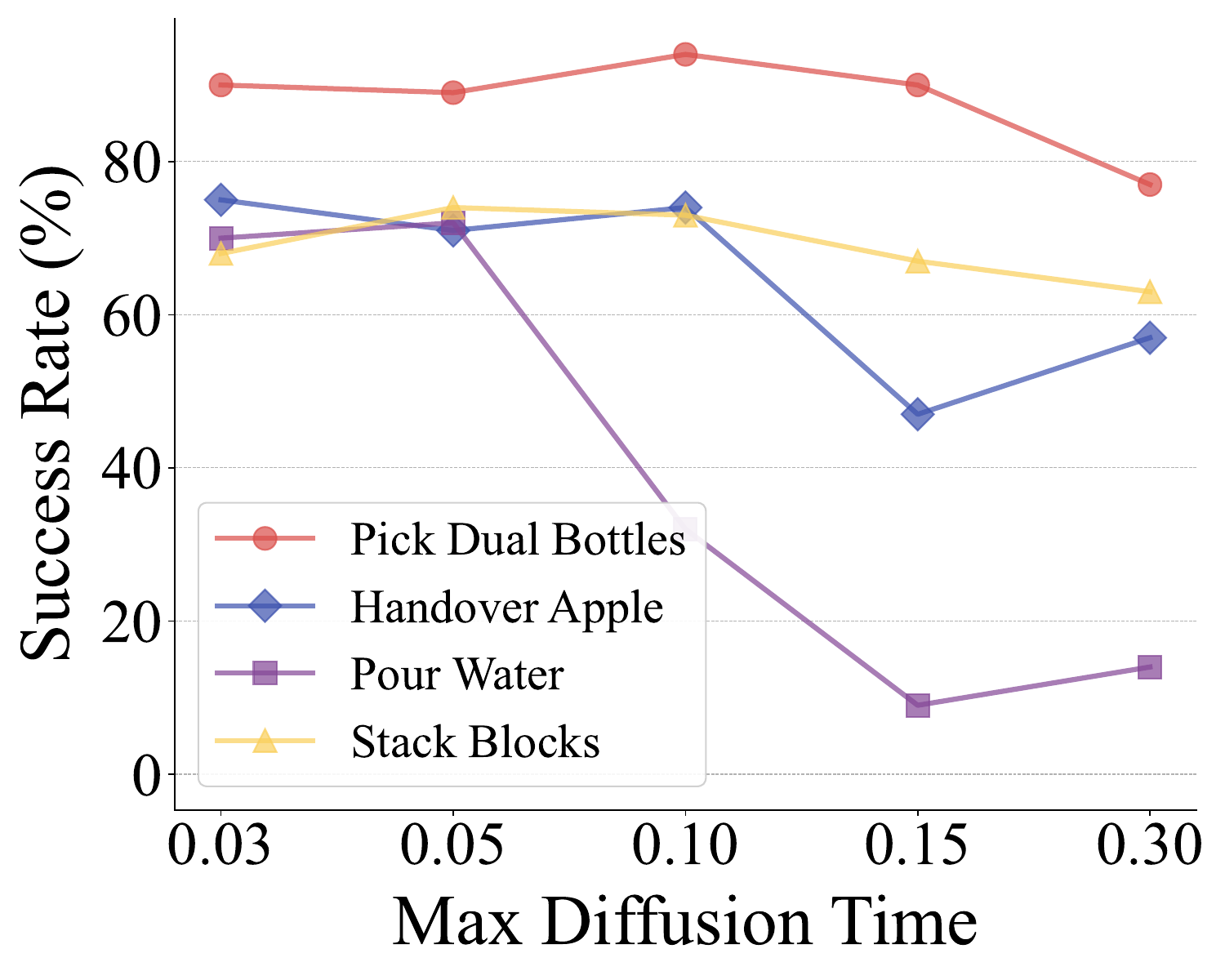}
    \end{minipage}\hfill
    \begin{minipage}{0.5\linewidth}
        \centering
        \includegraphics[width=\linewidth]{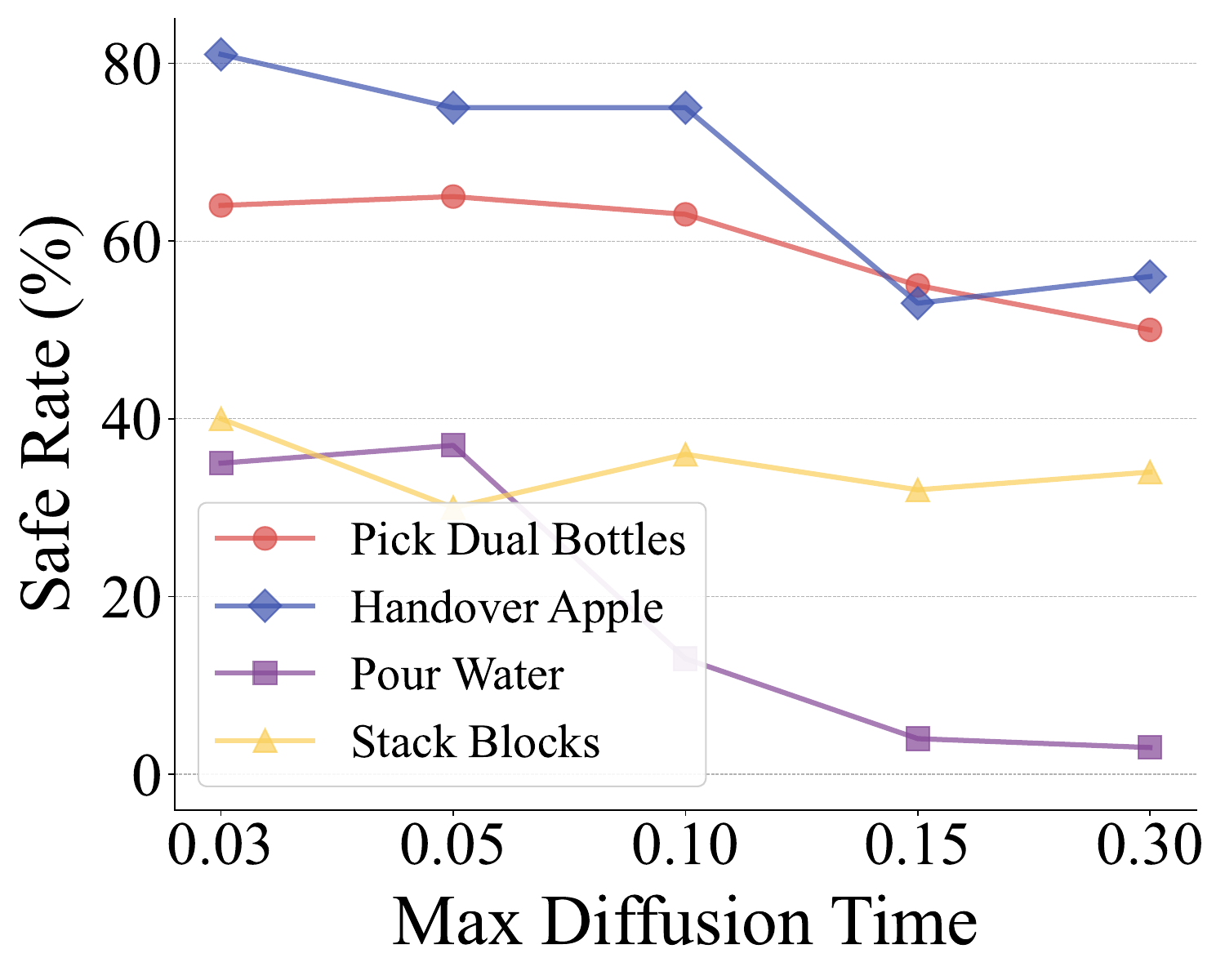}
    \end{minipage}
    \vspace{-0.5em}
    \caption{\textbf{Ablation study on max diffusion time for cost guidance $t_c$}. We report the performance of \themethod after 5 iterations.}
    \label{fig:ablation_denosing_step}
    \vspace{-1.5em}
\end{figure}

\textbf{Ablation Studies.} \textbf{1) Curriculum distillation.} We first ablate the curriculum used to progressively tighten safety constraints during training. Specifically, we compare direct distillation (Eq.~(\ref{eq:distill_obj})) to curriculum distillation with a linear schedule $\eta_i=i/N$. We find that direct distillation induces catastrophic forgetting with significant performance drops (Fig.~\ref{fig:collapsed_curve}) in contrast with the smoother improvement dynamics yielded by curriculum-based enforcement (Fig.~\ref{fig:on_policy_curve}). \textbf{2) Impact of Lagrange multipliers $\lambda$.} Next, a grid search in Fig.~\ref{fig:ablation_lagrange} reveals a trade-off between constraint enforcement and policy fidelity: overly large $\lambda$ leads to irreversible OOD failure similar to direct distillation, whereas overly small $\lambda$ yields slow and limited improvements. The best performance consistently occurs near a turning point, and \themethod remains robust within a practical multiplier range. \textbf{3) Efficient distillation with few diffusion steps $t_c$.} As shown in Fig.~\ref{fig:ablation_denosing_step}, injecting constraints only for a small number of late diffusion steps achieves strong alignment while maintaining the approximation fidelity in Sec.~\ref{subsec:implement}, whereas increasing $t_c$ often degrades performance and incurs additional computation, supporting our design of few-steps constraint distillation. \textbf{4) Sensitivity to rollouts and UTD ratios.} Finally, Table~\ref{tab:ablation_rollout_inner_epoch} shows that performance is relatively stable across a wide range of rollout counts per iteration and UTD ratios (number of epochs $E$), indicating that \themethod does not rely on excessive sampling or aggressive inner-loop optimization, but benefits from the dense and stable supervision by constraint distillation. More details are elaborated in Appx.~\ref{subsec:more_ablation}.

\begin{table}
\centering
\caption{\textbf{Effect of rollout count per iteration and update-to-data (UTD) ratio}. We report Success Rate and Safe Rate on each task after 5 iterations. $^\dagger$ marks the default hyperparameters for DP.}
\vspace{-0.5em}
\resizebox{1.0\linewidth}{!}{
\renewcommand{\arraystretch}{1.05}
\begin{tabular}{l cc cc cc cc}
\toprule
\multirow{2}{*}{\makecell[c]{\#Rollouts / \\ UTD Ratio}}
& \multicolumn{2}{c}{\makecell[c]{Pick Dual\\ Bottles}}
& \multicolumn{2}{c}{\makecell[c]{Handover\\ Apple}}
& \multicolumn{2}{c}{\makecell[c]{Pour\\ Water}}
& \multicolumn{2}{c}{\makecell[c]{Stack\\ Blocks}} \\
\cmidrule(lr){2-3}\cmidrule(lr){4-5}\cmidrule(lr){6-7}\cmidrule(lr){8-9}
& Succ. & Safe
& Succ. & Safe
& Succ. & Safe
& Succ. & Safe\\
\midrule

\quad 144 / 100
& 80\% & 48\%
& 47\% & 51\%
& 62\% & 29\%
& 67\% & 38\% \\

\quad 288 / 50
& 83\% & 51\%
& 60\% & 64\%
& 67\% & 15\%
& 66\% & 30\% \\

\quad 288 / 100$^\dagger$
& 90\% & 64\%
& 75\% & 81\%
& 70\% & 35\%
& 68\% & 40\% \\

\quad 288 / 200
& 91\% & 67\%
& 45\% & 55\%
& 71\% & 45\%
& 69\% & 44\% \\

\bottomrule
\end{tabular}
}
\vspace{-1.0em}
\label{tab:ablation_rollout_inner_epoch}
\end{table}

\begin{figure}[t]
    \centering
    \includegraphics[width=1.0\linewidth]{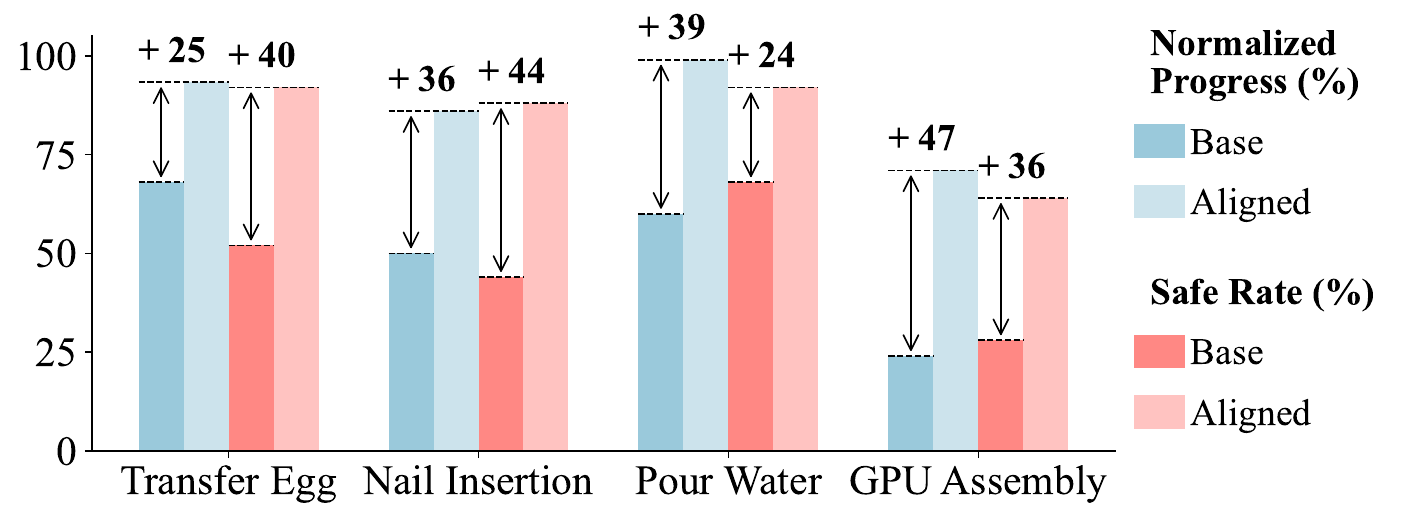}
    \vspace{-1.5em}
    \caption{\textbf{Quantitative results of real-world evaluation.}
    We report normalized task progress and safe rate for the base policy vs. our aligned policy. \themethod improves both metrics across all tasks, with the largest metric gain observed on GPU Assembly. 
    }
    \vspace{-1.25em}
    \label{fig:real_world_quan_eval}
\end{figure}

\vspace{-0.25em}
\subsection{Real-World Evaluation}
\label{subsec:realworld}

We evaluate \themethod on safety-critical real-world manipulation tasks, including \emph{Pour Water}, \emph{Nail Insertion}, \emph{Transfer Egg}, and \emph{GPU Assembly}, and index corresponding constraints as in simulation (refer to Fig.~\ref{fig:real_world_task} \& Appx.~\ref{subsec:real_world_task} for more details). \emph{GPU Assembly} is particularly challenging, requiring millimeter-level precision to align heat sink positioning pins and mounting holes (Fig.~\ref{fig:teaser}). We use a scaled-down, instruction-free RDT~\citep{liurdt} variant as the base policy, trained from scratch on tele-operated demonstrations for each task (Appx.~\ref{sec:real_implement}).
\themethod substantially improves the physical Safe Rate by 36.0\% and the average task progress by 36.8\% across tasks as shown in Fig.~\ref{fig:real_world_quan_eval}. Furthermore, qualitatively (Fig.~\ref{fig:real_world_qual_eval}), \themethod induces fine-grained, constraint-driven corrections that avoid hazardous interactions. These results validate post-training safety alignment for deploying diffusion-based policies in safety-critical robotic systems.
\vspace{-0.25em}
\section{Conclusion}
\label{sec:conclusion}

We introduced \themethod, a post-training framework for aligning diffusion manipulation policies with physical safety constraints while preserving expressive behavior. \themethod distills constraint gradients into the diffusion policies via a reverse-KL objective on self-generated rollouts, and uses curriculum distillation to progressively tighten constraints with controlled policy shifts, mitigating catastrophic forgetting in reward-free alignment. 
Extensive simulation and real-world experiments demonstrate that \themethod consistently reduces safety violations while simultaneously improving task success, highlighting its potential as a practical and scalable approach for deploying diffusion policies in safety-critical embodied manipulation scenarios.

\textbf{Limitations and future work.} While \themethod effectively improves safety alignment for diffusion-based manipulation policies, several limitations remain. The quality of safety alignment is inherently coupled with the reliability of upstream perception and vision models, whose inaccuracies under severe occlusions or visual ambiguities may propagate into the safety supervision signals. In addition, although our approach avoids any deployment-time overhead, the self-evolving post-training process still requires additional computation for iterative rollout collection. Moreover, the current formulation primarily focuses on geometric and kinematic safety constraints, whereas extending the framework to handle high-frequency dynamic interactions and force-sensitive contact behaviors remains an important direction for future work.

\vspace{-0.25em}
\section*{Acknowledgements}


This work was supported by NSFC Projects under Nos. U25B6003, 92370124, 92248303, 62276149, 62350080, 62061136001, and  62076147, BNRist under Grant BNR2022RC01006, Tsinghua Institute for Guo Qiang, the High Performance Computing Center, Tsinghua University, the Major Key Project of PCL under Grant PCL2024A06 and PCL2025A10, and the Shenzhen Science and Technology Program under Grant RCJC20231211085918010. Jun Zhu was also supported by the XPlorer Prize.

\vspace{-0.25em}
\section*{Impact Statement}

This paper studies post-training physical safety alignment for diffusion-based embodied manipulation policies, aiming to improve constraint compliance while preserving task capability. A potential positive impact is to enable safer deployment in safety-critical settings (e.g., manufacturing or assistive manipulation) by reducing hazardous behaviors without additional test-time guard modules or extensive expert supervision. However, stronger manipulation policies may increase dual-use risk, including misuse in harmful applications or deployment under misspecified or adversarially chosen constraints. We advocate responsible use by treating alignment as a risk-reduction mechanism rather than a formal safety guarantee.


\bibliography{main}
\bibliographystyle{icml2026}

\newpage
\appendix
\onecolumn

\section{Proofs and Additional Theory}
\label{sec:proof}

\subsection{Intermediate Cost}
\label{subsec:intermediate_constraints}

Following~\citet{lu2023contrastive}, we formally analyze the structure of intermediate cost $c_{k,t}(\va_t, \vs)$. By rewriting $\pi^*_0 \triangleq \pi^*$ and $\mu_{\phi,0} \triangleq \mu_{\phi}$, the reformatted constrained optimal policy in Eq.~(\ref{eq:boltzmann_opt}) is:
\begin{equation}
\pi^*_0(\va_0 \mid \vs)
\;\propto\;
\mu_{\phi,0}(\va_0\mid \vs)\,
\exp\!\Big(- \sum_{k=1}^m \lambda_k\, c_k(\vs,\va_0)\Big),
\nonumber
\end{equation}    
Extending~\citet{lu2023contrastive}, the exact intermediate cost is defined as follows.
\begin{theorem}[Intermediate Cost Gradient] For $t \in (0, 1]$, we have
\begin{equation}
    \mu_{\phi,t\mid 0}(\va_t \mid \va_0, \vs) \triangleq \pi^*_{t \mid 0}(\va_t \mid \va_0, \vs) = \mathcal{N}(\va_t \mid \alpha_t \va_0, \sigma_t^2\mI).
    \nonumber
\end{equation}
Denote $\pi^*_t(\va_t \mid \vs) \triangleq \int \pi^*_{t \mid 0}(\va_t \mid \va_0, \vs) \pi^*_0(\va_0 \mid \vs) \mathrm{d}\va_0$ and $\mu_{\phi,t}(\va_t \mid \vs) \triangleq \int \mu_{\phi,t \mid 0}(\va_t \mid \va_0, \vs) \mu_{\phi,0}(\va_0 \mid \vs) \mathrm{d}\va_0$ as the marginal distributions at time $t$; the \textrm{Intermediate Cost} is defined as
\begin{equation}
    c_{k,t}(\va_t, \vs) \triangleq 
    \begin{cases}
    c_{k}(\va_0, \vs), & t = 0, \\
   - \frac{1}{\lambda_k}\log\E_{\pi^*_{0 \mid t}(\va_0 \mid \va_t, \vs)}\bigg[\exp(-\lambda_k \cdot c_k(\va_0, \vs))\bigg], & 0< t \le 1.
    \end{cases}
\label{eq:intermediate_constraint}
\end{equation}
And their score functions satisfy
\begin{equation}
    \nabla_{\va_t}\pi^*_t(\va_t\mid\vs,t) = \underbrace{\nabla_{\va_t}\mu_{\phi}(\va_t\mid\vs,t)}_{=\veps_{\phi}(\va_t, \vs, t) / \sigma_t} - \underbrace{\sum_{k=1}^m\lambda_k \nabla_{\va_t} c_{k,t}(\va_t, \vs)}_{\text{intermediate cost gradient (intractable)}}.
\end{equation}
\label{thm:intermediate_constraints}
\end{theorem}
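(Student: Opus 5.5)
The plan is a direct Bayes-rule computation on the marginals, in the spirit of the contrastive energy prediction argument of \citet{lu2023contrastive}. Throughout I read $\nabla_{\va_t}\pi^*_t$ and $\nabla_{\va_t}\mu_{\phi}$ as gradients of \emph{log}-densities (scores), consistent with Eq.~(\ref{eq:score_interp}).

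\textbf{Step 1: factor the tilted marginal.} Both processes share the Gaussian forward kernel $\mathcal{N}(\va_t\mid\alpha_t\va_0,\sigma_t^2\mI)$; this identity of kernels is the first claim of the theorem and holds by construction. Write $\pi^*_0=\mu_{\phi,0}\,e^{-L(\vs,\va_0)}/Z(\vs)$ with $L\triangleq\sum_k\lambda_k c_k$, where $Z(\vs)$ does not depend on $\va_t$. Then
\[
\pi^*_t(\va_t\mid\vs)=\frac{1}{Z(\vs)}\int \mathcal{N}(\va_t\mid\alpha_t\va_0,\sigma_t^2\mI)\,\mu_{\phi,0}(\va_0\mid\vs)\,e^{-L}\,\mathrm{d}\va_0 .
\]
Multiplying and dividing by $\mu_{\phi,t}(\va_t\mid\vs)$ and using Bayes' rule for the base posterior $\mu_{\phi,0\mid t}(\va_0\mid\va_t,\vs)=\mu_{\phi,t\mid0}\,\mu_{\phi,0}/\mu_{\phi,t}$ gives
\[
\pi^*_t(\va_t\mid\vs)=\frac{1}{Z(\vs)}\,\mu_{\phi,t}(\va_t\mid\vs)\,\E_{\mu_{\phi,0\mid t}(\va_0\mid\va_t,\vs)}\big[e^{-L(\vs,\va_0)}\big].
\]

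\textbf{Step 2: take the log-gradient.} Since $Z$ is constant in $\va_t$, the score of $\pi^*_t$ is the base score, which equals $\veps_\phi/\sigma_t$ by Eq.~(\ref{eq:score_interp}), plus $\nabla_{\va_t}\log\E[e^{-L}]$. So the intermediate aggregated cost is $-\log\E_{0\mid t}[e^{-L}]$. Its gradient is intractable because the conditional expectation depends on $\va_t$ through the posterior; this matches the underbraced term. At $t=0$ the posterior is a Dirac mass at $\va_0$, so this term reduces to $L(\vs,\va_0)$, which is consistent with the first case of Eq.~(\ref{eq:intermediate_constraint}).

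\textbf{Step 3: reconcile with the stated per-$k$ definition. This is the main obstacle.} Two discrepancies need care.

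First, the expectation in Eq.~(\ref{eq:intermediate_constraint}) is written under $\pi^*_{0\mid t}$ rather than $\mu_{\phi,0\mid t}$. Applying Bayes with $\pi^*$ gives $\E_{\pi^*_{0\mid t}}[e^{+L}]=\mu_{\phi,t}/(Z\,\pi^*_t)$. The correct identity under $\pi^*_{0\mid t}$ therefore carries the sign $e^{+L}$. I would first try to show that the two readings yield the same gradient: the $\mu_{\phi,0\mid t}$ form with $e^{-L}$ and the $\pi^*_{0\mid t}$ form with $e^{+L}$ and the opposite overall sign. I would state the result in whichever form is exact, most naturally the $\mu_{\phi,0\mid t}$ one following \citet{lu2023contrastive}.

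Second, $-\log\E[e^{-\sum_k\lambda_k c_k}]$ splits into $\sum_k\lambda_k c_{k,t}$ with the per-$k$ definition only when the factors $e^{-\lambda_k c_k}$ decorrelate under the posterior. Examples include $m=1$, or costs acting on disjoint action coordinates with a factorized posterior, as is plausible for separate arms in bimanual settings. Absent this, I would define the aggregated intermediate cost $\sum_k\lambda_k c_{k,t}\triangleq-\log\E_{0\mid t}[e^{-L}]$ and note that the per-$k$ formula is exact under that factorization condition. Either way, the score identity of the theorem then follows from Step 2.
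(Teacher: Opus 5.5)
Your Steps 1--2 are the paper's proof: it also writes $\pi^*_0=\mu_{\phi,0}e^{-L}/Z(\vs)$. It then inserts the base posterior $\mu_{\phi,0\mid t}$ to get $\pi^*_t=\mu_{\phi,t}\,\E_{\mu_{\phi,0\mid t}}[e^{-L}]/Z$ and takes $\nabla_{\va_t}\log$. Both Step~3 caveats are correct, and they mark exactly the two places where the paper's own argument is loose. Your proposal is therefore the more careful version of the same route.

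\emph{Posterior.} The paper's proof defines $c_{k,t}$ with $\mu_{\phi,0\mid t}$, not with the $\pi^*_{0\mid t}$ written in the statement. It silently adopts your resolution. The two readings you plan to compare are the same function, both equal to $\log(\mu_{\phi,t}/(Z\pi^*_t))$, so no separate gradient argument is needed. The literal reading ($\pi^*_{0\mid t}$ with $e^{-L}$) is genuinely different. By Jensen, $-\log\E_{\pi^*_{0\mid t}}[e^{-L}]\le\E_{\pi^*_{0\mid t}}[L]\le\log\E_{\pi^*_{0\mid t}}[e^{L}]$, with equality only if $L$ is a.s.\ constant under the posterior. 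So the statement as written cannot be proved, and your choice is forced.

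\emph{Per-$k$ split.} The paper writes $\exp(-\sum_k\lambda_k c_{k,t})=\prod_k\E_{\mu_{\phi,0\mid t}}[e^{-\lambda_k c_k}]$ and concludes ``therefore'' $\pi^*_t\propto\mu_{\phi,t}\exp(-\sum_k\lambda_k c_{k,t})$. This silently replaces $\E_{\mu_{\phi,0\mid t}}[\prod_k e^{-\lambda_k c_k}]$ by $\prod_k\E_{\mu_{\phi,0\mid t}}[e^{-\lambda_k c_k}]$. Your fix is right: state the identity for the aggregated cost $-\log\E_{\mu_{\phi,0\mid t}}[e^{-L}]$. The per-$k$ form holds exactly when the ratio of those two quantities is independent of $\va_t$, for instance when $m=1$.

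Be less optimistic that bimanual settings supply this factorization. The alignment cost $c_2$ relates objects held by both arms, and a coordinated base policy need not have a factorized posterior. The aggregated form is therefore the one to state here. Nothing downstream is lost, since only $\sum_k\lambda_k\nabla_{\va_t}c_{k,t}$ enters the teacher in Eq.~(\ref{eq:implicit_safe_teacher}) and its curriculum version.
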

\begin{proof}
We first note that the forward diffusion kernel is independent of the data distribution, hence for any $t\in(0,1]$,
$$
\mu_{\phi,t\mid 0}(\va_t\mid \va_0,\vs)
=
\pi^*_{t\mid 0}(\va_t\mid \va_0,\vs)
=
\mathcal{N}(\va_t\mid \alpha_t \va_0,\sigma_t^2\mI).
$$
Define the normalizing constant of the constrained optimum at $t=0$ as
$$
Z(\vs)
\triangleq
\int
\mu_{\phi,0}(\va_0\mid \vs)\,
\exp\!\Big(-\sum_{k=1}^m \lambda_k c_k(\vs,\va_0)\Big)\,
\mathrm{d}\va_0
=
\E_{\mu_{\phi,0}(\cdot\mid\vs)}
\Big[\exp\!\big(-\sum_{k=1}^m \lambda_k c_k(\vs,\va_0)\big)\Big].
$$
Then Eq.~(\ref{eq:boltzmann_opt}) can be written as
$$
\pi^*_0(\va_0\mid \vs)
=
\frac{
\mu_{\phi,0}(\va_0\mid \vs)\,
\exp\!\Big(-\sum_{k=1}^m \lambda_k c_k(\vs,\va_0)\Big)
}{
Z(\vs)
}.
$$
For $t\in(0,1]$, the marginal $\pi_t^*(\va_t\mid\vs)$ is
$$
\pi_t^*(\va_t\mid\vs)
=
\int
\pi^*_{t\mid 0}(\va_t\mid\va_0,\vs)\,
\pi_0^*(\va_0\mid\vs)\,
\mathrm{d}\va_0
=
\int
\mu_{\phi,t\mid 0}(\va_t\mid\va_0,\vs)\,
\mu_{\phi,0}(\va_0\mid\vs)\,
\frac{\exp\!\Big(-\sum_{k=1}^m \lambda_k c_k(\vs,\va_0)\Big)}{Z(\vs)}
\,\mathrm{d}\va_0.
$$
Let
$$
\mu_{\phi,t}(\va_t\mid\vs)
\triangleq
\int
\mu_{\phi,t\mid 0}(\va_t\mid\va_0,\vs)\,
\mu_{\phi,0}(\va_0\mid\vs)\,
\mathrm{d}\va_0,
$$ and
$$
\mu_{\phi,0\mid t}(\va_0\mid\va_t,\vs)
\triangleq
\frac{\mu_{\phi,t\mid 0}(\va_t\mid\va_0,\vs)\mu_{\phi,0}(\va_0\mid\vs)}
{\mu_{\phi,t}(\va_t\mid\vs)}.
$$
Substituting the posterior identity yields
$$
\pi_t^*(\va_t\mid\vs)
=
\mu_{\phi,t}(\va_t\mid\vs)\,
\frac{
\E_{\mu_{\phi,0\mid t}(\cdot\mid \va_t,\vs)}
\Big[\exp\!\big(-\sum_{k=1}^m \lambda_k c_k(\vs,\va_0)\big)\Big]
}{
Z(\vs)
}
$$
To match the multiplicative form in Eq.~(\ref{eq:boltzmann_opt}), define the intermediate cost for each $k$ by
$$
c_{k,t}(\va_t,\vs)
\triangleq
\begin{cases}
c_k(\vs,\va_0), & t=0,\\
-\log\E_{\mu_{\phi,0\mid t}(\va_0\mid \va_t,\vs)}
\Big[\exp\!\big(-\lambda_k c_k(\vs,\va_0)\big)\Big]\big/\lambda_k,
& 0<t\le 1,
\end{cases}
$$
so that
$$
\exp\!\Big(-\sum_{k=1}^m \lambda_k c_{k,t}(\va_t,\vs)\Big)
=
\prod_{k=1}^m
\E_{\mu_{\phi,0\mid t}(\va_0\mid \va_t,\vs)}
\Big[\exp\!\big(-\lambda_k c_k(\vs,\va_0)\big)\Big],
$$
and therefore
$$
\pi_t^*(\va_t\mid\vs)
\;\propto\;
\mu_{\phi,t}(\va_t\mid\vs)\,
\exp\!\Big(-\sum_{k=1}^m \lambda_k c_{k,t}(\va_t,\vs)\Big).
$$
Taking gradients with respect to $\va_t$ gives the score decomposition
$$
\nabla_{\va_t}\log \pi_t^*(\va_t\mid\vs)
=
\nabla_{\va_t}\log \mu_{\phi,t}(\va_t\mid\vs)
-
\sum_{k=1}^m \lambda_k \nabla_{\va_t} c_{k,t}(\va_t,\vs),
$$
which is exactly the claimed result.
\end{proof}

Following~\citet{lu2023contrastive}, we view the \emph{intermediate cost} $c_{k,t}(\va_t, \vs)$ as the diffusion-time analogue of the original constraint $c_k(\va_0, \vs)$ defined on clean actions. Although the constrained optimum is specified at $t=0$ in Eq.~(\ref{eq:boltzmann_opt}),
the reverse diffusion process must operate on noisy actions $\va_t$ for $t>0$. Theorem~\ref{thm:intermediate_constraints} shows that the marginal $\pi^*_t(\va_t \mid \vs)$ retains the same multiplicative structure at each diffusion time, with the original constraint replaced by an intermediate cost $c_{k,t}(\va_t,\vs)$. In particular, $c_{k,t}$ takes a log-expectation form under the posterior $\pi^*_{t \mid 0}(\va_0 \mid \va_t, \vs)$, and can be interpreted as a soft aggregation of the costs of plausible clean actions $\va_0$ that could have produced the current noisy action $\va_t$. 

This perspective also explains the score decomposition in Theorem~\ref{thm:intermediate_constraints}. The score of $\pi^*_t$ decomposes into a base score term provided by the pretrained diffusion model and the \emph{intermediate cost gradient} $\nabla_{\va_t} c_{k,t}(\va_t, \vs)$.
While this decomposition is exact, evaluating $\nabla_{\va_t} c_{k,t}(\va_t, \vs)$ is generally intractable due to the log-expectation over $\va_0$ under the posterior induced by $\pi^*_0$. Consequently, principled constrained sampling from $\pi^*$ requires approximating the intermediate cost guidance, which motivates practical approximations.

\subsection{Parameterization-Agnostic Attribute of Distillation Objective in Eq.~(\ref{eq:distill_obj})}
\label{subsec:v_distill_obj}

Although our main distillation objective in Eq.~(\ref{eq:distill_obj}) is written using the $\epsilon$-parameterization, we further show that the objective is \emph{parameterization-agnostic}: the same teacher--student alignment can be expressed equivalently under alternative diffusion parameterizations (e.g., $v$-parameterization~\citep{zheng2023improved} and flow matching~\citep{lipmanflow,liuflow}) by applying an invertible, time-dependent linear transform for each diffusion time $t$. Under the forward process notation
$$
\va_t = \alpha_t \va_0 + \sigma_t \veps, \qquad \veps \sim \mathcal{N}(\mathbf{0},\mI),
$$
the $v$-parameterization is defined by the schedule derivatives as
$$
\vv_t \triangleq \dot{\alpha}_t \va_0 + \dot{\sigma}_t \veps,
$$
where $\dot{\alpha}_t \triangleq \frac{\mathrm{d}\alpha_t}{\mathrm{d}t}$ and $\dot{\sigma}_t \triangleq \frac{\mathrm{d}\sigma_t}{\mathrm{d}t}$.
Stacking the two relations yields a linear system:
$$
\begin{bmatrix}
\va_t\\
\vv_t
\end{bmatrix}
=
\begin{bmatrix}
\alpha_t & \sigma_t\\
\dot{\alpha}_t & \dot{\sigma}_t
\end{bmatrix}
\begin{bmatrix}
\va_0\\
\veps
\end{bmatrix}.
$$
For any $t\in(0,1]$ such that the determinant
$$
\Delta_t \triangleq \alpha_t \dot{\sigma}_t - \sigma_t \dot{\alpha}_t
$$
is nonzero (which holds for standard noise schedules), the mapping is invertible. In particular,
$$
\va_0
=
\frac{\dot{\sigma}_t \va_t - \sigma_t \vv_t}{\Delta_t},
\qquad
\veps
=
\frac{-\dot{\alpha}_t \va_t + \alpha_t \vv_t}{\Delta_t}.
$$
Equivalently, given an $\veps$-prediction, one can recover $\vv$ via
$$
\va_0
=
\frac{\va_t - \sigma_t \veps}{\alpha_t},
\qquad
\vv
=
\dot{\alpha}_t \va_0 + \dot{\sigma}_t \veps,
$$
and similarly given $\vv$-prediction one can recover $\veps$ via the inverse transform above. Therefore, any teacher signal expressed in $\veps$-space can be converted to an equivalent teacher in $v$-space and vice versa.

\paragraph{Equivalent distillation objectives.}
Let the implicit teacher be defined in $\epsilon$-parameterization as $\veps^*(\va_t,\vs,t)$, and define the corresponding velocity teacher by
$$
\vv^*(\va_t,\vs,t)
\triangleq
\dot{\alpha}_t \hat{\va}_0^*(\va_t,\vs,t) + \dot{\sigma}_t \veps^*(\va_t,\vs,t),
\qquad
\hat{\va}_0^*(\va_t,\vs,t) \triangleq \frac{\va_t - \sigma_t \veps^*(\va_t,\vs,t)}{\alpha_t}.
$$
Then the score-matching distillation objective in Eq.~(\ref{eq:distill_obj}) can be equivalently written under $v$-parameterization as
$$
\min_{\theta}\;
\mathbb{E}_{t,\veps,(\vs,\va)\sim d^{\pi_\theta}}
\Big[
\big\|
\vv_\theta(\va_t,\vs,t) - \vv^*(\va_t,\vs,t)
\big\|_2^2
\Big],
$$
where $\vv_\theta$ is the student policy expressed in $v$-parameterization. Since the transform between $(\va_0,\veps)$ and $(\va_t,\vv_t)$ is linear and invertible for each $t$, minimizing a squared error in one parameterization induces a corresponding squared error in the other (up to a deterministic, schedule-dependent reweighting). Consequently, our alignment objective is parameterization-agnostic: the same teacher distribution can be distilled into the student regardless of whether the diffusion model is implemented via $\epsilon$-prediction, $v$-prediction, or other equivalent parameterizations. 

Moreover, the Recitiefed flow~\citep{liu2024model}, which is widely used in the state-of-the-art Vision-Language-Action Model (VLA) with diffusion action expert~\citep{black2024pi_0,pmlr-v305-black25a,zhai2025igniting} can be viewed as a simplified special case of $v$-parameterization with $\alpha_t = 1 - t$, and $\sigma_t = t$, resulting in the target velocity as $\vv = \veps - \va_0$.

\subsection{Proof of Theorem~\ref{thm:shared_optimal_solution}}
\label{subsec:proof_shared_optimal_solution}

\paragraph{Lagrangian minimizer.}
The following lemma states the closed-form pointwise minimizer of the KL-regularized Lagrangian objective from Eq.~(\ref{eq:lagrange_surrogate}) and connects it directly to the Boltzmann-tilted policy in Eq.~(\ref{eq:boltzmann_opt}).

\begin{lemma}[Pointwise minimizer of the KL-regularized Lagrangian]
\label{lem:lagrangian_minimizer}
Fix a state $\vs$ and multipliers $\lambda_k \ge 0$. Consider 
$$
\mathcal{F}(\pi(\cdot\mid\vs))
\triangleq
\KL\!\big(\pi(\cdot\mid \vs)\,\|\,\mu_{\phi}(\cdot\mid \vs)\big)
+
\sum_{k=1}^m \lambda_k \E_{\va\sim\pi(\cdot\mid\vs)}\!\big[c_k(\vs,\va)\big],
$$
where the minimization is over all conditional densities $\pi(\cdot\mid\vs)$ satisfying $\int \pi(\va\mid\vs)\,d\va = 1$. Then the unique minimizer (up to normalization) is
$$
\pi^*(\va\mid\vs)
\propto
\mu_\phi(\va\mid\vs)\exp\!\Big(-\sum_{k=1}^m \lambda_k c_k(\vs,\va)\Big).
$$
\end{lemma}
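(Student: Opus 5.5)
The plan is the standard Gibbs variational argument: rewrite $\mathcal{F}$ as a KL divergence to the tilted distribution plus a constant, then use nonnegativity of KL. Fix $\vs$ and abbreviate $L(\va)\triangleq\sum_{k=1}^m\lambda_k c_k(\vs,\va)$. Define the normalizer
\[
Z(\vs)\triangleq\int\mu_\phi(\va\mid\vs)\exp(-L(\va))\,\mathrm{d}\va ,
\]
and set $\pi^*(\va\mid\vs)\triangleq\mu_\phi(\va\mid\vs)\exp(-L(\va))/Z(\vs)$.

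First I need $0<Z(\vs)<\infty$, so that $\pi^*$ is a genuine density. If the costs are bounded below (as assumed in Theorem~\ref{thm:monotonic_improvement}), then $\exp(-L)$ is bounded above, which gives $Z<\infty$. Positivity of $Z$ follows because $\exp(-L)>0$ everywhere and $\mu_\phi(\cdot\mid\vs)$ is a probability density.

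Next, for any $\pi(\cdot\mid\vs)$ with $\KL(\pi\|\mu_\phi)<\infty$, expand the KL divergence to $\pi^*$ directly:
\[
\KL(\pi\|\pi^*)=\E_\pi\!\left[\log\frac{\pi}{\mu_\phi}\right]+\E_\pi[L]+\log Z(\vs).
\]
The first two terms together are exactly $\mathcal{F}(\pi(\cdot\mid\vs))$, so
\[
\mathcal{F}(\pi)=\KL(\pi\|\pi^*)-\log Z(\vs).
\]
If instead $\KL(\pi\|\mu_\phi)=\infty$, then $\mathcal{F}(\pi)=\infty$ provided $\E_\pi[L]>-\infty$, which again holds by the lower bound on the costs. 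Such $\pi$ are therefore never minimizers and can be discarded.

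Since $\log Z(\vs)$ does not depend on $\pi$, minimizing $\mathcal{F}$ is the same as minimizing $\KL(\pi\|\pi^*)$. By Gibbs' inequality this is nonnegative and vanishes iff $\pi=\pi^*$ almost everywhere. Hence $\pi^*$ is the unique minimizer, with minimum value $-\log Z(\vs)$. As a cross-check, a formal Lagrangian/first-variation computation with the normalization multiplier gives the stationarity condition $\log(\pi/\mu_\phi)+1+L+\nu=0$, which yields the same form. I prefer the KL identity because it gives existence, uniqueness and global optimality at once, without convexity arguments.

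The main obstacle is only the measure-theoretic bookkeeping: $Z$ must be finite and the decomposition must avoid $\infty-\infty$. Both are handled by the boundedness-below of the $c_k$ and by restricting attention to $\pi\ll\mu_\phi$. If I want to avoid any boundedness assumption, I would instead assume $Z(\vs)<\infty$ directly and state the result on the set where $\E_\pi[L]$ is finite.
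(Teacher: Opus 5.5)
Your proof is correct, but it takes a different route from the paper's.

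\textbf{The paper's route.} The paper gives a formal first-variation argument. It adjoins a multiplier $\xi(\vs)$ for the normalization constraint and differentiates $\mathcal{F}$ pointwise in $\pi(\va\mid\vs)$. Setting $\log\pi-\log\mu_\phi+\sum_k\lambda_k c_k+1+\xi$ to zero, it reads off the exponential-tilting form. This is exactly the computation you relegate to a cross-check. It is short and identifies the stationary point, but by itself it does not show that this point is a global minimizer or that it is unique. For that you would also need strict convexity of $\pi\mapsto\KL(\pi\|\mu_\phi)$ together with linearity of the cost term in $\pi$, which the paper does not invoke. It also tacitly assumes the normalizer is finite.

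\textbf{Your route.} The Gibbs-variational identity $\mathcal{F}(\pi)=\KL(\pi\|\pi^*)-\log Z(\vs)$ gives global optimality, almost-everywhere uniqueness, and the optimal value $-\log Z(\vs)$ in one step, with no convexity argument. Your bookkeeping is also sound. Bounded-below costs give $0<Z(\vs)<\infty$, and both $\E_\pi[\log(\pi/\mu_\phi)]$ and $\E_\pi[L]$ then have integrable negative parts, so no $\infty-\infty$ arises. Densities with $\pi\not\ll\mu_\phi$ have $\mathcal{F}=\infty$ and can be discarded.

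\textbf{What each buys.} The paper's derivation is quick and mirrors the standard AWR-style calculation. Yours is rigorous, and it surfaces an integrability hypothesis the lemma as stated omits but genuinely needs: bounded-below costs, as in Theorem~\ref{thm:monotonic_improvement}, or simply $Z(\vs)<\infty$. Without it, $\pi^*$ need not be normalizable.
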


\begin{proof}
Introduce a Lagrange multiplier $\xi(\vs)$ to enforce $\int \pi(\va\mid\vs)\,d\va=1$. Taking the first variation with respect to $\pi$ yields
$$
\frac{\partial}{\partial \pi(\va\mid\vs)}\Big(\mathcal{F}(\pi(\cdot\mid\vs)) + \xi(\vs)\big(\int \pi(\va\mid\vs)\,d\va - 1\big)\Big)
=
\log \pi(\va\mid\vs) - \log \mu_\phi(\va\mid\vs)
+ \sum_{k=1}^m \lambda_k c_k(\vs,\va) + 1 + \xi(\vs).
$$
Setting this derivative to zero gives
$$
\log \pi(\va\mid\vs)
=
\log \mu_\phi(\va\mid\vs)
-
\sum_{k=1}^m \lambda_k c_k(\vs,\va)
- 1 - \xi(\vs),
$$
hence
$$
\pi(\va\mid\vs)
\propto
\mu_\phi(\va\mid\vs)\exp\!\Big(-\sum_{k=1}^m \lambda_k c_k(\vs,\va)\Big).
$$
This is exactly Eq.~(\ref{eq:boltzmann_opt}).
\end{proof}

\paragraph{Theorem~\ref{thm:shared_optimal_solution} (restated).}
Assume the student policy class $\{\pi_\theta\}$ has unlimited capacity and that sufficient data sampled from $\pi_\theta$ is available to optimize the distillation objective in Eq.~(\ref{eq:distill_obj}). Then the distillation objective in Eq.~(\ref{eq:distill_obj}) and the Lagrangian objective in Eq.~(\ref{eq:lagrange_surrogate}) share the same optimal solution, which is the Boltzmann-tilted policy in Eq.~(\ref{eq:boltzmann_opt}).

\begin{proof}
We prove the claim in two steps. First, Eq.~(\ref{eq:boltzmann_opt}) is the pointwise minimizer of the Lagrangian objective for fixed multipliers $\lambda$ in Eq.~(\ref{eq:lagrange_surrogate}) as demonstrated in Lemma~\ref{lem:lagrangian_minimizer}. Second, we show that the distillation optimum matches the same policy by recovering its score function.

\paragraph{Distillation recovers the same optimum.}
Recap that $\pi^*$ denotes the Boltzmann-tilted policy in Eq.~(\ref{eq:boltzmann_opt}). Let $\pi^*_t(\va_t\mid\vs)$ denote its $t$-marginal under the forward diffusion kernel in Eq.~(\ref{eq:diffusion_forward}) and define $\pi_{\theta,t}$ analogously for $\pi_\theta$.
Recall that under $\epsilon$-parameterization, the diffusion predictor corresponds to the score of the diffused distribution (cf.\ Eq.~(\ref{eq:score_interp})):
$$
\nabla_{\va_t}\log \pi_{\theta,t}(\va_t\mid\vs,t)
=
\veps_\theta(\va_t,\vs,t)/\sigma_t,
\qquad
\nabla_{\va_t}\log \pi^*_t(\va_t\mid\vs,t)
=
\veps^*(\va_t,\vs,t)/\sigma_t.
$$
We instantiate the supervised divergence in Eq.~(\ref{eq:distill_obj}) as the standard squared error in $\epsilon$-space:
$$
\mathcal{L}_{\mathrm{distill}}(\theta)
\triangleq
\E_{\,t,\veps,(\vs,\va)\sim d^{\pi_{\theta}}}
\Big[
\big\|
\veps_\theta(\va_t,\vs,t)-\veps^*(\va_t,\vs,t)
\big\|_2^2
\Big].
$$
By the assumption of unlimited capacity and sufficient data coverage, any global minimizer $\theta^*$ satisfies
$$
\veps_{\theta^*}(\va_t,\vs,t)=\veps^*(\va_t,\vs,t)
\quad\text{$d^{\pi_{\theta^*}}$-a.e. on }(\vs,\va_t,t).
$$
Here and throughout, “$d^{\pi_{\theta^*}}$-a.e.” denotes \emph{almost everywhere} with respect to the measure induced by the sampling distribution $d^{\pi_{\theta^*}}$ over $(\vs,\va_t,t)$.
Dividing by $\sigma_t>0$ yields equality of scores:
$$
\nabla_{\va_t}\log \pi_{\theta^*,t}(\va_t\mid\vs,t)
=
\nabla_{\va_t}\log \pi^*_t(\va_t\mid\vs,t)
\quad\text{$d^{\pi_{\theta^*}}$-a.e.}
$$
Fix $(\vs,t)$ and define the log-density ratio
$$
r(\va_t;\vs,t)\triangleq \log \pi_{\theta^*,t}(\va_t\mid\vs,t)-\log \pi^*_t(\va_t\mid\vs,t).
$$
On any connected region where both densities are positive and differentiable, the score equality implies
$$
\nabla_{\va_t} r(\va_t;\vs,t)=\mathbf{0}
\quad\text{a.e. in }\va_t,
$$
hence $r(\va_t;\vs,t)=C(\vs,t)$ is constant (a.e.) with respect to $\va_t$. Exponentiating gives
$$
\pi_{\theta^*,t}(\va_t\mid\vs,t)=e^{C(\vs,t)}\,\pi^*_t(\va_t\mid\vs,t)
\quad\text{a.e. in }\va_t.
$$
Since both $\pi_{\theta^*,t}(\cdot\mid\vs,t)$ and $\pi^*_t(\cdot\mid\vs,t)$ are normalized probability densities, integrating over $\va_t$ yields $e^{C(\vs,t)}=1$, so $C(\vs,t)=0$ and therefore
$$
\pi_{\theta^*,t}(\va_t\mid\vs,t)=\pi^*_t(\va_t\mid\vs,t)
\quad\text{a.e. in }\va_t,\ \forall(\vs,t)\text{ in the support.}
$$
Finally, taking $t\to 0$ recovers the clean-action conditional distributions. Under the forward kernel in Eq.~(\ref{eq:diffusion_forward}), $\va_t\to \va_0$ as $t\to 0$, and the $t$-marginals converge to the corresponding $t=0$ conditionals; hence
$$
\pi_{\theta^*}(\va_0\mid\vs)=\pi^*(\va_0\mid\vs)
\quad\text{a.e. in }\va_0.
$$
Thus the global minimizer of the distillation objective recovers the Boltzmann-tilted policy in Eq.~(\ref{eq:boltzmann_opt}), which is also the minimizer of the Lagrangian objective for fixed $\{\lambda_k\}_{k=1}^m$. 

\end{proof}


\subsection{Proof of Theorem~\ref{thm:monotonic_improvement}}
\label{subsec:proof_monotonic_improvement}

\begin{proof}
Firstly, $\pi^{\text{new}}$, obtained by solving Eq.~(\ref{eq:scheduled_distill_obj}), is the minimizer of the KL-regularized curriculum surrogate (Eq.~\ref{eq:scheduled_lagrange_surrogate}) from Lemma~\ref{lem:lagrangian_minimizer}, which can be written under the fixed state distribution induced by $\pi^{\text{old}}$:
\begin{equation}
\pi^{\text{new}}
\in
\arg\min_{\pi}\;
\mathbb{E}_{\vs\sim d^{\pi^{\text{old}}}}
\Big[
\KL\!\big(\pi(\cdot\mid\vs)\,\|\,\pi^{\text{old}}(\cdot\mid\vs)\big)
+
\Delta\eta_i\;\mathbb{E}_{\va\sim\pi(\cdot\mid\vs)}\big[\ell(\vs,\va)\big]
\Big].
\label{eq:curriculum_surrogate_fixed_state}
\end{equation}
Since $\pi^{\text{new}}$ minimizes Eq.~(\ref{eq:curriculum_surrogate_fixed_state}), comparing it against the feasible choice $\pi=\pi^{\text{old}}$ yields
\begin{align}
&\mathbb{E}_{\vs\sim d^{\pi^{\text{old}}}}
\Big[
\KL\!\big(\pi^{\text{new}}(\cdot\mid\vs)\,\|\,\pi^{\text{old}}(\cdot\mid\vs)\big)
+
\Delta\eta_i\;\mathbb{E}_{\va\sim\pi^{\text{new}}(\cdot\mid\vs)}\big[\ell(\vs,\va)\big]
\Big]
\nonumber\\
&\qquad\le
\mathbb{E}_{\vs\sim d^{\pi^{\text{old}}}}
\Big[
\KL\!\big(\pi^{\text{old}}(\cdot\mid\vs)\,\|\,\pi^{\text{old}}(\cdot\mid\vs)\big)
+
\Delta\eta_i\;\mathbb{E}_{\va\sim\pi^{\text{old}}(\cdot\mid\vs)}\big[\ell(\vs,\va)\big]
\Big]
\nonumber\\
&\qquad=
\Delta\eta_i\;
\mathbb{E}_{\vs\sim d^{\pi^{\text{old}}}}
\Big[
\mathbb{E}_{\va\sim\pi^{\text{old}}(\cdot\mid\vs)}\big[\ell(\vs,\va)\big]
\Big]
=
\Delta\eta_i\;\mathbb{E}_{d^{\pi^{\text{old}}}}\!\big[\ell(\vs,\va)\big].
\label{eq:surrogate_compare}
\end{align}

\paragraph{Monotonic improvement.}
Dropping the nonnegative KL term in Eq.~(\ref{eq:surrogate_compare}) and dividing by $\Delta\eta_i>0$ gives
\begin{equation}
\mathbb{E}_{d^{\pi^{\text{old}}}}\!\big[\ell(\vs,\va)\big]
\;\le\;
\mathbb{E}_{d^{\pi^{\text{old}}}}\!\big[\ell(\vs,\va)\big],
\end{equation}
which is exactly Eq.~(\ref{eq:monotone_cost}).

\paragraph{Controlled policy shift.}
Rearranging Eq.~(\ref{eq:surrogate_compare}) yields
\begin{equation}
\mathbb{E}_{\vs\sim d^{\pi^{\text{old}}}}
\Big[
\KL\!\big(\pi^{\text{new}}(\cdot\mid\vs)\,\|\,\pi^{\text{old}}(\cdot\mid\vs)\big)
\Big]
\;\le\;
\Delta\eta_i\,
\Big(
\mathbb{E}_{d^{\pi^{\text{old}}}}\![\ell(\vs, \va)]
-
\mathbb{E}_{d^{\pi^{\text{new}}}}\![\ell(\vs, \va)]
\Big).
\label{eq:kl_bound_step}
\end{equation}
By assumption, each $c_k(\vs,\va)\in[c_k^{\min},c_k^{\max}]$ and $\lambda_k\ge 0$, hence
$$
\ell(\vs,\va)=\sum_{k=1}^m \lambda_k(c_k(\vs,\va)-d_k)\in[\ell_{\min},\ell_{\max}],
\quad
\ell_{\min}\triangleq\sum_{k=1}^m \lambda_k(c_k^{\min}-d_k),\;
\ell_{\max}\triangleq\sum_{k=1}^m \lambda_k(c_k^{\max}-d_k).
$$
Therefore, for any policy $\pi$ and any state distribution, $\mathbb{E}_{d^\pi}[\ell(\vs, \va)]\in[\ell_{\min},\ell_{\max}]$, and in particular
$$
\mathbb{E}_{d^{\pi^{\text{old}}}}\![\ell(\vs, \va)]
-
\mathbb{E}_{d^{\pi^{\text{new}}}}\![\ell(\vs, \va)]
\le
\ell_{\max}-\ell_{\min} = \Delta \ell.
$$
Substituting into Eq.~(\ref{eq:kl_bound_step}) proves Eq.~(\ref{eq:kl_control}).
\end{proof}

In conclusion, Eq.~(\ref{eq:scheduled_distill_obj}) can be interpreted as implementing the KL-proximal operator
in Theorem~\ref{thm:monotonic_improvement} at the level of diffusion scores, where
the target score
$\veps^{\mathrm{old}}(\va_t,\vs,t) - \sum_{k=1}^m \Delta\eta_i \lambda_k \nabla_{\va_t} c_{k,t}(\vs,\va_t)$
corresponds to a first-order guidance that nudges the student policy toward
$\pi^{\text{old}}(\va\mid\vs)\exp(-\Delta\eta_i\ell(\vs,\va))$ while maintaining closeness to $\pi^{\text{old}}$.
This yields a principled curriculum mechanism to avoid abrupt policy shifts, mitigating OOD collapse.  

\paragraph{Total-variation control.}
\label{app:tv_control}

\begin{corollary}[Total-variation bound]
\label{cor:app_tv}
Under the conditions of Theorem~\ref{thm:monotonic_improvement},
\begin{equation}
\mathbb{E}_{\vs\sim d^{\pi^{\text{old}}}}
\Big[
\TV\!\big(\pi^{\text{new}}(\cdot\mid\vs),\pi^{\text{old}}(\cdot\mid\vs)\big)
\Big]
\le
\sqrt{
\frac{\Delta\eta_i \Delta \ell}{2}
}.
\label{eq:app_tv_control}
\end{equation}
\end{corollary}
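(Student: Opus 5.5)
The plan is to combine Pinsker's inequality with Jensen's inequality on top of the controlled-policy-shift bound in Theorem~\ref{thm:monotonic_improvement}.

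First, for each fixed state $\vs$, apply Pinsker's inequality to the conditionals $\pi^{\text{new}}(\cdot\mid\vs)$ and $\pi^{\text{old}}(\cdot\mid\vs)$:
\[
\TV\big(\pi^{\text{new}}(\cdot\mid\vs),\pi^{\text{old}}(\cdot\mid\vs)\big)\le\sqrt{\tfrac12\KL\big(\pi^{\text{new}}(\cdot\mid\vs)\,\|\,\pi^{\text{old}}(\cdot\mid\vs)\big)}.
\]
This holds pointwise in $\vs$. It needs only that both conditionals are probability measures, and it remains trivially true when the KL is infinite.

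Second, take expectations over $\vs\sim d^{\pi^{\text{old}}}$ and pass the expectation inside the square root. Since $x\mapsto\sqrt{x}$ is concave on $[0,\infty)$, Jensen's inequality gives
\[
\E_{\vs\sim d^{\pi^{\text{old}}}}\Big[\sqrt{\tfrac12\KL(\cdot)}\Big]\le\sqrt{\tfrac12\,\E_{\vs\sim d^{\pi^{\text{old}}}}\big[\KL(\cdot)\big]}.
\]

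Third, insert the controlled-policy-shift bound Eq.~(\ref{eq:kl_control}) from Theorem~\ref{thm:monotonic_improvement}. It bounds the expected KL under $d^{\pi^{\text{old}}}$ by $\Delta\eta_i\,\Delta\ell$. The square root is monotone, so this yields Eq.~(\ref{eq:app_tv_control}).

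There is essentially no obstacle. The only point needing care is the Jensen step: the expectation must be taken under the same state distribution $d^{\pi^{\text{old}}}$ as in Eq.~(\ref{eq:kl_control}), and the KL must be in the direction $\pi^{\text{new}}\|\pi^{\text{old}}$. Both conditions hold, and Pinsker is symmetric in the TV argument, so the direction causes no difficulty.
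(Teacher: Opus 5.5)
Your proposal is correct and follows the paper's proof essentially step for step. Both apply Pinsker's inequality for each state, then Jensen's inequality (concavity of $\sqrt{\cdot}$) under $\vs\sim d^{\pi^{\text{old}}}$, then substitute the controlled-policy-shift bound Eq.~(\ref{eq:kl_control}); your added remarks on the KL direction and the infinite-KL case are extra care that the paper omits but does not need.
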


\begin{proof}
Pinsker's inequality gives for each $\vs$:
$\TV(p,\,q)\le \sqrt{\KL(p\,\|\,q)/2}$.
Applying this with $p=\pi^{\text{new}}(\cdot\mid\vs)$ and $q=\pi^{\text{old}}(\cdot\mid\vs)$, taking expectation over $\vs\sim d^{\pi^{\text{old}}}$,
and using Jensen's inequality yields
\[
\mathbb{E}_{\vs\sim d^{\pi^{\text{old}}}}[\TV]
\le
\sqrt{\frac{1}{2}\,
\mathbb{E}_{\vs\sim d^{\pi^{\text{old}}}}
\Big[
\KL\!\big(\pi^{\text{new}}(\cdot\mid\vs)\,\|\,\pi^{\text{old}}(\cdot\mid\vs)\big)
\Big]}.
\]
Substituting Eq.~(\ref{eq:kl_control}) completes the proof.
\end{proof}

Corollary~\ref{cor:app_tv} provides an explicit \emph{distributional shift control} guarantee for the curriculum update.
Since total variation (TV) distance upper bounds the change in probability assigned to \emph{any} measurable event, a small TV bound implies that $\pi^{\text{new}}(\cdot\mid\vs)$ cannot deviate sharply from $\pi^{\text{old}}(\cdot\mid\vs)$ for states sampled from $d^{\pi^{\text{old}}}$.
Importantly, the bound scales as $\mathcal{O}(\sqrt{\Delta\eta_i})$, so progressively tightening constraints via small schedule increments $\Delta\eta_i$ yields provably smooth policy evolution and prevents abrupt support collapse, which is central to mitigating Irreversible OOD Collapse in practice.

\subsection{Proof of Corollary~\ref{cor:curriculum_distillation_optimal_solution}}
\label{subsec:curriculum_optimal_solution}

\begin{proof}
Fix an iteration $i\in\{0,\ldots,N\}$ and consider the curriculum scheduler $\eta_i\in[0,1]$ with $\eta_0 = 0$ and $\eta_N =N$. By construction, the curriculum update at step $j$ optimizes a KL-regularized surrogate with an incremental penalty weight $\Delta\eta_j$, using the previous optimal policy as the reference:
$$
\pi_j^*
\in
\arg\min_{\pi}\;
\E_{\vs\sim d^\pi}
\Big[
\KL\!\big(\pi(\cdot\mid\vs)\,\|\,\pi_{j-1}(\cdot\mid\vs)\big)
+
\sum_{k=1}^m \Delta\eta_j\,\lambda_k\, c_k(\vs,\va)
\Big],
\qquad
\pi_{0}=\mu.
$$
Applying Lemma~\ref{lem:lagrangian_minimizer} to the above objective yields the closed-form update
$$
\pi_{j}^*(\va\mid\vs)
\propto
\pi_{j-1}(\va\mid\vs)\,
\exp\!\Big(-\sum_{k=1}^m \Delta\eta_j\,\lambda_k\, c_k(\vs,\va)\Big).
$$
Unrolling this recursion from $j=1$ to $j=i$ gives
$$
\pi_{i}^*(\va\mid\vs)
\propto
\pi_{}(\va\mid\vs)\,
\prod_{j=1}^{i}\exp\!\Big(-\sum_{k=1}^m \Delta\eta_j\,\lambda_k\, c_k(\vs,\va)\Big)
=
\mu(\va\mid\vs)\,
\exp\!\Big(-\sum_{k=1}^m \Big(\sum_{j=1}^{i}\Delta\eta_j\Big)\lambda_k\, c_k(\vs,\va)\Big).
$$
Since the schedule is cumulative, $\sum_{j=1}^{i}\Delta\eta_j=\eta_i-\eta_0=\eta_i$ (with $\eta_0=0$), we obtain
$$
\pi_{i}^*(\va\mid\vs)
\propto
\mu(\va\mid\vs)\,
\exp\!\Big(-\sum_{k=1}^m \eta_i\,\lambda_k\, c_k(\vs,\va)\Big),
$$
which is exactly Eq.~(\ref{eq:scheduled_boltzmann_opt}). Finally, at $i=N$ we have $\eta_N=1$, hence
$$
\pi_{N}^*(\va\mid\vs)
\propto
\mu(\va\mid\vs)\,
\exp\!\Big(-\sum_{k=1}^m \lambda_k\, c_k(\vs,\va)\Big),
$$
which coincides with the shared closed-form optimal solution $\pi^*$ in Theorem~\ref{thm:shared_optimal_solution}.
\end{proof}

\subsection{Training-free Approximation for Intermediate Cost Gradient}
\label{subsec:approx_inter_constraint}

The exact gradient derived from intermediate cost in Eq.~(\ref{eq:intermediate_constraint}) is
\begin{equation}
\nabla_{\vx_t} c_{k,t}(\va_t, 
\vs)
=
\E_{q_{0\mid t}(\va_0\mid \va_t, \vs)}
\!\left[
- \exp\!\big(c_{k,t}(\va_t, \vs)-c_k(\va_0, \vs)\big)\,
\nabla_{\va_t}\log q_{0\mid t}(\va_0\mid \va_t, \vs)
\right].
\label{eq:exact_guidance}
\end{equation}

Applying a first-order Taylor expansion at $t=0$,
$$
\exp\!\big(c_{k,t}(\va_t, \vs)-c_k(\va_0)\big)
\approx
1 + c_{k,t}(\va_t, \vs)- c_k(\va_0, \vs) ,
$$
and substituting into Eq.~(\ref{eq:exact_guidance}) yields
\begin{align}
\nabla_{\va_t} c_{k,t}(\va_t, \vs)
&\approx
\E_{q_{0\mid t}(\va_0\mid \va_t, \vs)}
\!\left[
\big(-1-c_{k,t}(\va_t, \vs)+ c_k(\va_0)\big)\,
\nabla_{\va_t}\log q_{0\mid t}(\va_0\mid \va_t, \vs)
\right] \nonumber \\
&=
\E_{q_{0\mid t}(\va_0\mid \va_t, \vs)}
\!\left[
c_k(\va_0, \vs)\,
\nabla_{\va_t}\log q_{0\mid t}(\va_0\mid \va_t, \vs)
\right],
\label{eq:exact_to_mse}
\end{align}
which is a first-order approximation of the true cost gradient by assuming $c_{k,t}(x_t)\approx c_k(x_0)$, 
which makes sense for $t \rightarrow 0$ ($t < t_c = 0.03$ in our case). And this approximation is widely used for multiple scenarios including image-to-image translation~\citep{zhao2022egsde} and inverse molecular design~\citep{baoequivariant}. 
However, it's still intractable due to the existence of log-expectation, which requires using a mean-square-error (MSE) objective to train an alternative cost model and use its gradient as a surrogate. To this end, by further taking a Taylor expansion for $c_{k}(\va_0, \vs)$  around the posterior mean  $\va_{0 \mid t} = \E_{q_{0 \mid t}(\va_0 \mid \va_t, \vs)}[\va_0]$, we have
\begin{equation}
c_{k}(\va_0)
\approx
c_k\!\big(\va_{0\mid t}, \vs)
+
\big(\nabla c_k(\va_{0\mid t}, \vs)\big)^{\!\top}
\big(\vx_0-\va_{0\mid t}).
\nonumber
\end{equation}
Substituting it into Eq.~(\ref{eq:exact_to_mse}) gives further approximation: 
\begin{equation}
\nabla_{\va_t} c_{k,t}(\va_t, \vs)
\approx
\E_{q_{0\mid t}(\va_0\mid \va_t, \vs)}
\!\bigg[
\bigg(\nabla c_k(\va_{0\mid t, \vs})^{\!\top}\va_0\bigg)\,
\nabla_{\vx_t}\log q_{0\mid t}(\va_0\mid \va_t, \vs)
\bigg] 
= \nabla_{\va_t} c_k(\va_{0 \mid t}, \vs),
\label{eq:mse_to_dps}
\end{equation}
Hence, $\nabla_{\va_t} c_k(\va_{0 \mid t}, \vs)$ is a further first-order approximation under the additional approximation $\va_0 \approx \va_{0 \mid t} = \E_{q_{0\mid t}(\va_0\mid \va_t, \vs)}[\va_0]$, which is again most accurate for $t \rightarrow 0$. And it's directly computable by ensuring the differentiability of the cost function $c_{k}(\cdot)$. Moreover, guided samples with this approximation is also known as Diffusion Posterior Sampling (DPS)~\citep{chungdiffusion}.

\section{Implementation Details for Physical Constraints}
\label{appx:pipeline}

\subsection{Physical Constraints Construction Pipeline}

\paragraph{Unsafe bimanual manipulation taxonomy.}
We follow the previous work by~\citet{deng2025safebimanual}, which systematically analyzes large-scale manipulation trajectories and proposes a taxonomy of safety cost functions tailored for bimanual manipulation. This taxonomy covers the majority of unsafe interaction patterns encountered in bimanual tasks and has demonstrated significant improvements through test-time correction in both simulation and real-robot experiments. In this work, we adopt three representative and practically effective safety cost terms (Appx.~\ref{sec:constraints}) to evaluate the effectiveness of \themethod in enforcing fine-grained physical alignment. These costs capture complementary aspects of unsafe bimanual interactions and serve as a concise yet representative subset of the broader taxonomy.

\paragraph{Constraint proposal and scheduling.}
In robotic manipulation, physical safety constraints are commonly formulated as task-oriented trajectory optimization objectives~\citep{kalakrishnan2011stomp,schulman2014motion}. Our constraint proposal and scheduling pipeline follows recent VLM-based approaches such as SafeBimanual~\citep{deng2025safebimanual}, VoxPoser~\citep{huang2023voxposer}, and ReKeP~\citep{huang2025rekep}, which introduce adaptive mechanisms to dynamically propose and activate constraints. These methods scale naturally to high-DoF systems and diverse new tasks without requiring hand-crafted constraint engineering.

Concretely, in real-world experiments, given a set of safety costs from the taxonomy, constraint activation is performed by a Vision–Language Model (GPT-4o~\citep{achiam2023gpt}) through a structured chain-of-thought (CoT) reasoning process~\citep{wei2022chain}. Given the task description, RGB observations, and proposed keypoints obtained by a series of perception modules, the VLM infers the most likely unsafe interaction pattern from the predefined taxonomy. Conditioned on the predicted pattern, the model schedules a subset of relevant safety cost terms and identifies the corresponding relational keypoints, activating constraints only at safety-critical stages. This enables automatic, semantically grounded safety constraint selection.

We follow previous works~\cite{deng2025safebimanual,huang2025rekep} to obtain keypoints and axes from foundation vision models. Specifically, we first extract interaction-relevant keypoints from RGB observations using DINOv2~\cite{oquab2024dinov2}, SAM2~\cite{ravi2024sam}, followed by PCA-based dimensionality reduction and K-means clustering. The extracted keypoints are then projected into the world coordinate frame using camera intrinsics, extrinsics, and depth measurements. In parallel, we employ the GenPose++~\cite{zhang2024omni6dpose} model to infer object poses and potential functional axes. 
Together, these cues constitute privileged information for safety cost construction. The effectiveness of this perception pipeline has been validated in both our experiments and prior work.

In simulation, since the environment states (e.g. object position and pose) can be directly accessed through the simulator's APIs, we therefore leaves out most of the perception modules to accelerate the experiments. 

\paragraph{Efficient deployment without additional hardware.}
A key advantage of our approach over prior methods lies in deployment efficiency. The perception and VLM-based reasoning pipeline is executed only during data collection, where it is used to compute alignment-guiding gradients for policy distillation. During deployment, the policy operates entirely with the regular onboard observation, same as the base policy, achieving identical runtime efficiency. In contrast, prior methods such as SafeBimanual~\cite{deng2025safebimanual} rely on test-time correction, which requires repeated VLM invocations and additional sensing hardware at every control step. Our approach eliminates this overhead, enabling practical deployment while retaining the safety benefits induced during training.

\subsection{Details of Cost Functions}
\label{sec:constraints}
We follow~\citet{deng2025safebimanual} to implement physical constraints as cost functions for actions. The action vector $\va$ (e.g., the 14-dimensional vector in ALOHA~\citep{zhao2023learning}) is structured as two equal halves, each mapping to the joint commands of a separate manipulator. We use forward kinematics $\vp,\vd=\text{FK}(\va)$ to obtain the gripper positions $ \vp_{\text{left}}, \vp_{\text{right}} \in \mathbb{R}^3$, and gripper approach axes $ \vd_{\text{left}}, \vd_{\text{right}} \in \mathbb{R}^3$. The forward kinematics is naturally differentiable. Following~\citet{deng2025safebimanual}, we annotate object keypoint position $ \vk \in \mathbb{R}^3$ and functional axis $ \vz \in \mathbb{R}^3$. All axes are unit vectors. These variables are explicitly included in state $\vs$. It's worth noting that the constraints are available for robots with other action spaces, such as End Effector (EE) Pose, where we can simply leave out the forward kinematics to obtain the gripper poses. Specifically, we employ the following cost functions corresponding to specific constraints, which have proven effective in addressing safety issues and exhibit robust generalization capabilities across various tasks. 

\paragraph{Gripper poking cost.}
To prevent unintended surface interactions such as poking or scratching, we employ a constraint that regulates the alignment between the gripper approach direction and the target keypoint. The gripper poking cost \( c_1 \) is defined as
\begin{equation}
c_1(\vs,\va) = (\mathbf{I} - \vd\vd^\top)\lVert \vk - \vp \rVert^2,
\end{equation}
 This cost suppresses motion components that deviate from the intended approach direction, thereby reducing the likelihood of unsafe or undesired contact during manipulation.

\paragraph{Behavior alignment cost.}
We explain the constraint formulation using a representative pouring task, where the left manipulator grasps a bottle and the right manipulator holds a cup. Proper task execution requires coordinated spatial behavior between the two objects. In particular, the bottle mouth position $\vk_1$ should be aligned with the cup’s functional axis $\vz$, while the relative vertical displacement with respect to the cup keypoint $\vk_2$ must be regulated. To capture these requirements, the behavior alignment cost \( c_2 \) is defined as
\begin{equation}
\label{c2}
c_2(\vs,\va) = \lVert(\mathbf{I} - \vz\vz^\top) \vl \rVert^2 + \lambda (\vz^\top \vl - h)^2, 
\quad
\vl = \vk_{1} - \vk_{2} ,
\end{equation}
where $\vz$ is the unit vector along the cup functional axis, and $h$ specifies the desired vertical offset between the bottle mouth and cup keypoints. The first term penalizes lateral misalignment orthogonal to the cup axis, while the second term enforces the intended height relationship along the axis direction. Besides the pouring scenario, Eq.~(\ref{c2}) generalizes naturally to other bimanual manipulation tasks that require spatial behavior coordination, such as GPU insertion~\citep{xiao2025pld} and block stacking~\citep{mu2024robotwin}.

\paragraph{Gripper rotation cost.}
In addition to translational alignment constraints, safe and functional manipulation also requires regulating the \emph{rotational orientation} of the gripper with respect to the contacted object. In tasks involving surface-constrained interaction, the gripper’s rotation direction should be aligned with one of the object’s admissible functional directions. To encode this requirement, we introduce a gripper rotation cost $c_3$:
\begin{equation}
\label{c3}
c_3(\vs,\va) 
= 1 - \max_{i \in \{1,\dots,K\}} \vu^\top \vz_i ,
\end{equation}
where $\vu$ is the unit up direction of end effector, $ \{\vz_i\}_{i=1}^{K}$ is a set of predefined unit vectors associated with the target object (e.g., surface normals or functional axes). This formulation penalizes rotational misalignment by encouraging the gripper up direction to be parallel to at least one valid object functional direction. It naturally supports multiple interaction solutions, as implemented by selecting the maximum cosine similarity across all candidate vectors. This makes the proposed rotation cost broadly applicable to a wide range of embodied manipulation tasks, including structured object picking, tool usage where rotational consistency is critical for successful execution.

\section{Implementation Details for Simulation Evaluation}
\label{sec:sim_implement}

\subsection{Details of Environment Setup}
\label{appx:simulation_env}
The simulation experiments are conducted in the RoboTwin 2.0~\citep{mu2024robotwin,chen2025robotwin}, a high-fidelity platform designed for embodied bimanual manipulation tasks. We adopt the deliberately challenging \emph{demo randomized} configuration provided by RoboTwin, which features randomized lighting conditions, diverse scene textures, and a variety of task-irrelevant distractor objects, thereby introducing significant visual and contextual variability that tests the robustness and generalization capability of learned policies. We adopt a joint-angle action representation for both robot learning and control.

\subsection{Task Description}
\label{appx:simulation_tasks}
\paragraph{Pick Dual Bottles.}
A bottle of cola and a bottle of lemon-lime soda are randomly placed upright on the left and right sides of the table. The two robotic arms are required to simultaneously grasp the bottles, lift them off the table, and hold them stably within a predefined target region in mid-air. A trial is successful if both bottles remain upright and are stably positioned within the target zone without being dropped.

\paragraph{Pick Diverse Bottles.}
Two bottles of randomly selected types are placed upright on the left and right sides of the table. The bottle set includes both cylindrical and cuboid shapes with diverse visual textures. The dual-arm system must concurrently grasp the bottles and lift them into a specified spatial region above the table, maintaining stable poses throughout the motion. Success is achieved when both bottles remain upright and are stably held within the goal region.

\paragraph{Handover Apple.}
An apple is initially placed on the left side of the table. The left arm grasps the apple and transports it to a handover position above the center of the table, where the right arm receives the apple directly from the left arm, completing an inter-arm object transfer. Success is achieved if the right arm securely holds the apple and the left gripper is fully open without any contact.

\paragraph{Handover Block.}
A tall red cuboid block is placed upright on the left side of the table. The left arm grasps and lifts the block to a handover position above the table center. The right arm then takes the block from the left arm and subsequently places it into a designated blue target region located on the left side of the table. Success is achieved if the block is transferred without being dropped and ends up stably positioned inside the target region.

\paragraph{Place Dual Shoes.}
Two shoes are positioned separately on the left and right sides of the table, while a shoebox is placed at the center. Each arm grasps the shoe on its corresponding side and places it into the shoebox, requiring coordinated bimanual manipulation and spatial alignment. Success is achieved when both shoes are placed fully inside the shoebox and both grippers are open with no contact.

\paragraph{Pour Water.}
A bottle of soda is placed on the left side of the table, and a cup is placed on the right side. The two arms simultaneously grasp the bottle and the cup, lift them into the air, and perform a coordinated pouring action by tilting the bottle and aligning its opening with the cup to transfer liquid successfully. Successful pouring requires the bottle mouth to be aligned above the cup rim while the bottle is tilted by at least 60$^{\circ}$.

\paragraph{Stack Blocks.}
A red block and a green block are randomly placed either on opposite sides or the same side of the table. The arm closest to each block is used for grasping. The robot first grasps the red block and places it at the center of the table, followed by grasping the green block and stacking it precisely on top of the red block, forming a stable two-block structure. Success is achieved when the blocks form a stable vertical stack with the green block on top of the red block.

The physical constraints activated per task are illustrated in Table~\ref{tab:physical_constraints_simulation}.

\begin{table*}[h]
    \centering
    \caption{Activated physical constraints for each task in simulation evaluation.}
    \newcommand{\tickmark}{\ding{51}}
    \renewcommand{\arraystretch}{1.15}
    \begin{tabular}{c c c c}
    \toprule
    \textbf{Task} &\textbf{Gripper Poking Cost} & \textbf{Behavior Alignment Cost} & \textbf{Gripper Rotation Cost} \\
    \midrule
    Pick Dual Bottles & \tickmark & & \\
    Pick Diverse Bottles & \tickmark & & \\
    Handover Apple & \tickmark & & \\
    Handover Block & \tickmark & \tickmark & \\
    Place Dual Shoes & \tickmark & & \tickmark \\
    Pour Water & \tickmark & \tickmark & \\
    Stack Blocks & \tickmark & \tickmark & \tickmark \\
    \bottomrule
    \end{tabular}
    \label{tab:physical_constraints_simulation}
\end{table*}

\subsection{Data Collection Protocol.}
\label{subsec:simulation_data}
We collect 200 expert demonstrations to train baseline methods, including DP, DP3, RDT-1B, $\pi_{0.5}$, where each trajectory is generated in a distinct scene instance from the training set. Moreover, as for RDT-1B and $\pi_{0.5}$, the officially released multi-task demonstrations (\url{https://huggingface.co/datasets/TianxingChen/RoboTwin2.0}) of our selected tasks are used for pretraining to adapt the original model weights for tasks within the simulation. In addition, for offline and online post-training methods, we further construct an environment set consisting of 1,000 additional scenes from the training set. Within these scenes, the base policy is allowed to autonomously collect an arbitrary number of trajectories; however, no expert demonstration is permitted. This data collection strategy is motivated by practical considerations in real-world deployment scenarios. Expert demonstrations are often costly and difficult to scale, whereas generating additional training data through self-rollout is comparatively affordable. 

\subsection{Evaluation Rubric}
\label{appx:simulation_rubric}

We use the \textbf{Success Rate~(Succ.)} over the tasks in Appx.~\ref{appx:simulation_tasks} to measure each policy’s task-execution capability. The success criteria for individual tasks are specified in Appendix~\ref{appx:simulation_tasks}, following the official evaluation protocol of RoboTwin~\cite{mu2024robotwin,chen2025robotwin}.

As for \textbf{Safe Rate~(Safe)}, we define a unified set of safety violations shared across all tasks and count a trial as safe only if none of these violations occurs throughout the episode. Specifically, we consider three generic hazardous situations---poking, falling, and toppling---to capture unintended contacts, loss of object support, and unstable object states during manipulation. 
Poking refers to unexpected contacts between the gripper fingers and objects beyond the intended grasping interface, such as fingertip touches or contacts made by the outer side of the fingers, which indicate hazardous collisions or imprecise interaction. 
Falling characterizes loss of support that leads to a free-fall event; we mark a falling violation when an object undergoes free fall with a vertical drop exceeding 5 cm. 
Toppling measures instability when an object is not being held: a toppling violation is triggered if the object’s tilt angle exceeds $60^{\circ}$ while it is not in contact with (i.e., not supported by) the robot gripper. 
A policy is considered safe on a trial if it does not trigger any of the above violations during the rollout, and the Safe Rate is computed as the fraction of safe trials, averaged over multiple evaluation episodes for each task (and further averaged across tasks when reporting an overall score).

\subsection{Implementation Details of \themethod}
\label{appx:ours_implement}

In our simulation experiments, we employ four baseline policies: DP~\cite{chi2025diffusion}, DP3~\cite{ze20243d}, RDT-1B~\cite{liurdt} from \url{https://github.com/robotwin-Platform/RoboTwin}, and $\pi_{0.5}$ (implemented in \texttt{PyTorch}~\citep{paszke2019pytorch} at \url{https://github.com/Physical-Intelligence/openpi/tree/main}). For all models, we adopt the default training hyperparameters and other design choices from their official codebases if not otherwise specified.

\paragraph{Implementation details of base policies.}  For base policy training, DP and DP3 are trained from scratch using 200 expert demonstrations per task. RDT-1B and $\pi_{0.5}$ are initialized from their official checkpoints and adapted to the simulation benchmark via a two-stage fine-tuning procedure.
For RDT-1B, the vision module is first fine-tuned using LoRA ($r=64$, $\alpha=128$) on the attention layers~\citep{hulora}, while the Transformer backbone~\citep{vaswani2017attention} is fully fine-tuned across all RoboTwin scenes using the officially released dataset specified in Appx.~\ref{subsec:simulation_data}, comprising 200 \texttt{demo\_clean} and 500 \texttt{demo\_randomized} demonstrations per task. In the second stage, the vision module is frozen, and the Transformer backbone is further fine-tuned with 200 expert demonstrations, matching the data regime used for DP and DP3. 
For $\pi_{0.5}$, the official checkpoint (\url{gs://openpi-assets/checkpoints/pi05_base}) is first adapted to the simulation environment, followed by task-specific fine-tuning. In contrast to RDT-1B, $\pi_{0.5}$ is trained in a full-parameter manner in both stages, using the same data configuration as RDT-1B.

Notably, The base policies differ primarily in their diffusion parameterization and sampling scheduler. DP adopts the original DDPM scheduler~\citep{ho2020denoising} with $\varepsilon$-prediction and a long 100-step denoising process. DP3 uses DDIM with direct sample prediction~\citep{songdenoising}, reducing inference to 10 steps while retaining deterministic rollouts. RDT-1B further improves efficiency via high-order DPMSolver++~\citep{lu2022dpm,lu2025dpm}, enabling high-quality sampling in only 5 steps. In contrast, $\pi_{0.5}$ employs a flow-matching formulation that parameterizes actions as a continuous-time velocity field~\citep{liuflow}, allowing deterministic and efficient inference with few integration steps.

\paragraph{Implementation details of post-training with \themethod.} During the post-distillation training phase, each policy uses its own base model to collect training data by rolling out across the selected 1,000 scenes from the training set. For DP and DP3, full model fine-tuning is applied. For RDT-1B, the vision encoder remains frozen, and the transformer backbone is fine-tuned with LoRA on the collected rollout data. To improve the quality of the cost gradient in Eq.~(\ref{eq:tf_teacher}), we resort to calculating the cost-guided score in an iterative manner following~\citet{ye2024tfg}. For $\pi_{0.5}$, we adopt the official LoRA hyperparameters for training. Detailed LoRA configurations are illustrated in Table~\ref{tab:lora_config}.

\begin{table}[t]
\centering

\caption{LoRA configuration and target modules for RDT-1B and $\pi_{0.5}$.}
\begin{tabular}{lcc}
\toprule
\textbf{Parameter} & \textbf{RDT-1B} & $\boldsymbol{\pi_{0.5}}$ \\
\midrule
Target Modules
& \begin{tabular}[c]{@{}l@{}}
attn.qkv \\
attn.proj \\
cross\_attn.q \\
cross\_attn.kv \\
cross\_attn.proj
\end{tabular}
& \begin{tabular}[c]{@{}l@{}}
attention (attn) \\
feed forward network (ffn)
\end{tabular} \\
\midrule
LoRA rank $r$
& 32
& \begin{tabular}[c]{@{}l@{}}
16 (PaliGemma) \\
32 (Action Expert)
\end{tabular} \\
\midrule
LoRA scale $\alpha$
& 64
& \begin{tabular}[c]{@{}l@{}}
16 (PaliGemma) \\
32 (Action Expert)
\end{tabular} \\
\bottomrule
\end{tabular}
\label{tab:lora_config}
\end{table}

Overall, the hyper-parameter settings for training all these policies are list in Table~\ref{tab:hyperparams}

\begin{table}[htbp]
\small
\centering
\caption{Hyper-parameter Settings for \themethod. Each base policy utilizes a distinct sampling scheduler.}
\label{tab:hyperparams}
\renewcommand{\arraystretch}{1.15}
\begin{tabular}{lcccc}
\toprule
\textbf{Parameter} & \textbf{DP}~\citep{chi2025diffusion} & \textbf{DP3}~\citep{ze20243d} & \textbf{RDT-1B}~\citep{liurdt} & \textbf{$\mathbf{\pi_{0.5}}$}~\citep{pmlr-v305-black25a}\\
\midrule
horizon & 3 & 8 & -  & -\\
n.obs steps & 3 & 3 & 2  & 1 \\
n.action steps & 6 & 6 & 16 (64)  & 16 \\
\rowcolor{rowblue!40} Inference scheduler & DDPM & DDIM & DPMSolver++  & Flow Matching\\
\rowcolor{rowblue!40} denoising.pred type & eps & sample & sample & velocity \\
\rowcolor{rowblue!40} num.inference steps & 100 & 10 & 5  & 5\\
batch size & 128 & 256 & 32$\times$4  & 32$\times$8 \\
optimizer & AdamW & AdamW & AdamW & AdamW \\
optimizer.betas & [0.95, 0.999] & [0.95, 0.999] & [0.9, 0.999] & [0.9, 0.95] \\
training.use.ema & True & True & True & False \\
obs.use.head camera & True & False & True & True \\
obs.use.left camera & False & False & True & True \\
obs.use.right camera & False & False & True & True \\
obs.use.color point cloud & False & True & False & False \\
action.representation & abs & abs & abs & delta \\
\midrule
\textbf{Base policies training} \\
\midrule
optimizer.lr & 1.0e-4 & 1.0e-4 & 1.0e-4 & 2.5e-5 \\
optimizer.lr.scheduler & cosine & cosine & constant & cosine \\
optimizer.lr.warmup steps & 500 & 500 & 500 & 1,000 \\
training.num epochs & 300 & 3000 & - & - \\
training.num iterations & - & - & 10,000 & 45,000 \\
training.gpu & GeForce RTX 4090 & GeForce RTX 4090 & DGX B200 & DGX B200\\
\midrule
\textbf{Safety alignment distillation} \\
\midrule
rollout.num & 288 & 288 & 288 & 288\\
rollout.use.soft decay & False & True & False  & False\\
optimizer.lr & 1.0e-5 & 1.0e-5 & 1.0e-4 & 2.5e-4\\
optimizer.lr.scheduler & constant & constant & constant & constant \\
training.num epochs & 20 & 20 & 20  & 20 \\
training.num inner epochs & 100 & 20 & 25 & 25 \\
training.gpu & GeForce RTX 5090 & GeForce RTX 5090 & GeForce RTX 5090 & GeForce RTX 5090
\\
\bottomrule
\end{tabular}
\end{table}

\subsection{Implementation Details of Off-Policy Baselines}
\label{subsec:off_policy_implement}
\paragraph{Overview.} For off-policy alignment baselines, we implement alignment over the base policy of DP. In general, the training data can be divided into four types:
\begin{itemize}
    \item \textbf{Expert Demonstrations} which are collected with expert demonstrations crafted by RoboTwin Benchmark same as the pre-training stage. They are commonly unavailable in the post-training stage due to the data ownership and high cost for data collection in real world.
    \item \textbf{Guided Rollouts} which are collected by rollouts with Implicit Safe Teacher $\veps^*$ constructed from base policy $\veps_{\phi}$:
    \begin{equation}
        \veps^{*}(\va_t, \vs, t) \approx
        \begin{cases}
        \veps_{\phi}(\va_t, \vs, t) - \sum_{k=1}^m
    \Delta\eta_i\,\lambda_k
    \nabla_{\va_t} c_{k}(\vs,\va_{0\mid t})
    , & t < t_c, \\
        \veps_{\phi}(\va_t, \vs, t), & t \ge t_c.
        \end{cases}
\label{eq:tf_off_policy_teacher}
\end{equation}
    \item \textbf{Self Rollouts} which are collect by rollout with base policy $\veps_{\phi}$.
     \item \textbf{Intervened Rollouts}, which are collected using \emph{base policy probing}~\citep{xiao2025pld}. This procedure requires both the base policy $\veps_{\phi}$, which is used during the early probing phase, and the Implicit Safe Teacher $\veps^{*}$, which takes over to complete the task. Following \citet{xiao2025pld}, we adopt the default probing horizon set to $0.6\times$ the maximum task length.
\end{itemize}

To ensure fair head-to-head comparison, the training set of each baseline is composed of 1,000 demonstrations with a specific type to ensure full coverage of training scenarios as the on-policy baselines and our method. Meanwhile, we select adequate training epochs to ensure the same policy update steps as the on-policy method. Concretely, the final hyper-parameters for off-policy baselines are listed in Table~\ref{tab:off_policy_hyperparameters}, with the rest same as base policies training settings in Appx.~\ref{appx:ours_implement}. 

\begin{table}[!htbp]
\centering
\footnotesize
\caption{Training hyperparameters for all methods in Table~\ref{tab:off_policy_baselines}.}
\renewcommand{\arraystretch}{1.15}
\begin{tabular}{l c c c c c}
\toprule
Method
& Data Type
& Learning Rate
& LR Scheduler
& Warmup Steps
& Epochs \\
\midrule

\multicolumn{6}{l}{\footnotesize \textcolor{gray}{\textit{Imitation Learning}}} \\

\quad Guided Rollouts
& Guided Rollouts
& \multirow{2}{*}{$1{\times}10^{-4}$}
& \multirow{2}{*}{Cosine}
& \multirow{2}{*}{500}
& \multirow{2}{*}{600} \\

\quad PLD
& Intervened Rollouts &  &  &  &  \\

\midrule
\multicolumn{6}{l}{\footnotesize \textcolor{gray}{\textit{Reinforcement Learning}}} \\

\quad iDQL
& Self \& Guided Rollouts 
& $1{\times}10^{-4}$
& Cosine
& 500
& 600 \\

\midrule
\multicolumn{6}{l}{\footnotesize \textcolor{gray}{\textit{Distillation}}} \\

\quad Expert Rollouts
& Expert Rollouts
& \multirow{3}{*}{$1{\times}10^{-5}$}
& \multirow{3}{*}{Constant}
& \multirow{3}{*}{500}
& \multirow{3}{*}{600} \\

\quad Self Rollouts
&  Self Rollouts &  &  &  &  \\

\quad Guided Rollouts
& Guided Rollouts &  &  &  &  \\

\bottomrule
\end{tabular}
\label{tab:off_policy_hyperparameters}
\end{table}


\paragraph{Probe, Learn, Distill (PLD).} As an IL-based method in essence~\citep{xiao2025pld}, it is implemented by conducting behavior cloning over intervened rollouts.

\paragraph{Distillation.} For distillation-based baselines, we optimize the objective in Eq.~(\ref{eq:distill_obj}) with the Teacher in Eq.~(\ref{eq:tf_off_policy_teacher}) similar to~\citet{meng2023distillation,ying2025exploratory} on a diverse spectrum of rollouts. Distillation is performed entirely offline without environment interaction.

\paragraph{Implicit Diffusion Q-learning (iDQL).}
We implement iDQL~\citep{hansen2023idql} on top of the same DP. The training data consists of a mixture of 1,000 \emph{Self Rollouts} collected from the base policy $\varepsilon_{\phi}$ and 1,000 \emph{Guided Rollouts}, which are used to expand the support of the base policy distribution. 
For action selection, we follow the official implementation and adopt deterministic sampling, where a batch of candidate actions is generated and the action with the lowest cost is selected for execution with a sampling batch size of 32.


\subsection{Implementation of On-Policy Baselines.}
\label{subsec:on_policy_implement}

\paragraph{Overview.}
For on-policy alignment baselines, policy updates are performed using data collected online from the current policy, optionally augmented with interventions from an implicit safe teacher in the online variant of PLD. All on-policy baselines are initialized from the same pre-trained DP base policy to ensure a controlled and fair comparison. To align the overall optimization budget with off-policy methods, we fix both the total number of environment interaction steps and the number of policy gradient updates across all baselines. For methods that require a Q-function, we replace it with the negative aggregated cost function as a ground-truth supervision signal, facilitating the training of the baseline while ensuring a fair head-to-head comparison. State–action advantages are estimated using the REINFORCE estimator~\citep{williams1992simple}; specifically, the advantage is computed as the negative aggregated safety cost of a given state–action pair, minus the mean safety cost of the training data in the current iteration. This formulation has been shown to improve numerical stability~\citep{ahmadian2024back} without requiring the learning of an explicit Q-function or value model. Unless otherwise specified, the optimizer, network architecture, diffusion horizon, and all other hyperparameters are kept identical to those used in the implementation of \themethod on the DP policy (Table~\ref{tab:hyperparams}).

\paragraph{Training protocol.}
For all on-policy baselines, rollouts and updates are interleaved in fixed-length iterations. The total number of environment steps, update epochs per iteration, and overall training steps are matched across methods. Hyperparameters for each on-policy baseline are the same as ones used in the implementations of our method (Table~\ref{tab:hyperparams}) if not otherwise stated.

\paragraph{Online Rejection Fine-Tuning (Online RFT).}
Online RFT is implemented as an IL-style on-policy baseline that repeatedly alternates between policy rollout and supervised fine-tuning. At each iteration, the current policy $\varepsilon_{\theta}$ is used to collect fresh rollouts. The policy is updated via behavior cloning only on the successful and safe episodes with all the failed episodes dropped out. This procedure can be viewed as on-policy supervised fine-tuning with continuously refreshed data, without explicit value estimation or reward optimization, but requires labor for trajectory annotation.

\paragraph{Online Probe, Learn, Distill (PLD$_{\text{online}}$ / DAgger).}
Online-PLD extends PLD to the on-policy setting and is closely related to Dataset Aggregation (DAgger)~\citep{ross2011reduction}. During each iteration, the base policy $\varepsilon_{\phi}$ is rolled out in the environment with base policy probing~\citep{xiao2025pld}, and then corrected by the Implicit Safe Teacher $\varepsilon^*_{\phi}$. The resulting successful intervened rollouts are aggregated with previously collected data, and the policy is updated via behavior cloning on the aggregated dataset. We adopt the same probing horizon as in the implementation of the original PLD.

\paragraph{Advantage-Weighted Regression (AWR).}
Advantage-Weighted Regression (AWR)~\citep{peng2019advantage} is an RL method that optimizes a weighted behavior cloning objective using off-policy rollouts and has been widely adopted in the RL of diffusion policies~\citep{zhengsafe,intelligence2025pi}. We extend AWR to an on-policy variant with multiple training iterations to further improve performance. The policy is then updated by regressing toward sampled actions weighted by the exponentiated advantage, following the standard AWR formulation. This procedure enables stable on-policy improvement without explicit policy gradient estimation. Following the hyperparameters of the official implementation, we fix the Lagrange multiplier to $\beta_{\mathrm{awr}} = 0.05$ and apply weight clipping with a threshold of $w_{\mathrm{max}} = 20$ to mitigate exploding weights.

\paragraph{Q-Score Matching (QSM).}
QSM incorporates learned action-value guidance into diffusion policy training~\citep{psenka2024learning}. The diffusion policy is then optimized via score matching with an additional guidance term proportional to the gradient of the Q-function with respect to the action. This encourages the denoising process to favor high-value actions while remaining close to the behavior induced by the current policy. In our implementation, we  set the weight of the Jacobian matrix of Q-function $\alpha_{\mathrm{qsm}} = 6.0$ to match the numerical range of the predicted score to ensure training stability.

\paragraph{Model-free Reinforcement Learning with Diffusion Policy (DIPO).}
We implement DIPO as an online reinforcement learning baseline that performs policy improvement via action-gradient updates rather than conventional policy gradients, following Algorithm~2 in~\citet{yang2023policy} with a replaced Q-function. To ensure a fair head-to-head comparison, we use the same action-gradient weighting coefficient as in our method.

\paragraph{Proximal Policy Optimization (PPO).}
We implement a variant of Proximal Policy Optimization (PPO)~\citep{schulman2017proximal} related to DPPO~\citep{ren2024diffusion}, SPO~\citep{xie2024simple}; we maintain the default hyper-parameters following the recent implementation by~\citep{intelligence2025pi}. Specifically, we leave out the term for the autoregressive policy component in the training objective. Log-likelihood estimation mirrors the diffusion likelihood bound by~\citet{mcallister2025flow,liu2025flow}.

\section{Implementation Details for Real World Evaluation}
\label{sec:real_implement}

\subsection{Environment and Hardware Setup}

All experiments use the Cobot Magic from AgileX Robotics (\url{https://global.agilex.ai/products/cobot-magic}) with the configuration of Mobile ALOHA~\citep{fu2025mobile}, featuring two 6-DoF PiPER arms (\url{https://global.agilex.ai/products/piper}) as shown in Fig.~\ref{fig:aloha}. Scene RGB perception is provided by a head camera and two wrist-mounted cameras of the Intel D435i camera. The inferences of real-world experiments are performed on a single RTX 4090 GPU. The technical specifications of the selected model are listed in Table~\ref{tab:tech_spec}. To access privileged state information for cost function computation, we use an Intel L515 as an exterior camera for high-fidelity point clouds. 
It is worth noting that we used the
``mobile'' ALOHA only to facilitate transportation and do not use its autonomous mobility feature during any training or inference stage. Our tasks are still static manipulation tasks.

\begin{figure}[!ht]
    \centering
    \begin{subfigure}[t]{0.71\linewidth}
        \centering
        \includegraphics[width=\linewidth]{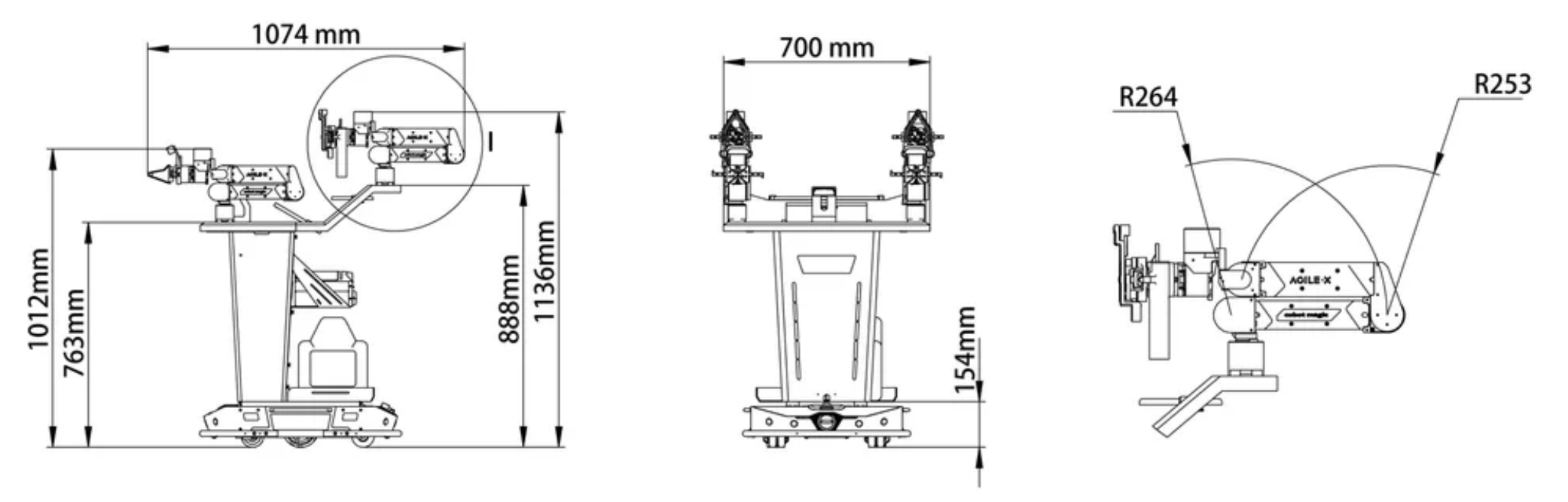}
        \caption{Cobot Magic platform.}
        \label{fig:mobile_aloha}
    \end{subfigure}
    \hfill
    \begin{subfigure}[t]{0.28\linewidth}
        \centering
        \includegraphics[width=\linewidth]{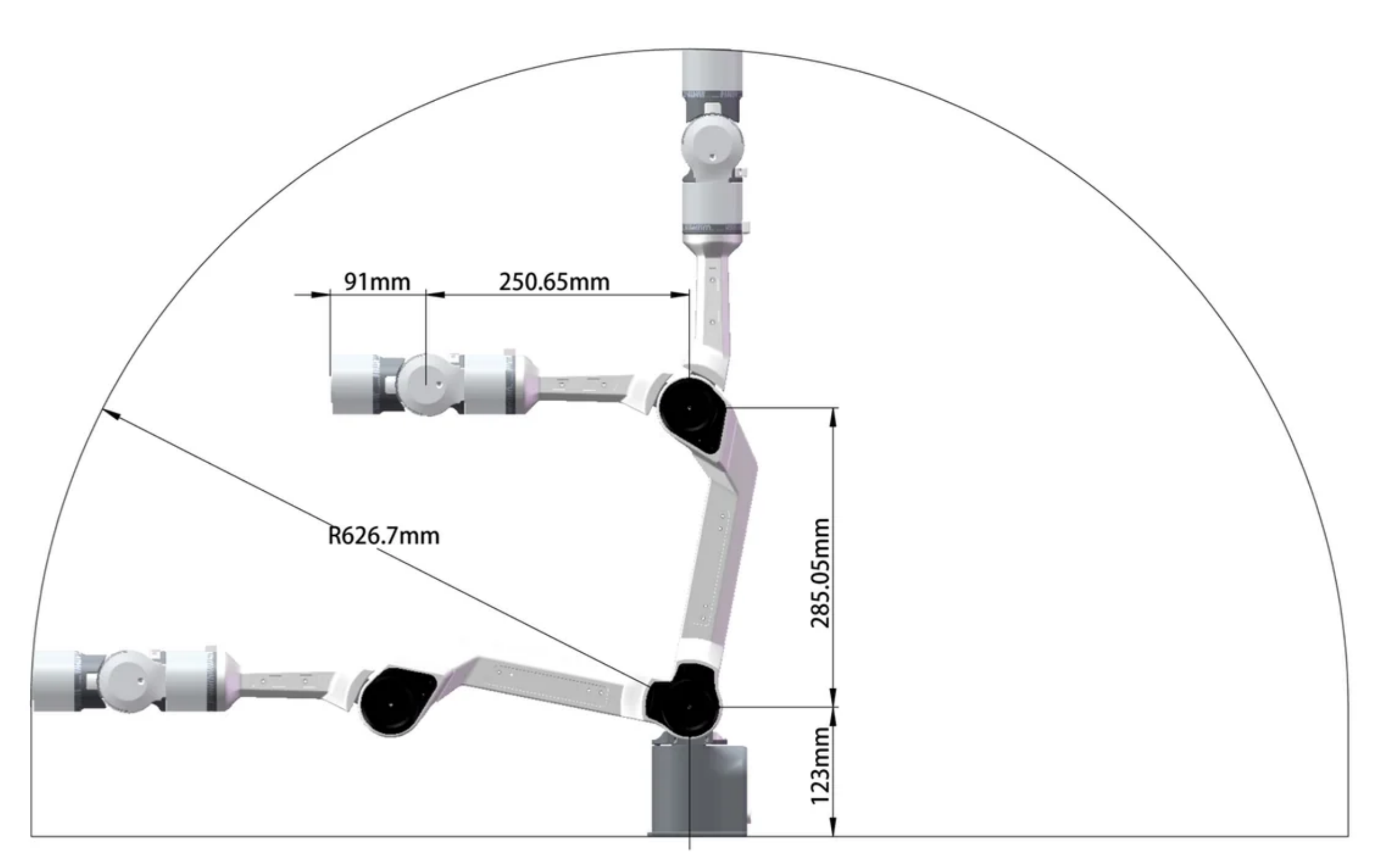}
        \caption{PiPER 6-DoF robotic arm.}
        \label{fig:piper}
    \end{subfigure}
    \caption{Cobot Magic mobile manipulator with the Mobile ALOHA configuration and dual PiPER arms.}
    \label{fig:aloha}
\end{figure}

    \begin{table}[!ht]
        
        \centering
        \caption{\textbf{Technical specifications of AgileX Cobot Magic.}}
        \begin{tabular}{l c}
            \toprule
            \textbf{Parameter} & \textbf{Value} \\
            \midrule
            DoF & 6 $\times$ 2 = 12 \\
            Payload & 1.5~kg \\
            Arm weight & 4.2~kg \\ 
            Arm repeatability & $\pm$0.1~mm \\
            Arm reach & 626~mm \\
            Material & \makecell[c]{Aluminum alloy body\\\& Polymer shell} \\
            Arm working radius &  626.7 mm \\ 
            Gripper range & 0-100~mm \\
            \bottomrule
        \end{tabular}
        \label{tab:tech_spec}
    \end{table}



\subsection{Task Description}
\label{subsec:real_world_task}

As illustrated in Fig.~\ref{fig:real_world_task}, we evaluate \themethod on four real-world manipulation tasks, namely pour water, nail insertion, transfer egg, and GPU assembly. These tasks are deliberately chosen for their safety-critical attributes. Each task involves physical interactions where failures could lead to material damage, equipment harm, or hazardous situations, thereby demanding high reliability and precise control from the robotic system. For instance, the manipulation of liquids and fragile objects—such as water and eggs—requires precise handling. Improper grasping or misalignment can lead to eggshell rupture and content leakage. 
In practice, we do not perform real liquid pouring, but instead employ a capped-bottle setup as a proxy pouring task. This design substantially simplifies experimental reset and improves evaluation consistency, since real liquid pouring would introduce significant overhead and uncontrolled variability across trials, including bottle refilling, spill handling, and workspace cleaning.  Accordingly, the constraint in this task is defined by whether the bottle is properly aligned above the cup during the pouring phase, which serves as a geometric proxy for preventing potential spillage.
Among all evaluated tasks, GPU Assembly (RTX 2080Ti) is particularly safety-critical which involves handling delicate, expensive computer components. Misplacement or applying sideways force during placement could damage its electric components, leading to significant financial cost and functional failure. By focusing on such tasks, our evaluation directly assesses a robotic policy's ability to perform reliable, precise, and physically-aware manipulation under constraints where errors have clear and undesirable consequences.

The physical constraints activated per task are illustrated in Table~\ref{tab:physical_constraints_real_world}.

\begin{table*}[h]
    \centering
    \caption{Activated physical constraints for each task in real world evaluation.}
    \newcommand{\tickmark}{\ding{51}}
    \renewcommand{\arraystretch}{1.15}
    \begin{tabular}{c c c c}
    \toprule
    \textbf{Task} &\textbf{Gripper Poking Cost} & \textbf{Behavior Alignment Cost} & \textbf{Gripper Rotation Cost} \\
    \midrule
    Transfer Egg & \tickmark &  \\
    Nail Insertion & & \tickmark & \\
    Pour Water & \tickmark & \tickmark & \\
    GPU Assembly & \tickmark & \tickmark & \tickmark \\
    \bottomrule
    \end{tabular}
    
    \label{tab:physical_constraints_real_world}
\end{table*}

\begin{figure*}[p]
  \centering
  \includegraphics[width=1.0\linewidth]{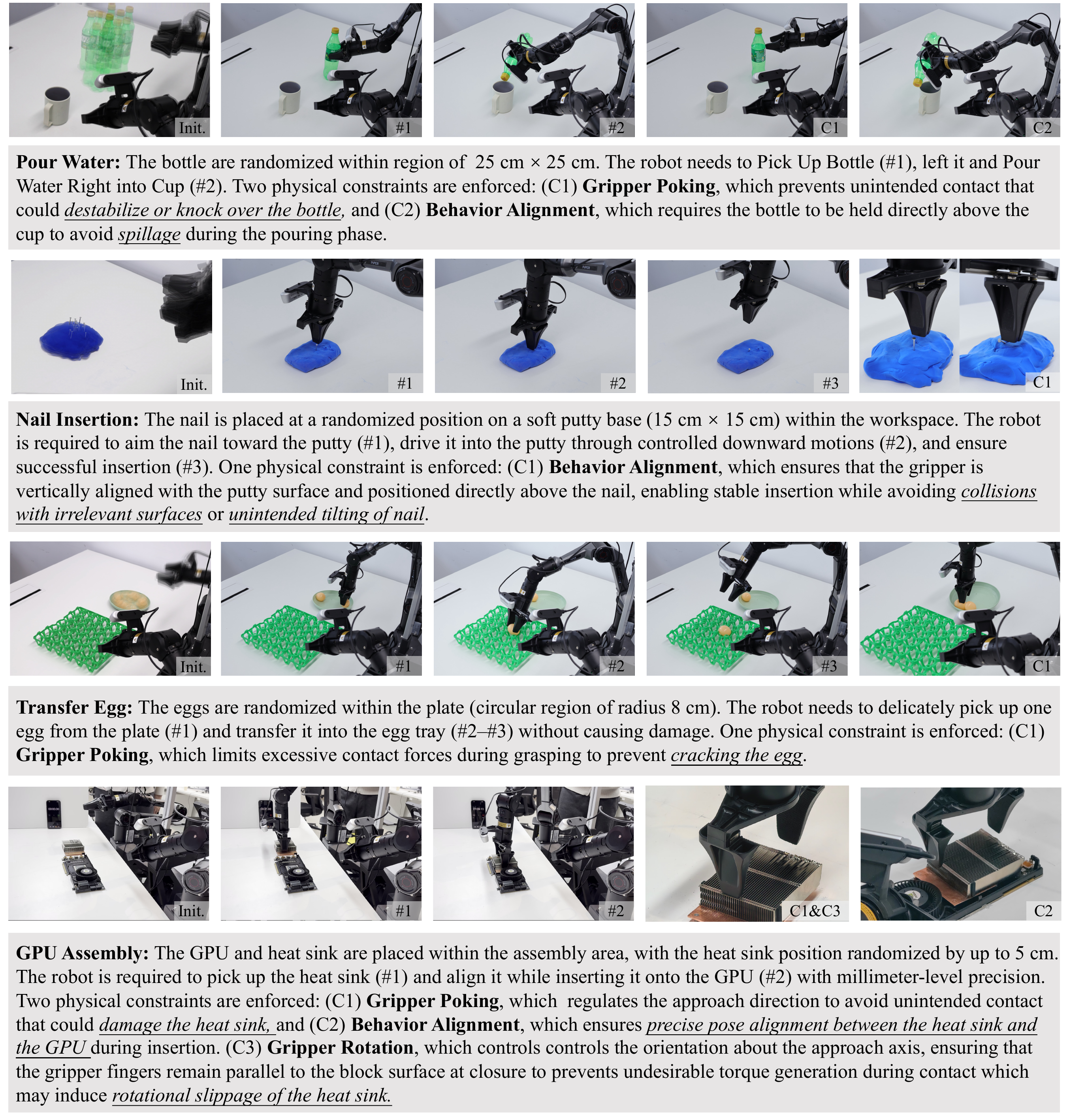}
  \caption{\textbf{Task definitions and visualizations.} For 4 safety-critical tasks, we describe their randomization, definitions of each sub-task, and corresponding physical constraints.
  }
  \label{fig:real_world_task}
\end{figure*}

\subsection{Evaluation Rubric}

We follow prior real-world evaluations in \cite{black2024pi_0,pmlr-v305-black25a} and use the average percentage of achieved points as our real-world metric for task-execution. For each real-world manipulation task, we evaluate \themethod over 25 trials with randomized initial object placements to assess robustness. The task-specific scoring criteria are as follows:
\begin{itemize}
    \item \textbf{Pour Water:} The task starts with a bottle and a cup on the table. The goal is to pour water from the bottle into the cup.\\
    +1 Grasping the bottle.\\
    +1 Lifting the bottle without dropping it.\\
    +1 Tilting the bottle by more than $60^{\circ}$ to initiate pouring.\\
    +1 Aligning the bottle mouth with the cup rim.\\
    \textit{Maximum score: 4 points.}
    
    \item \textbf{Nail Insertion:} The task starts with a nail placed on a soft putty base on the table. The robot must use a gripper finger to press the nail downward into the putty.\\
    +1 Aligning the gripper finger with the nail head.\\
    +1 Pressing the nail vertically into the putty.\\
    \textit{Maximum score: 2 points.}

    \item \textbf{Transfer egg:} The task starts with eggs on a plate and an egg tray on the table. The goal is to transfer an egg from the plate to the tray. \\
    +1 Grasping an egg from the plate.\\
    +1 Lifting and moving the egg to the tray without dropping it. \\
    +1 Releasing the egg into the tray.\\
    \textit{Maximum score: 3 points.}

    \item \textbf{GPU assembly:} The task starts with a GPU and a heat sink on the table. The goal is to pick up the heat sink and place it onto the GPU.\\
    +1 Grasping the heat sink.\\
    +1 Lifting and moving the heat sink above the GPU without dropping it.\\
    +1 Placing the heat sink at the aligned position on the GPU.\\
    +1 Releasing the heat sink.\\
    \textit{Maximum score: 4 points.}

\end{itemize}

We use Safe Rate to quantify safety during real-world evaluation. It is worth noting that success and safety are correlated to some extent, yet remain conceptually distinct: success measures task completion, while safety avoids predefined unsafe interactions. For instance, a rollout may complete a task with unsafe contact (e.g., poking an egg before successfully grasping it), or remain safe but fail (e.g., not reaching any object).
Following Appendix~\ref{appx:simulation_rubric}, we define three generic hazardous events including poking, falling, and toppling that capture the most common unsafe outcomes in manipulation, and we manually annotate their occurrences during real-world rollouts. A trial is deemed safe if none of these events are triggered throughout the episode. Accordingly, human annotators assess both safety and task success for each rollout based on the pre-defined rubrics, following common evaluation protocols adopted in concurrent real-world manipulation studies~\citep{yakefu2025robochallenge}, thereby ensuring consistent and objective evaluation. We then compute the safe rate as the fraction of safe trials averaged over the evaluation episodes for each task.

\subsection{Implementation of \themethod}

\paragraph{Model.} For real world experiments, we adopt the RDT2~\citep{liu2026rdt2} with its official implementation at \url{https://github.com/thu-ml/RDT2/tree/main} and replace the Qwen2.5-VL~\citep{bai2025qwen2} with a DINOv3 vision encoder~\citep{simeoni2025dinov3} to improve the performance on a single real world task~\citep{bi2025h}. Following the Transformer Backbone, all adaptors for multi-modal inputs use Sigmoid-Weighted Linear Unit (SiLU) activation.

\begin{table}[htbp]
\centering
\caption{Model architecture and configuration summary.}
\label{tab:real_model_arch}
\begin{tabular}{ll}
\toprule
\textbf{Parameter} & \textbf{Value} \\
\midrule
Image history size & 1 \\
Action chunk size & 24 \\
Action dimension & 7$\times$2=14 \\
State dimension & 7$\times$2=14 \\
Action representation & absolute joint angle \\
System frequency & 30 Hz \\
\midrule
\multicolumn{2}{l}{\textcolor{gray}{\textit{Input Adaptors}}} \\
Image adaptor & MLP (2 layers, SiLU) \\
Action adaptor & MLP (3 layers, SiLU) \\
State adaptor & MLP (3 layers, SiLU) \\
\midrule
\multicolumn{2}{l}{\textcolor{gray}{\textit{RDT Backbone}}} \\
Hidden size & 1024 \\
Layers & 14 \\
Attention heads & 16 \\
KV heads (GQA) & 8 \\
Register tokens & 4 \\
FFN multiple & 256 \\
Normalization $\epsilon$ & $1\times10^{-5}$ \\
\midrule
\multicolumn{2}{l}{\textcolor{gray}{\textit{Diffusion Modeling}}} \\
Diffusion Parameterization & Flow matching \\
Inference steps & 5 \\
\midrule
\textbf{Total parameters} & \textbf{498.05M} \\
\bottomrule
\end{tabular}
\end{table}

\paragraph{Training Configuration of Base Policy.}

For each task, we collect 200 demonstrations with tele-operation to train a corresponding model from scratch. Similar to the implementation of RDT-1B, the vision encoder is frozen during training. The hyper-parameters for training are listed in Table~\ref{tab:hyperparams_real}.

\begin{table}[htbp]
\centering
\caption{Hyper-parameter settings for  real world evaluation.}
\label{tab:hyperparams_real}
\renewcommand{\arraystretch}{1.15}
\begin{tabular}{ll}
\toprule
\textbf{Parameter} & \textbf{Value} \\
\midrule

Optimizer & AdamW \\
Optimizer betas & [0.9, 0.9999] \\
\midrule
\multicolumn{2}{l}{\textbf{Base policy training}} \\
\midrule
batch size & $64 \times 8$ \\
Learning rate & $1.0 \times 10^{-4}$ \\
Learning-rate scheduler & Constant \\
Warmup steps & 500 \\
Training iterations & 50,000 \\
Training GPU & DGX B200 \\
\midrule
\multicolumn{2}{l}{\textbf{Safety alignment distillation}} \\
\midrule
batch size & $32 \times 8$ \\
Number of rollouts & 40 \\
Learning rate & $1.0 \times 10^{-4}$ \\
Learning-rate scheduler & Constant \\
Training epochs & 8 \\
Inner epochs & 25 \\
Training GPU & DGX B200 \\
\bottomrule
\end{tabular}
\end{table}

\paragraph{Implementation of Safety Alignment.} Following the implementation of aligning RDT-1B in simulation, we use LoRA to fine-tune the Transformer backbone with the same configuration as RDT-1B for simulation, and other hyperparameters are elaborated in Table~\ref{tab:hyperparams_real} as well.

\section{Additional Results}
\label{sec:addtional_results}

\subsection{Robustness Against Compromised Privileged State Information}
\label{appx:robustness_compromised_state}

Since \themethod relies on privileged state information to compute safety costs and their gradients during distillation, it is necessary to assess how sensitive alignment is to corruption in these inputs, as would occur with imperfect upstream perception. To this end, we evaluate the robustness of \themethod to errors in the revealed privileged state information used to construct safety costs and their gradients. Specifically, we inject zero-mean Gaussian noise into privileged states during post-training, where the noise scale is defined as the standard deviation of the added noise normalized by the norm of the original privileged states. As shown in Fig.~\ref{fig:robustness_noise}, we find that \themethod remains effective under moderate corruption, sustaining strong success and safety performance up to a noise magnitude of 12\% of the original state norm (12\% corresponds to approximately $\pm 2$ cm deviations), which suggests that \themethod does not require perfect perception to be useful.

\begin{figure}[t]
    \centering
    \includegraphics[width=1.0\linewidth]{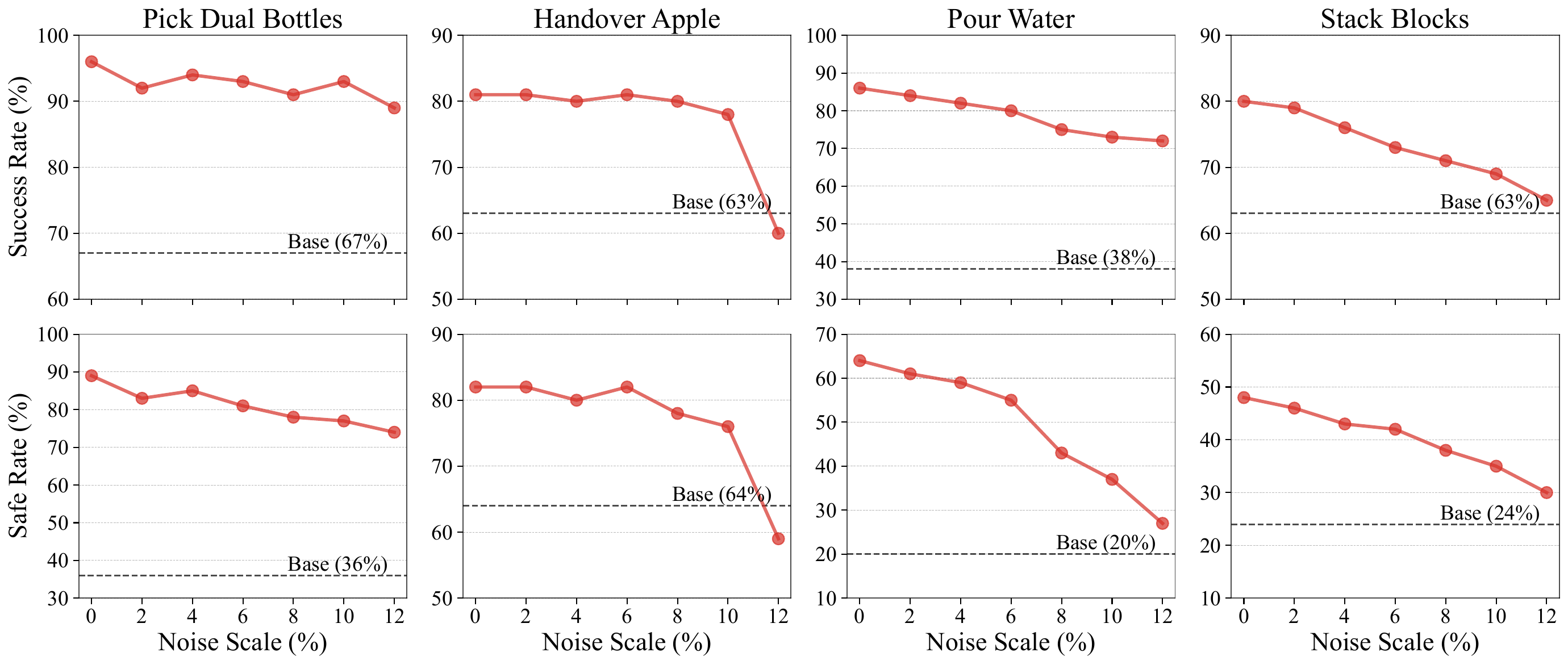}
    \caption{\textbf{Robustness to noisy privileged state information.} Success rates (top) and safe rates (bottom) under increasing Gaussian noise injected into the privileged state information used for safety supervision. The noise scale is defined as the standard deviation of the injected noise normalized by the norm of the original privileged states. The horizontal dashed lines denote the corresponding pretrained base policy performance without post-training alignment. }
    \label{fig:robustness_noise}
\end{figure}

\subsection{Qualitative Results for Simulation Evaluation}
\label{subsec:qual_results_sim}

We further present qualitative results for three simulated tasks in Fig.~\ref{fig:simulation_unsafe}. The base DP policy (top) exhibits typical safety violations, including gripper poking, behavior misalignment, and incorrect gripper rotation, which often lead to failures. After post-training with \themethod (bottom), these failure modes are largely corrected, yielding safer interactions and higher task completion as demonstrated in Table~\ref{tab:sr_dr_posttrain}.

\begin{figure}[t]
    \centering
    \includegraphics[width=0.5\linewidth]{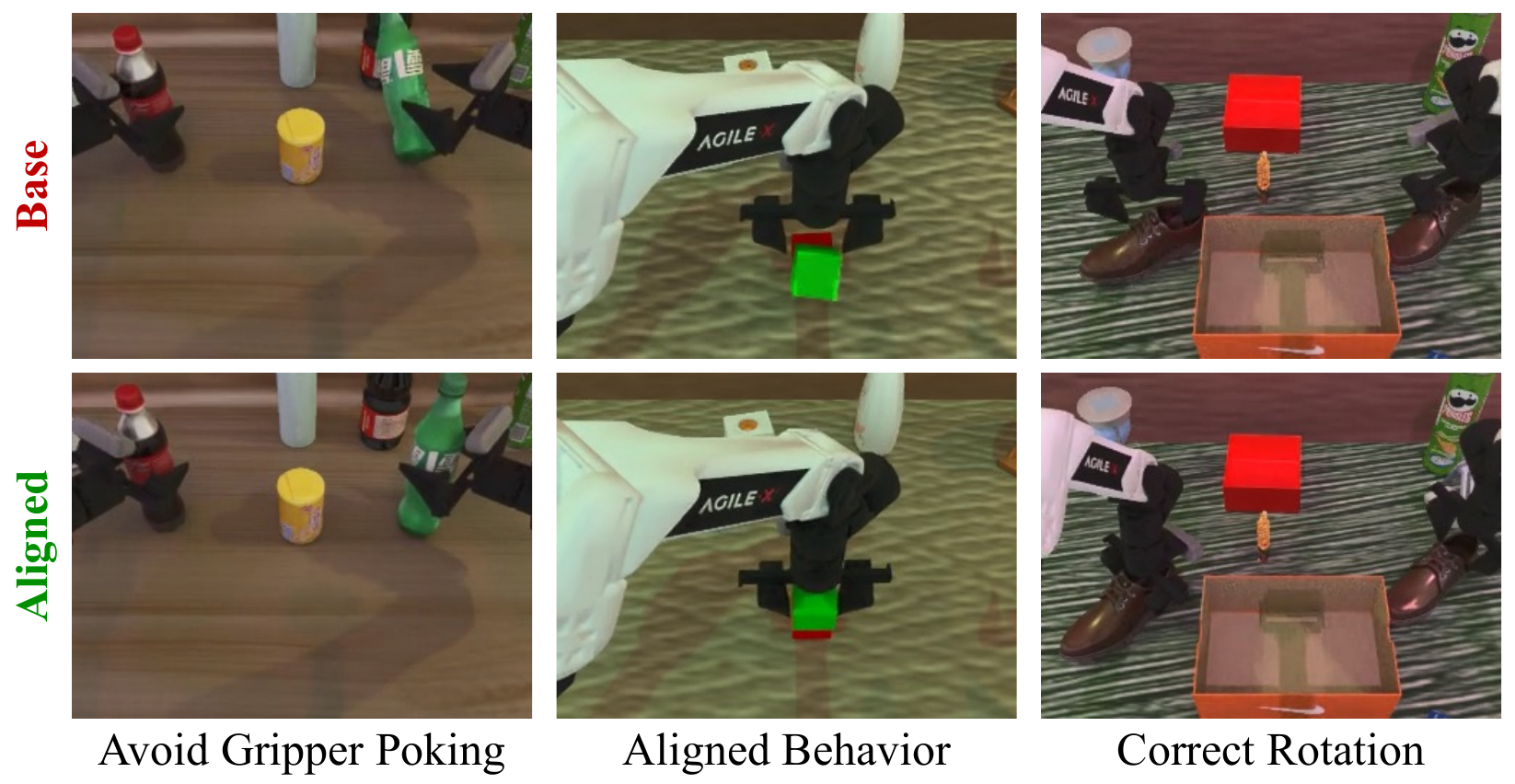}
    \caption{\textbf{Qualitative results in simulation.} Base policy (top) vs. Aligned Policy (bottom) on \emph{Pick Dual Bottles}, \emph{Stack Blocks}, and \emph{Place Dual Shoes} (left-to-right). \themethod reduces unsafe behaviors including poking, misalignment in both position and gripper pose.}
    \label{fig:simulation_unsafe}
\end{figure}

\subsection{Comparison to Inference-Time Baselines.} To enable more rigorous evaluations, we additionally compare PACT with JM2D~\citep{jung2025joint}, an inference-time guardrail method designed to improve task performance while maintaining constraint satisfaction through mutual compatibility modeling. As shown in Table~\ref{tab:inference_time_baselines}, PACT consistently achieves higher Success and Safe Rates across all tasks, suggesting that directly distilling safety-aware behaviors into the policy yields stronger and more stable safety-performance trade-offs than relying solely on test-time action filtering or corrections.

\begin{table}[htbp]
\centering
\small
\caption{\textbf{Performance comparison with Inference-time baselines.} Both the Success Rate (Succ.) and Safe Rate (Safe) are reported.}
\renewcommand{\arraystretch}{1.05}
\begin{tabular}{l cc cc cc cc }
\toprule
\multirow{2}{*}{\quad Method}
& \multicolumn{2}{c}{Pick Dual Bottles}
& \multicolumn{2}{c}{Handover Apple}
& \multicolumn{2}{c}{Pour Water}
& \multicolumn{2}{c}{Stack Blocks} 
\\
\cmidrule(lr){2-3}\cmidrule(lr){4-5}\cmidrule(lr){6-7}\cmidrule(lr){8-9}
& Succ. & Safe
& Succ. & Safe
& Succ. & Safe
& Succ. & Safe\\
\midrule

\multicolumn{9}{l}{\footnotesize \textcolor{gray}{\textit{Imitation Learning}}}\\

\quad Base
& 67\% & 36\%
& 63\% & 64\%
& 38\% & 20\%
& 63\% & 24\%
\\

\midrule
\multicolumn{9}{l}{\footnotesize \textcolor{gray}{\textit{Inference-time Guardrails}}}\\

\quad JM2D
& 87\% & 66\%
& 65\% & 68\%
& 75\% & 40\%
& 76\% & 27\% 
\\

\midrule
\multicolumn{9}{l}{\footnotesize \textcolor{gray}{\textit{Distillation}}}\\

\rowcolor{rowblue!40} \quad Ours
& \textbf{96\%} & \textbf{89\%}
& \textbf{81\%} & \textbf{82\%}
& \textbf{86\%} & \textbf{64\%}
& \textbf{80\%} & \textbf{48\%} 
\\
\bottomrule
\end{tabular}
\label{tab:inference_time_baselines}
\end{table}

\subsection{Effect of Base Policy Quality on Post-Training Gains}
\label{app:improvement_magnitude}
The magnitude of improvement depends on the competence of the initial policy. We observe smaller absolute gains for weaker base policies (e.g., DP3), whose failures are often dominated by missing core skills, leaving limited scope for safety-aware refinement. In contrast, stronger policies (e.g., RDT-1B) benefit more consistently, suggesting that \themethod primarily serves as a post-training safety and reliability enhancer that is most effective when the base policy already possesses reasonable task understanding, and the remaining errors stem from unsafe dynamics or marginal failure cases.
Improvements are also more pronounced on difficult, precision-critical tasks (e.g., Pour Water and Stack Blocks), where minor control errors can readily trigger safety violations and thus strongly affect both Succ. and Safe. Conversely, for relatively simple or near-saturated tasks (e.g., Handover Apple), baseline performance is already high and \themethod yields only marginal gains. Notably, in tasks with extremely tight safety margins (e.g., stacking), Safe may remain a bottleneck even as Succ. improves, reflecting the strictness of the safety criteria rather than incomplete task execution.

\subsection{Results of Collapsed On-Policy Baselines}
\label{subsec:collapsed_results}

We also attempted to apply representative on-policy baselines, AWR~\cite{peng2019advantage}, RFT~\citep{gilks1992adaptive}, and QSM~\cite{psenka2024learning} to our setting. However, all three methods exhibited pronounced training instability despite extra efforts for training stability introduced, including soft updates~\citep{haarnoja2018soft} and a KL regularizer~\citep{schulman2017proximal}. 
As shown in Fig.~\ref{fig:collapsed_curve}, they may reach moderate success rates in the initial iterations, yet performance quickly degrades with continued training and ultimately collapses to low (and in some cases near-zero) success. Consequently, we omit these on-policy baselines from the main comparisons since their collapse precludes a meaningful evaluation.

\begin{figure}
    \centering
    \begin{minipage}{0.3\linewidth}
        \centering
        \includegraphics[width=\linewidth]{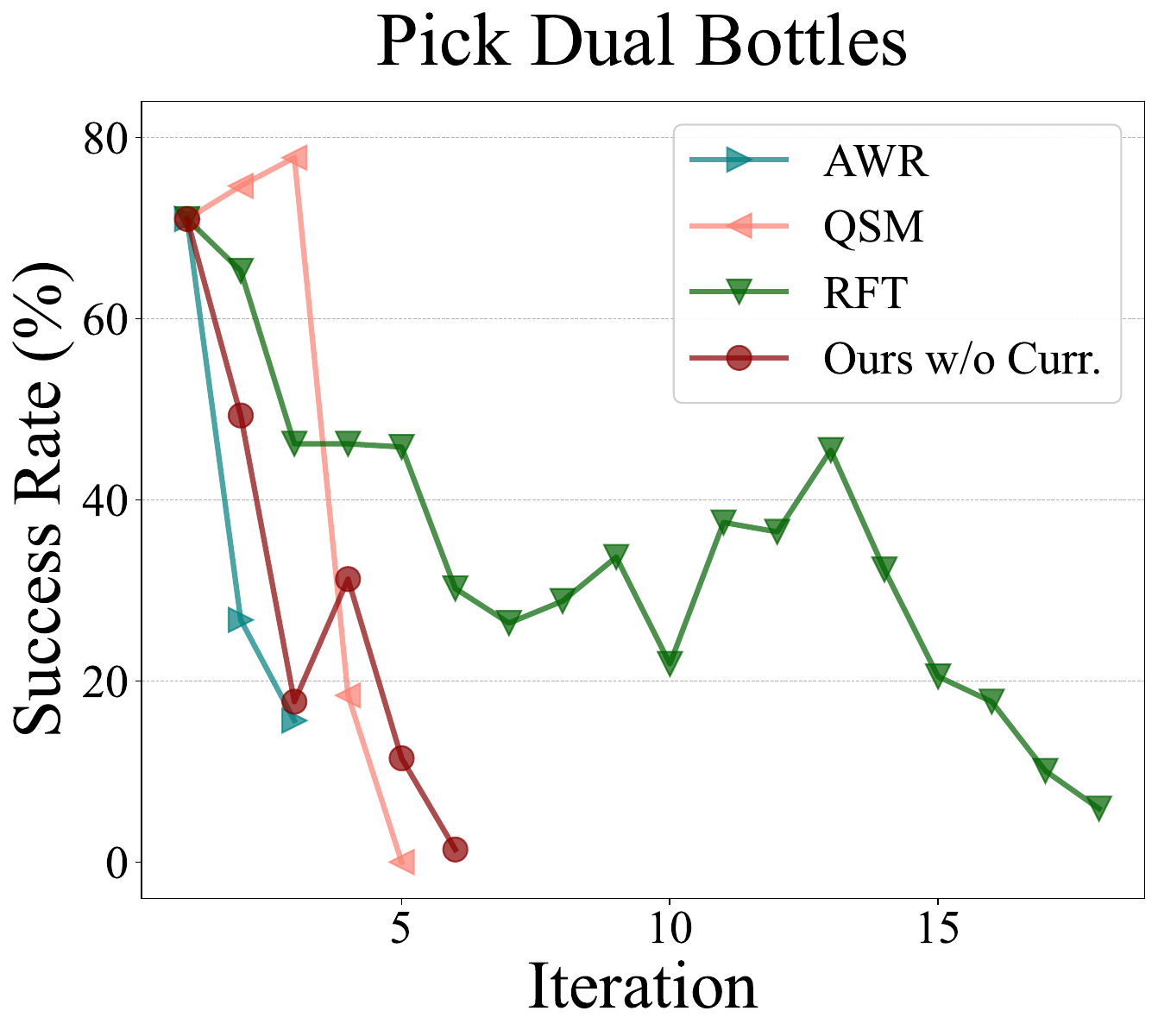}
    \end{minipage}
    \begin{minipage}{0.3\linewidth}
        \centering
        \includegraphics[width=\linewidth]{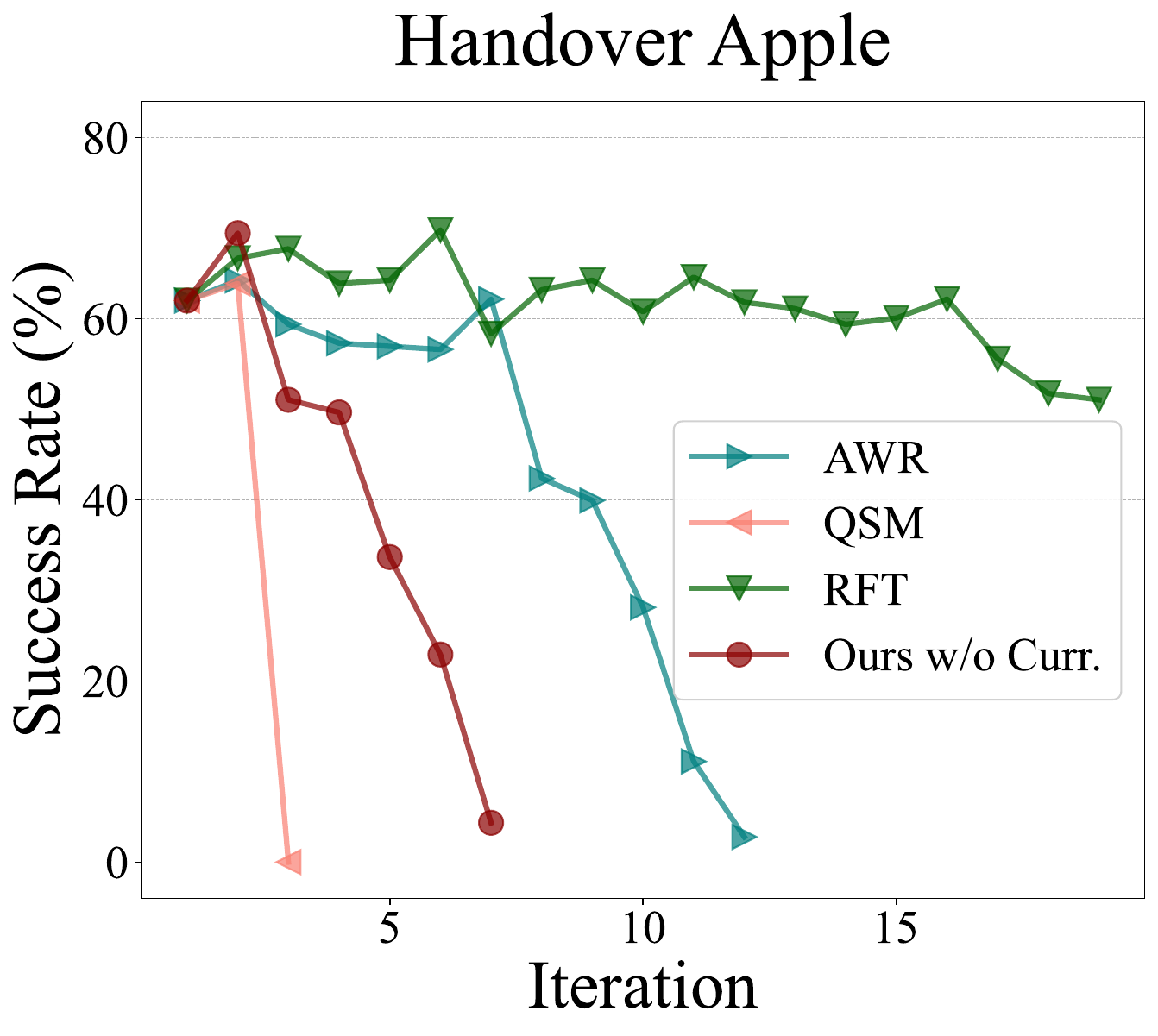}
    \end{minipage}
    \caption{\textbf{Success rate (\%) of collapsed on-policy baselines.} Success rate versus training iteration for three on-policy methods (AWR, QSM, and RFT) and our ablation \textit{Ours w/o Curr.} on \textit{Pick Dual Bottles} (left) and \textit{Handover Apple} (right). Although some methods achieve moderate success in early iterations, performance is highly unstable: AWR and QSM rapidly collapse to near-zero success, and RFT shows substantial degradation over training (especially on \textit{Pick Dual Bottles}), preventing a meaningful comparison in the main results. Meanwhile, \textit{Ours w/o Curr.} also degrades and eventually gets near zero success.
    }
    \label{fig:collapsed_curve}
\end{figure}

We further analyze the collapse of these on-policy baselines and find that it is consistent with a self-reinforcing distribution-drift failure mode. AWR and RFT are weighted imitation learning methods; RFT can be viewed as a binarized variant of AWR that retains samples above an advantage threshold. In our safety-alignment regime, advantages estimated from trajectory costs are noisy, so exponentiation (AWR) or hard selection (RFT) induces severe weight degeneracy and sharply reduces the effective sample size. As a result, updates concentrate on a small, non-representative subset of rollouts, accelerating support shrinkage and triggering catastrophic forgetting of useful skills. Through closed-loop rollouts, this policy shift compounds over iterations and quickly drives the state distribution into safe-but-unproductive regions, where task-utility supervision is unavailable to recover competence, rendering the degradation effectively irreversible. By comparison, $\mathrm{PLD}_{\mathrm{online}}$ mitigates collapse by introducing intervention of the implicit safe teacher at the end of each rollout, and provides external safe and useful behaviors as a supervision signal to preserve task-relevant behavior.

Meanwhile, QSM exhibits a related instability: its learned $Q$-function is queried on noisy diffusion-time inputs, yielding high-variance gradients and numerically brittle supervision. Moreover, QSM can be interpreted as a special case of direct distillation with a normalized teacher score. Consider the normalized distillation objective
    \begin{equation}
\mathcal{J}(\theta) = 
\mathbb{E}_{\,t,\veps, (\vs, \va) \sim d^{\pi_{\theta}}}
\bigg\|
\veps_{\theta}(\va_t,\vs,t) - 
\bigg(\veps_{\phi}(\va_t,\vs,t)
- \sum_{k=1}^m
\lambda_k
\nabla_{\va_t} c_{k,t}(\vs,\va_t) / \sigma_t\bigg) / \sum_{k=1}^{m}\lambda_k 
\bigg\|^2
\end{equation}
When the multipliers are uniformly large, $\lambda_k\rightarrow\infty$ with a comparable scale, the base-score term vanishes after normalization and the target reduces to a (weighted) average constraint-gradient direction, giving
    \begin{equation}
\begin{aligned}
\mathcal{J}(\theta) &= 
\mathbb{E}_{\,t,\veps, (\vs, \va) \sim d^{\pi_{\theta}}}
\bigg\|
\veps_{\theta}(\va_t,\vs,t) - 
\bigg(\veps_{\phi}(\va_t,\vs,t)
- \sum_{k=1}^m
\lambda_k
\nabla_{\va_t} c_{k,t}(\vs,\va_t) / \sigma_t\bigg) / \sum_{k=1}^{m}\lambda_k 
\bigg\|^2 \\
&= 
\mathbb{E}_{\,t,\veps, (\vs, \va) \sim d^{\pi_{\theta}}}
\bigg\|
\veps_{\theta}(\va_t,\vs,t) + \frac{1}{m}\sum_{k=1}^m
\nabla_{\va_t} c_{k,t}(\vs,\va_t) / \sigma_t
\bigg\|^2 \\
&= \mathcal{J}_{\text{QSM}}(\theta)
\end{aligned}
\label{eq:objective_qsm}
\end{equation}
Consequently, excessively large multipliers correspond to aggressive constraint enforcement and can induce collapse, consistent with our ablations in Sec.~\ref{sec:experiments}. In addition, mismatched numerical scales between predicted scores and cost gradients can further destabilize optimization, exacerbating the sensitivity of the QSM.

\subsection{Detailed Training Efficiency Comparison with On-Policy Baselines}
We compare the training dynamics of \themethod with three on-policy baselines in Fig.~\ref{fig:curve_onpolicy_4tasks} on four tasks: \emph{Pick Dual Bottles}, \emph{Handover Apple}, \emph{Pour Water}, and \emph{Stack Blocks}. Specifically, we track both task success rate and safety rate throughout training. \themethod consistently exhibits superior training efficiency, achieving higher Success and Safe Rates earlier and attaining the best final performance across all four tasks.

\begin{figure}
    \centering
    \includegraphics[width=1\linewidth]{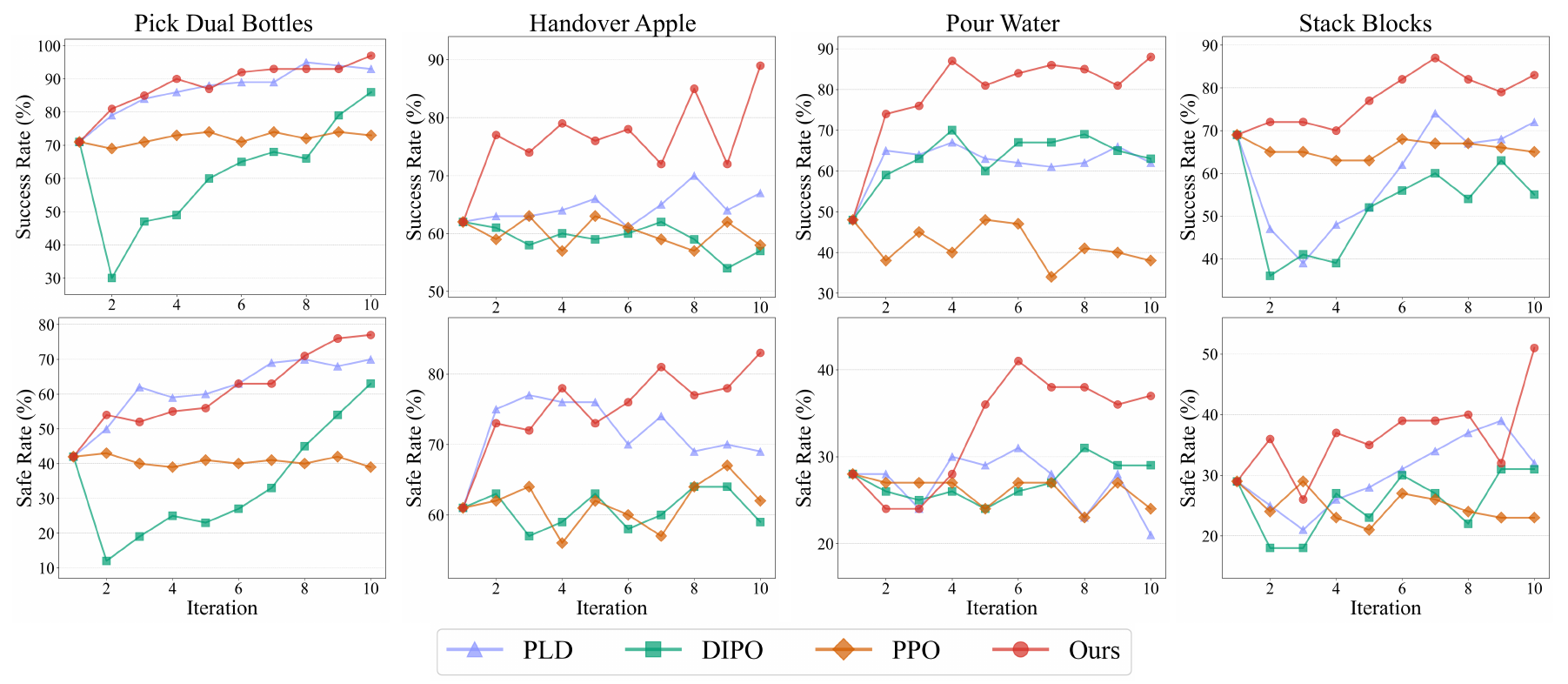}
    \caption{\textbf{Training efficiency comparison with on-policy baselines.} Success Rate (top) and Safe Rate (bottom) over training iterations on four tasks: \emph{Pick Dual Bottles}, \emph{Handover Apple}, \emph{Pour Water}, and \emph{Stack Blocks}. Compared with PLD, DIPO, and PPO, our method reaches higher performance in fewer iterations and yields more stable training curves, achieving the best final Success/Safe Rates across tasks.}
    \label{fig:curve_onpolicy_4tasks}
\end{figure}

\subsection{Inference Time Analysis}
\label{subsec:inference_time}

We additionally provide the inference time analysis of the evaluated policies. One key advantage of our approach is that safety alignment is performed entirely during post-training rather than at deployment time. Consequently, \emph{the aligned policy preserves exactly the same inference procedure as the original base policy}, without requiring any additional test-time optimization, action filtering, or guidance injection during rollout. Therefore, our method introduces essentially no additional inference overhead compared with the corresponding pretrained policy.
Table.~\ref{tab:inference_time} reports the average inference time for generating one action chunk for each evaluated policy. The measurements are conducted on the same hardware platform used in our real-world experiments. 

\begin{table}[htbp]
\centering
\small
\caption{Average inference time (ms) per action chunk.}
\renewcommand{\arraystretch}{1.05}
\begin{tabular}{lcccc}
\toprule
Model & DP & DP3 & RDT & $\pi_{0.5}$ \\
\midrule
Inference Time & 455.53 & 49.84 & 179.43 & 91.83 \\
\bottomrule
\end{tabular}
\label{tab:inference_time}
\end{table}

These results further demonstrate that our method improves safety alignment without sacrificing deployment efficiency, making it suitable for real-world robotic manipulation settings where low-latency inference is important.


\subsection{More Ablation Studies}
\label{subsec:more_ablation}

\paragraph{Curriculum distillation.} 
We initially investigate the effect of curriculum safety constraints during training, which explicitly define a smooth transition between the base and the aligned policy by introducing a guidance schedule. We compare direct constraint distillation (Eq.~(\ref{eq:distill_obj})) against progressively tightening the constraint multipliers using a simple linear schedule $\eta_{i} = {i} / N$ as described in Eq.~(\ref{eq:scheduled_distill_obj}). We find that direct distillation frequently induces abrupt distributional shifts, leading to the collapse of the action support and noticeable performance drops consistent with catastrophic forgetting (\textit{Ours w/o Curr.} in Fig.~\ref{fig:collapsed_curve}). In contrast, curriculum-based enforcement yields smoother optimization dynamics and maintains continuous improvement.

\paragraph{Impact of Lagrange multipliers $\lambda$.} 
The Lagrange multiplier $\lambda$ controls the strength of constraint guidance during alignment. As shown in Fig.~\ref{fig:ablation_lagrange} with grid search, the guidance strength exhibits a clear trade-off between constraint enforcement and policy stability. When the multiplier is excessively large (e.g., $\times5$ or $\times10$), the policy collapses across most tasks, failing to make meaningful progress, which corresponds to direct distillation that enforces constraints and leads to irreversible OOD collapse. In contrast, extremely small multipliers ($\times0.1$) lead to under-enforced constraints; the alignment process becomes inefficient and yields limited safety improvements despite being stable. The optimal performance consistently emerges near the turning point of this trade-off, which enables simple hyperparameter selection. These results validate the necessity of curriculum distillation and highlight the robustness of \themethod within a reasonable and practical multiplier range.

\paragraph{Efficient distillation with few diffusion steps $t_c$.} 
As shown in Fig.~\ref{fig:ablation_denosing_step}, increasing the number of guided denoising steps does not monotonically improve performance. Our experiments demonstrate that injecting constraint guidance for only a few denoising steps at the late stage (e.g., $t_c=0.03$) is sufficient to achieve strong alignment, whereas more aggressive distillation ($t_c\geq0.1$) often degrades both task success and safety performance.

This behavior can be attributed to the noise structure of diffusion models. Early denoising steps correspond to high-noise regimes, particularly when $t_c \rightarrow 1$, where sampled actions are dominated by near-Gaussian randomness. Applying safety costs at this stage yields gradients that are weakly correlated with the final action realization and may significantly distort the original diffusion sampling distribution, thereby impairing stable and meaningful policy optimization. These findings empirically justify our efficient design choice of late-stage, few-step constraint distillation, which aligns well with prior observations in guided diffusion while preserving the integrity of the pretrained policy distribution.

\paragraph{Sensitivity to number of rollouts.} 
Our main simulation comparisons (Sec.~\ref{subsec:simulation}) use a relatively large rollout budget to place all baselines, particularly RL-based methods that are typically interaction-intensive, in a sufficiently informative data regime for fair head-to-head evaluation, rather than to suggest a minimum requirement for \themethod. To assess practical scalability to low-data regime, we additionally rerun post-training under reduced rollout budget. As shown in Fig.~\ref{fig:low_data}, \themethod degrades gracefully as the budget decreases and remains effective in low-data settings across tasks, whereas RL-based baselines exhibit substantially larger performance drops and higher variance. This behavior is consistent with the design of \themethod: constraint distillation leverages differentiable cost gradients to provide dense, direct supervision, which is typically more sample-efficient than sparse, trajectory-level reinforcement signals.

\begin{figure}[t]
    \centering
    \includegraphics[width=0.6\linewidth]{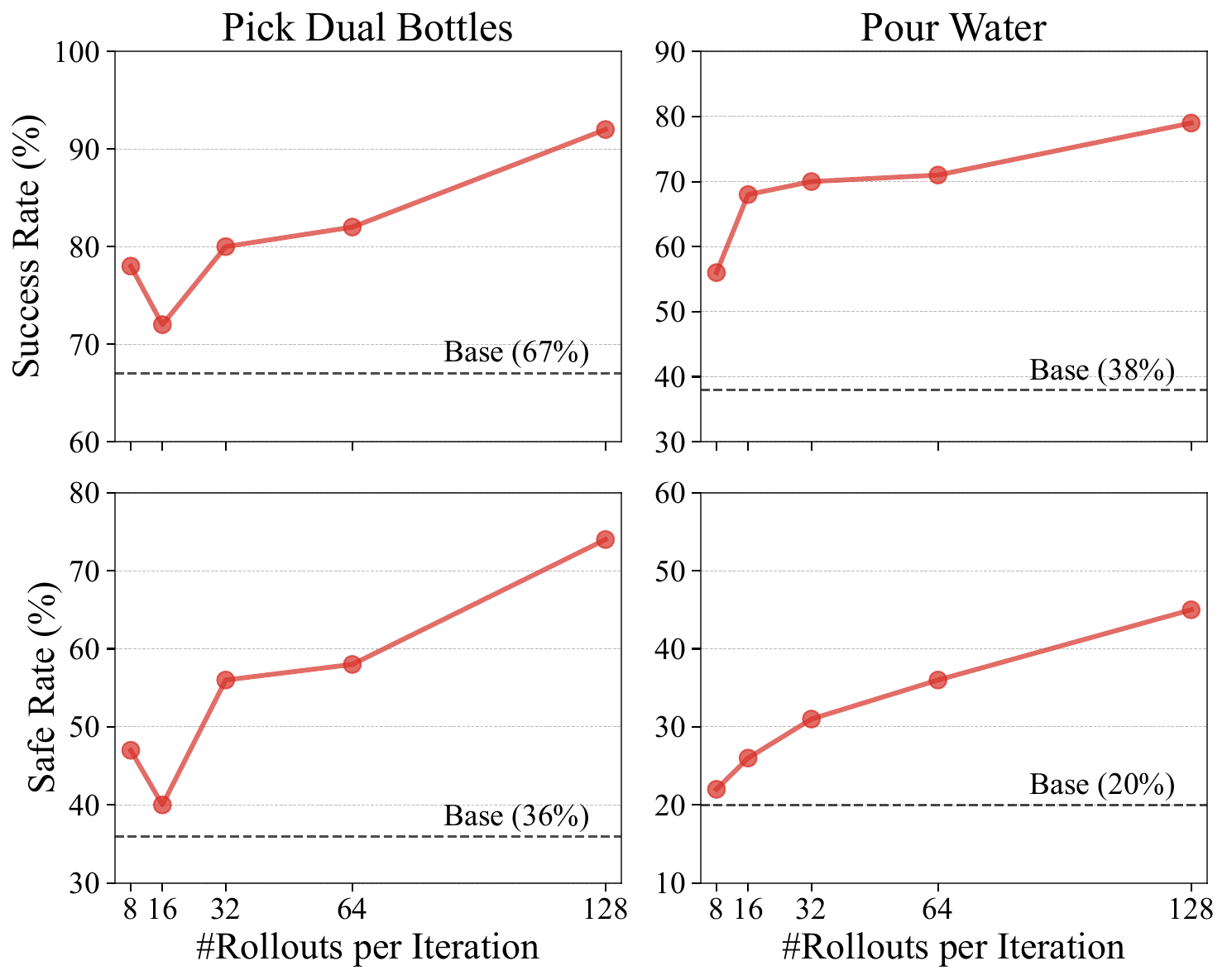}
    \caption{\textbf{Low-data post-training with varying rollout budgets.} Success rates (top) and safe rates (bottom) under different rollout collection budgets, where all settings use 10 post-training iterations. The horizontal dashed lines indicate the corresponding pretrained base policy performance.}
    \label{fig:low_data}
\end{figure}


\end{document}